\newcommand*{\Sref}[1]{\hyperref[#1]{\S\ref*{#1}}}
\newcommand*{\egref}[1]{\hyperref[#1]{Example~\ref*{#1}}}
\newcommand*{\lemref}[1]{\hyperref[#1]{Lemma~\ref*{#1}}}
\newcommand*{\thmref}[1]{\hyperref[#1]{Theorem~\ref*{#1}}}
\newcommand*{\appendixref}[1]{\hyperref[#1]{Appendix~\ref*{#1}}}
\newcommand*{\figref}[1]{\hyperref[#1]{Figure~\ref*{#1}}}
\newcommand*{\tabref}[1]{\hyperref[#1]{Table~\ref*{#1}}}
\newcommand{\eg}{\emph{e.g.,} }
\newcommand{\probgt}[2]{\Pr\left[#1 > #2\right]}
\newcommand{\probeq}[2]{\Pr\left[#1 = #2\right]}
\newcommand{\probneq}[2]{\Pr\left[#1 \neq #2\right]}
\newcommand{\probge}[2]{\Pr\left[#1 \geq #2\right]}
\newcommand{\Ex}[1]{\mathbb{E}[#1]}
\newcommand{\Excond}[2]{\mathbb{E}\left[#1\ |\ #2\right]}
\newcommand{\V}[1]{\mathbb{V}\left[#1\right]}
\newcommand{\eqdef}{\mathrel{:=}}
\newcommand{\dir}{\tt Dirichlet}
\newcommand{\cat}{\tt Categorical}
\newcommand{\sketchkern}{T}
\definecolor{StringRed}{rgb}{.637,0.082,0.082}
\definecolor{CommentGreen}{rgb}{0.0,0.55,0.3}
\definecolor{KeywordBlue}{rgb}{0.0,0.3,0.55}
\definecolor{LinkColor}{rgb}{0.55,0.0,0.3}
\definecolor{CiteColor}{rgb}{0.55,0.0,0.3}
\definecolor{HighlightColor}{rgb}{0.0,0.0,0.0}
\newtheorem{theorem}{Theorem}[section]
\newtheorem{lemma}[theorem]{Lemma}
\newcommand{\joe}[1]{\textcolor{red}{JDT: #1}}
\newcommand{\MincounterAdd}[3]{M'_{#1, #2}(#3)}
\newcommand*\Assign[2]{\State #1 $\gets$ #2}
\newcommand*\WriteAs[2]{\State {\bf write} #1 {\bf as} #2}
\newcommand*\Increment[1]{\State {\bf increment} #1}
\newcommand*\Lock[1]{\State {\bf lock} #1}
\newcommand*\Unlock[1]{\State {\bf unlock} #1}
\newcommand*\IfThenAssign[3]{\State {\bf if} #1 {\bf then} #2 $\gets$ #3}
\newcommand*\ForInRange[2]{\ForAll{$0 \leq \hbox{#1} < \hbox{#2}$}}
\newcommand*\LocalArray[1]{\State {\bf local array} #1}
\newcommand*\ClearArray[1]{\State {\bf clear array} #1 \Comment{Set every element to 0}}
\newcommand*\InitializeArray[1]{\State {\bf initialize array} #1 \Comment{Randomly chosen distributions}}
\newcommand*\Bind[2]{\State {\bf let} #1 $\gets$ #2}
\newcommand*\Remark[1]{\State \(\triangleright\) #1}
\newbox\myalgbox
\newenvironment{myalgorithmic}{\begin{flushleft}\begin{algorithmic}[1]}{\end{algorithmic}\end{flushleft}}
\algrenewcommand\algorithmicindent{1em}
\newcommand{\tpd}{\mathit{tpd}}
\newcommand{\wpt}{\mathit{wpt}}
\newcommand{\wt}{\mathit{wt}}
\newcommand\cut[1]{}
\newcommand{\stateexact}{S_{\textsf{e}}}
\newcommand{\stationary}{\mu}
\newcommand{\smat}{c}
\newcommand{\rsmat}{C}
\newcommand{\rsrest}{Z}
\newcommand{\srest}{z}
\newcommand{\resttype}{Y}
\newcommand{\kernel}{K}
\newcommand{\kernelpre}{\kernel_{\textsf{pre}}}
\newcommand{\kernelpost}{\kernel_{\textsf{post}}}
\newcommand{\kcomp}{\cdot}
\newcommand{\weaklim}{\Rightarrow}
\newcommand{\plim}{\xrightarrow[]{\textsf{p}}}
\newcommand{\norm}[1]{\lVert #1 \rVert}
\declaretheoremstyle[%
  spaceabove=-5pt,%
  spacebelow=6pt,%
  headfont=\normalfont\itshape,%
  postheadspace=1em,%
  qed=\qedsymbol%
]{mystyle} 
\declaretheorem[name={Proof},style=mystyle,unnumbered,
]{proof}
\begin{document}

\twocolumn[

\aistatstitle{Sketching for Latent Dirichlet-Categorical Models}

\ifdefined\isanon
  \aistatsauthor{Anonymous Authors}
  \aistatsaddress{ }
\else
  \aistatsauthor{
    Joseph Tassarotti
    \And
    Jean-Baptiste Tristan 
    \And
    Michael Wick
  }
  \aistatsaddress{ Carnegie Mellon University \And Oracle Labs \And Oracle Labs }
\fi
]

\begin{abstract}

Recent work has explored transforming data sets into smaller, approximate
summaries in order to scale Bayesian inference.  We examine a related problem in
which the parameters of a Bayesian model are very large and expensive to store
in memory, and propose more compact representations of parameter values that can
be used during inference.  We focus on a class of graphical models that we refer
to as latent Dirichlet-Categorical models, and show how a combination of two
sketching algorithms known as count-min sketch and approximate counters provide
an efficient representation for them. We show that this sketch combination --
which, despite having been used before in NLP applications, has not been
previously analyzed -- enjoys desirable properties. We prove that for this class
of models, when the sketches are used during Markov Chain Monte Carlo inference,
the equilibrium of sketched MCMC converges to that of the exact chain as sketch
parameters are tuned to reduce the error rate.

\end{abstract}

\section{Introduction}
The development of \emph{scalable Bayesian inference}
techniques~\citep{AngelinoJA16} has been the subject of much recent work. A
number of these techniques introduce some degree of approximation into
inference.

This approximation may arise by altering the inference algorithm. For example,
in ``noisy'' Metropolis Hastings algorithms, acceptance ratios are
perturbed because the likelihood function is either simplified or evaluated on
a random subset of data in each iteration~\citep{Rosenthal2017, Alquier, Pillai,
BardenetDH14}. Similarly, asynchronous Gibbs sampling~\citep{SaRO16} violates some
strict sequential dependencies in normal Gibbs sampling in order to avoid
synchronization costs in the distributed or concurrent setting.

Other approaches transform the original large
data set into a smaller representation, on which traditional inference
algorithms can then be efficiently run. \citet{HugginsCB16} compute
a weighted subset of the original data, called
a \emph{coreset}. \citet{GeppertIMQS17} consider Bayesian regression with $n$
data points each of dimension $d$, and apply a random projection to
shrink the original $\mathbb{R}^{n \times d}$ data set down to
$\mathbb{R}^{k \times d}$ for $k < n$. An advantage of these kinds of
transformations is that by shrinking the size of the data, it becomes more
feasible to fit the transformed data set entirely in memory.

The transformations described in the previous paragraph reduce the number of
data points under consideration, but preserve the \emph{dimension} of each data
point, and thus the number of parameters in the model.  However, in many
Bayesian mixed membership models, the number of parameters themselves can also
become extremely large when working with large data sets, and storing these parameters poses
a barrier to scalability.

In this paper, we consider an approximation to address this issue for
what we call latent Dirichlet-Categorical models, in which there are many latent categorical variables whose distributions are sampled from Dirichlets.
This is a fairly general pattern that
can be found as a basic building block of many Bayesian models used in NLP (\eg clustering of discrete data, topic models like LDA, hidden Markov models). 
The most representative example, which we will use throughout this paper, is the following:
\begin{align}
z_i & \sim \cat(\tau) && i \in [N]\\
\theta_i & \sim \dir(\alpha) && i \in [K]\\
x_i & \sim \cat(\theta_{z_i}) && i \in [N]
\end{align}
Here, $\tau$ is some fixed hyper-parameter of dimension $K$ and $\alpha$ is a scalar value.  
We assume that the dimension of the Dirichlet distribution is $V$, a value we refer to as the ``vocabulary size''. Each random variable $x_i$ can take one of $V$ different values,
which we refer to as ``data types'' (e.g., words in latent Dirichlet allocation).

To do Gibbs sampling for a model in which such a pattern occurs, we generally
need to compute a certain matrix $c$ of dimension $K \times V$. Each row of this
matrix tracks the frequency of occurrence of some data type within one of the
components of the model. In general, this matrix can be quite large, and in some
cases we may not even know the exact value of $V$ a priori (\eg consider the
streaming setting where we may encounter new words during inference), making it
costly to store these counts. Moreover, if we do distributed inference by
dividing the data into subsets, each compute node may need to store this entire
large matrix, which reduces the amount of data each node can store in memory and
adds communication overhead. Although $c$ is often sparse, using a sparse or
dynamic representation instead of a fixed array makes updates and queries
slower, and adds further overhead when merging distributed representations.

We propose to address these problems by using \emph{sketch} algorithms to store
compressed representations of these matrices.  These algorithms give approximate
answers to certain queries about data streams while using far less space than
algorithms that give exact answers. For example, the \emph{count-min sketch}
(CM)~\citep{CMsketch} can be used to estimate the frequency of items in a data
set without having to use space proportional to the number of distinct items,
and \emph{approximate counters} \citep{Morris,Flajolet} can store very large
counts with sublogarithmic number of bits.  These algorithms have parameters
that can be tuned to trade between estimation error and space usage. 
Because many natural language processing tasks involve computing
estimates of say, the frequency of a word in a corpus, there has been
obvious prior interest in using these sketching algorithms for (non-Bayesian) NLP when
dealing with very large data sets~\citep{TOMB,GoyalD11,DurmeL09}.

We propose representing the matrix $c$ above using a combination of count-min sketch and approximate counters. It is not clear a priori what effect this would
have on the MCMC algorithm. On the one hand, it is plausible that if
the sketch parameters are set so that estimation error is small
enough, MCMC will still converge to some equilibrium distribution that
is close to the equilibrium distribution of the exact non-sketched
version. On the other hand, we might be concerned that even small
estimation errors within each iteration of the sampler would compound,
causing the equilibrium distribution to be very far from that of the
non-sketched algorithm.

In this paper, we resolve these issues both theoretically and
empirically. We prove results showing that under fairly general
conditions, as the parameters of sketches are tuned to decrease the
error rate, the equilibrium distributions of sketched chains converge
to that of the exact chain.  Then, we show that when the combined
sketch is used with a highly scalable MCMC algorithm for LDA, we
can obtain model quality comparable to that of
the non-sketched version while using much less space.

\paragraph{Contribution} 

\begin{enumerate}
\item We explain how the count-min sketch algorithm and approximate counters can be used
to sketch the sufficient statistics of models that contain latent Dirichlet-Categorical subgraphs (section \Sref{Solution}). We then provide an analysis of a combined count-min sketch/approximate counter data structure which provides the benefits of both (section \Sref{Analysis}).
\item We then prove that when the combined sketch is used in an MCMC algorithm, as the parameters of the sketch are tuned to reduce error rates, the equilibrium distributions of sketched chains converge to that of the non-sketched version (section \Sref{Convergence}).
\item We complement these theoretical results with experimental evidence confirming that learning works despite approximations introduced by the sketches (section \Sref{Evaluation}).
\end{enumerate}

\section{Sketching for Latent Dirichlet-Categorical Models}\label{Solution}

As described in the introduction, MCMC algorithms for models involving
Dirichlet-Categorical distributions usually require tabulating
statistics about the current assignments of items to categories (\eg
the words per topic in LDA).
There are two reasons why maintaining this matrix of counts can
be expensive. First, the dimensions of the matrix can be large
-- the dimensions are often proportional to the number of unique words
in the corpus. Second, the values in the matrix can also be
large, so that tracking them using small sized integers can
potentially lead to overflow.

Sketching algorithms can be used to address these problems, providing
compact fixed-size representations of these counts that use far less
memory than a dense array. We start by explaining two widely used
sketches, and then in the next section discuss how they can be
combined.

\subsection{Sketch 1: count-min sketch}

To deal with the fact that the matrix of counts is of large dimension, we can use
count-min~(CM)~sketches~\citep{CMsketch} instead of dense arrays. A CM sketch
$\mathcal{C}$ of dimension $l \times w$ is represented as an $l \times
w$ matrix of integers, initialized at 0, and supports two operations:
\texttt{update} and \texttt{query}.
The CM sketch makes
use of $l$ different 2-universal hash functions of range $w$ that we
denote by $h_1, \dots, h_l$.  The \texttt{update}$(x)$ operation
adjusts the CM sketch to reflect an increment to the frequency of some
value $x$, and is done by incrementing the matrix at locations
$\mathcal{C}_{i,h_i(x)}$ for $i \in [1,l]$.  The \texttt{query}($x$)
operation\footnote{
 Other query rules can be used,
 such as the count-mean-min~\citep{CountMeanMin} rule.
 However, \citet{GoyalDC12} suggest that conventional
 CM sketch has better average error for queries of mid to high frequency keys in
 NLP tasks. Therefore, we will focus on the
 standard CM estimator.}
returns an estimate of the frequency of value $x$ and is
computed by $\min_i \mathcal{C}_{i,h_i(x)}$.

It is useful to think of a value $C_{a,b}$ in the matrix as a random
variable. In general, when we study an arbitrary value, say $x$, we
need not worry about where it is located in row $i$ and refer to
$\mathcal{C}_{i,h_i(x)}$ simply as $Z_i$, and write
$Q(x) \eqdef \min_i(Z_i)$ for the result when querying $x$. Note that
$Z_i$ equals the true number of occurrences of $x$, written $f_x$,
plus the counts of other keys whose hashes are identical to that of
$x$. CM sketches have several interesting properties, some of
which we summarize here (see \cite{LectureCountMin} for a good
expository account). Let $N$ be the total number of increments to the
CM sketch. Then, each $Z_i$ is a biased estimator, in that:
\begin{align}
\Ex{Z_i} = f_x + \frac{N - f_x}{w}
\end{align}
However, by adjusting the parameters $l$ and $w$, we can bound the probability of large overestimation. In particular, by taking $w = \frac{k}{\epsilon}$ one can bound the offset of a query as
\begin{align}
\probge{Q(x)}{f_x + \epsilon N} \leq \frac{1}{k^l}
\end{align}
A nice property of CM sketches is that they can be used in parallel:
we can split a data stream up, derive a sketch for each piece, and
then merge the sketches together simply by adding the entries in the
different sketches together componentwise.

We want to replace the \emph{matrix} of counts $c$ in a Dirichlet-Categorical model with sketches. There is some
flexibility in how this is done. The simplest thing is to replace the entire
matrix with a single sketch (so that the keys are the indices into the matrix).
Alternatively, we can divide the matrix into sub-matrices, and use a
sketch for each sub-matrix. In the setting of Dirichlet-Categorical models,
each row of $c$ corresponds to the counts for data types within
one component of the model (\eg counts of words for a given topic in LDA),
so it is natural to use a sketch per row.

\cut{
A first thought that should come to mind is that maybe we should not use a CM sketch in the first place but rather a variant such as count-mean where
instead of returning the value of the smaller counter, we return their average value. But unfortunately, such an approach does not work because the 
count-mean estimator might return negative values. We could of course solve such a problem by returning the maximum between the estimate and 0, but using 
such an operations leads to as much complexity as working with a CM sketch. Last but not least, we ran a significant number of experiments with
a count-mean sketch that showed that such an approach does not work in practice.}

\subsection{Sketch 2: approximate counting}
\label{sec:approximate-counters}
In order to represent large counts without the memory costs of using a
large number of bytes, we can employ approximate counters~\citep{Morris}.  An approximate
counter $X$ of base $b$ is represented by an integer (potentially only
a few bits) initialized at 0, and supports two operations: increment
and read.  We write $X_n$ to denote a counter that has been
incremented $n$ times. The increment operation is randomized and
defined as:
\begin{align}
&\Pr(X_{n+1} = k+1 \ |\ X_n = k) = b^{-k} \\
&\Pr(X_{n+1} = k\ |\ X_n = k) = 1 - b^{-k}
  \label{eqn:counter-transition}
\end{align}
Reading a counter $X$ is written as $\phi(X)$ and defined as $\phi(X) = (b^{X} - 1)/(b -1)$.
Approximate counters are unbiased, and their variance can be controlled by adjusting $b$: %
\begin{align}
\Ex{\phi(X_n)} = n \qquad 
\V{\phi(X_n)} = \frac{b-1}{2} (n^2 - n)
\end{align}
Using approximate counters as part of inference for Dirichlet-Categorical models is very simple: instead of representing the matrix $c$ as an array of integers, we instead use an array of approximate counters.

\cut{
\joe{I think we should cut this para. I think we originally wanted this to tout how SSCA works well with this stuff, but now it does not seem necessary.}
Using such counters places a key restriction on what the inference algorithm can be. For example, we would not be able to use
such counters if we were to collapse (integrate) the parameters of our model, as this would give us a Gibbs sampler where we also need to decrement
the sufficient statistics counters, an operation that is not supported by approximate counters.
}

\section{Combined Sketching: Alternatives and Analysis}\label{Analysis}

\newcommand{\param}{\psi}

The problems addressed by the sketches described in the previous
section are complementary: CM sketches replace a large matrix with a
much smaller set of arrays; but by coalescing increments for distinct
items, CM sketches need to potentially store larger counts to avoid
overflows, a problem which is resolved with approximate counting. Therefore,
it is natural to consider how to combine the two sketching algorithms together.

\subsection{Combination 1: Independent Counters}
\label{sec:comb1}

The simplest way to combine the CM sketch with approximate counters is
to replace each exact counter in the CM sketch with an approximate
counter; then when incrementing a key in the sketch, we independently
increment each of the counters it corresponds to. Moreover, because
there are ways to efficiently add together two approximate
counters~\citep{Adding}, we can similarly merge together multiple
copies of these sketches by once again adding their entries together componentwise.

When we combine the CM sketch and the approximate counters together in
this way, the errors introduced by these two kinds of algorithms interact. It is
challenging to give a precise analysis of the error rate of the
combined structure. However, it is still the case that we can tweak
the parameters of the sketch to make the error rate arbitrarily low.

To make this precise, note that we now have three parameters to tune:
$b$, the base of the approximate counters, $l$ the number of hashes,
and $w$, the range of the hashes.  Given a parameter triple $\param =
(b, l, w)$, write $Q_\param(x)$ for the estimate of key $x$ from a
sketch using these parameters. Then, given a sequence $\param_n =
(b_n, l_n, w_n)$ of parameters, we can ask what happens to the
sequence of estimates $Q_{\param_{n}}(x)$ when we use the sketches on the
same fixed data set:

 \begin{theorem}
 \label{thm:convprob}
 Let $\param_{n} = (b_n, l_n, w_n)$. Suppose $b_n \rightarrow 1$, $w_n \to \infty$ and there exists some $L$ such that $1 \leq l_n \leq L$ for all $n$. Then for all $x$, $Q_{\param_n}(x)$ converges in probability to $f_x$ as $n \to \infty$. 
 \end{theorem}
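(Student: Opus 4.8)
The plan is to decompose the estimation error of the combined sketch into its two constituent sources --- the hash collisions of the count-min layer and the noise of the approximate counters --- and to control each using a separate hypothesis: $w_n \to \infty$ kills the collisions, $b_n \to 1$ kills the counter noise, and $1 \le l_n \le L$ keeps a union bound over the $l_n$ rows harmless. Throughout, fix a key $x$ with true count $f_x$, and let $N$ be the total number of increments.

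First I would set up notation. Let $Z_i$ be the number of times the approximate counter at cell $(i, h_i(x))$ is incremented when the data set is processed; exactly as in the plain count-min case, $Z_i$ equals $f_x$ plus the total count of keys colliding with $x$ in row $i$, so deterministically $f_x \le Z_i \le N$ and $\Ex{Z_i} = f_x + (N - f_x)/w_n$. Let $\phi_i$ be the value read from that counter, so $Q_{\param_n}(x) = \min_{i} \phi_i$. Conditioning on the hash functions (equivalently, on the $Z_i$), the moment identities for approximate counters from \Sref{sec:approximate-counters} give $\Excond{\phi_i}{Z_i} = Z_i$ and $\Vcond{\phi_i}{Z_i} = \tfrac{b_n - 1}{2}(Z_i^2 - Z_i) \le \tfrac{b_n - 1}{2}N^2$, a variance bound that is uniform over all hashing outcomes. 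It then remains to bound the lower and upper tails of $Q_{\param_n}(x) - f_x$ separately.

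For the lower tail, note that since $Z_i \ge f_x$ surely, the event $\{\phi_i < f_x - \epsilon\}$ is contained in $\{|\phi_i - Z_i| > \epsilon\}$, so Chebyshev's inequality (applied conditionally on $Z_i$, then averaged) gives $\Pr[\phi_i < f_x - \epsilon] \le (b_n - 1)N^2/(2\epsilon^2)$, and a union bound over the at most $L$ rows yields $\problt{Q_{\param_n}(x)}{f_x - \epsilon} \le L(b_n - 1)N^2/(2\epsilon^2) \to 0$. For the upper tail it suffices to look at a single row, say $i = 1$: by Markov's inequality $\probge{Z_1 - f_x}{\epsilon/2} \le 2(N - f_x)/(w_n \epsilon) \to 0$, and on the complementary event Chebyshev gives $\Pr[\phi_1 > f_x + \epsilon \mid Z_1] \le 2(b_n-1)N^2/\epsilon^2$, so $\probgt{Q_{\param_n}(x)}{f_x + \epsilon} \le \probgt{\phi_1}{f_x + \epsilon} \le 2(N - f_x)/(w_n\epsilon) + 2(b_n - 1)N^2/\epsilon^2 \to 0$. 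Combining the two tail estimates gives $\Pr[|Q_{\param_n}(x) - f_x| > \epsilon] \to 0$ for every $\epsilon > 0$, which is the claim.

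The main obstacle is precisely the entanglement of the two noise sources: the conditional variance of $\phi_i$ scales with the random number of increments $Z_i$, which itself depends on the collision structure, so the errors do not literally decouple. The trick that makes the argument go through is to exploit the deterministic a priori bounds $f_x \le Z_i \le N$ --- the upper bound renders the conditional variance uniformly $O((b_n - 1)N^2)$ irrespective of the hashing, and the lower bound makes underestimation controllable from the counter noise alone, with no appeal to $w_n \to \infty$, which is essential because the $\min$ in the estimator forces underestimation to require every one of the $l_n$ rows to deviate downward, whereas overestimation is already ruled out by any single well-behaved row. The remaining bookkeeping (distinct cells use independent internal randomness, and only union bounds are used, so no independence across rows is ever needed) is routine.
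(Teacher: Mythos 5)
Your proof is correct and follows essentially the same route as the paper's: bound the collision excess $Z_i - f_x$ with Markov's inequality (using $w_n \to \infty$), bound the counter's deviation from its increment count with a conditional Chebyshev bound (using $b_n \to 1$), and union-bound over the at most $L$ rows. The only cosmetic difference is that the paper exploits integrality of $Z_i$ to condition on the exact no-collision event $Z_i = f_x$ (so its Chebyshev variance is $\tfrac{b_n-1}{2}(f_x^2 - f_x)$), whereas you use the uniform bound $Z_i \le N$ and treat the lower and upper tails separately; both work.
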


See \appendixref{app-consistency} in the supplementary material for the full proof.
This result shows that for appropriate sequences $\param_n$ of parameters, the 
estimator $Q_{\param_{n}}(x)$ is consistent. We call a sequence $\param_n$ satisfying the conditions of \thmref{thm:convprob} a \emph{consistent sequence} of parameters.

For our application, we are replacing a matrix of counts with a collection of
sketches for each row, so we want to know not just about the behavior of the
estimate of a single key in one of these sketches, but about the estimates for
all keys across all sketches.
Formally, let $\smat$ be a $K \times V$ dimensional matrix of counts.
Consider a collection of $K$ sketches, each with parameters $\param$, where for each key $v$, we insert $v$ with frequency $\smat_{k, v}$ into the $k$th sketch.
then we write $Q_\param(\smat)$ for the random $K \times V$ matrix giving the
estimates of all the keys in each sketch. Because convergence in probability of a
random vector follows from convergence of each of the components, the
above implies:

 \begin{theorem}
 \label{thm:convprob-vector}
 If $\param_{n}$ is a consistent sequence, then $Q_{\param_n}(\smat)$ converges in probability to $\smat$.
 \end{theorem}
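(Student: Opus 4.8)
The plan is to reduce this vector statement to the scalar statement of \thmref{thm:convprob} together with the standard fact that, in finite dimensions, convergence in probability is equivalent to coordinatewise convergence in probability.

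First I would fix a single entry $(k,v)$ of the random matrix $Q_{\param_n}(\smat)$. By construction, this entry is exactly the estimate $Q_{\param_n}(v)$ produced by the $k$th sketch, which was populated by inserting, for every data type $v' \in [V]$, the key $v'$ with frequency $\smat_{k,v'}$. This is a fixed data set: it does not depend on $n$, only the sketch parameters $\param_n = (b_n, l_n, w_n)$ do. The true frequency of $v$ in this data set is $\smat_{k,v}$, so \thmref{thm:convprob} applies verbatim with $f_v = \smat_{k,v}$, and hence the $(k,v)$ entry of $Q_{\param_n}(\smat)$ converges in probability to $\smat_{k,v}$ as $n \to \infty$. (Here we use that $\param_n$ is a consistent sequence, so $b_n \to 1$, $w_n \to \infty$, and the $l_n$ are bounded.)

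Next I would reassemble the coordinates. Since $K$ and $V$ are finite, $Q_{\param_n}(\smat)$ is a random element of the finite-dimensional space $\mathbb{R}^{K \times V}$, and a sequence of random vectors in $\mathbb{R}^d$ converges in probability to a limit iff each coordinate does; concretely, for any $\delta > 0$ a union bound gives $\Pr[\norm{Q_{\param_n}(\smat) - \smat} > \delta] \leq \sum_{k,v} \Pr[|Q_{\param_n}(\smat)_{k,v} - \smat_{k,v}| > \delta/\sqrt{KV}]$, and each summand tends to $0$ by the previous step, so the sum does too. This establishes the claim.

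I do not expect a genuine obstacle here: all the real content lives in \thmref{thm:convprob}, and the present result is a routine lifting from scalars to a finite-dimensional vector. The only point worth an explicit sentence is the observation that each of the $K$ sketches is fed a data set that is fixed independently of $n$, so that the hypotheses of \thmref{thm:convprob} are met entry by entry; notably, the argument requires neither independence across the $K$ sketches nor any joint control of the coordinates beyond the union bound above.
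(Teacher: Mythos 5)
Your proposal is correct and matches the paper's own (very brief) justification: the paper simply notes that convergence in probability of a random vector follows from convergence of each component, with each entry handled by \thmref{thm:convprob}. Your version spells out the union-bound details, but the route is the same.
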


Finally, we have been describing the situation where the keys are inserted with
some deterministic frequency and the only source of randomness is in the hashing
of the keys and the increments of the approximate counter. However, it is natural to consider the case where the frequency of
the keys is randomized as well. To do so, we define the Markov kernel\footnote{Throughout, we assume that all topological spaces are endowed with their Borel $\sigma$-algebras, and omit writing these $\sigma$-algebras.}
$\sketchkern_{\param}$ from $\mathbb{N}^{K \times V}$ to $\mathbb{R}_{\geq 0}^{K \times V}$,
where for each $\smat$, $\sketchkern_{\param}(\smat, \cdot)$ is the
distribution of the random variable $Q_{\param}(\smat)$ considered above. Then if $\mu$
is a distribution on count matrices, $\mu \sketchkern_{\param}$ gives the distribution
of query estimates returned for the sketched matrix.

\subsection{Combination 2: Correlated Counters}

Even though the results above show that the approximation error of the
combined sketch can still be made arbitrarily small, it does not
provide a non-asymptotic bound on the error. Indeed, one issue with
this combination is that the traditional estimation rule for the
sketch relies on the fact that for a key $x$, each of the cells
$Z_1, \dots, Z_l$ corresponding to $x$ is at least as large as $f_x$,
the true frequency of $x$. Therefore, the minimum is the closest
estimate of the count.
But when we instead use approximate counters, it is possible for
each counter's estimate to be smaller than $f_x$, so taking the minimum may cause underestimation.

This underestimation rules out using the so-called \emph{conservative update}
rule~\citep{EstanV02}, a technique which can be used to reduce bias of normal CM
sketches. When using conservative update with a regular CM sketch, to increment
a key $x$, instead of incrementing each of the counters corresponding to $x$, we
first find the minimum value and then only increment counters equal to this
minimum. But because approximate counters can underestimate, this is no longer
justifiable in the combined sketch.

\citet{PitelF15} proposed an alternative way to combine CM sketches
with approximate counters that enables conservative updates.
We call their combination \emph{correlated counters}. \figref{fig:correlated} shows
the increment routine with and without conservative update for
correlated counters. The idea in each is that we generate a single uniform $[0, 1]$
random variable $r$ and use this common $r$ to decide how to
transition each counter value according to the probabilities described in
\Sref{sec:approximate-counters}.

\begin{figure}

\begin{myalgorithmic}
\Procedure{{incr-correlated}}{$C$, $x$}
  \Bind{$r$}{Uniform(0, 1)}
  \For{$i$ from 0 to $l$}
     \Bind{$v$}{$C$[$i$][$h_i(x)$]}
     \IfThenAssign{$r < \frac{1}{b^v}$}{$C$[$i$][$h_i(x)$]}{$v + 1$}
  \EndFor \\
\EndProcedure
\Procedure{{incr-conservative}}{$C$, $x$}
  \Bind{$r$}{Uniform(0, 1)}
  \Bind{$min$}{$\infty$}
  \For{$i$ from 0 to $l$}
     \Bind{$v$}{$C$[$i$][$h_i(x)$]}
     \IfThenAssign{$v < min$}{$min$}{$v$}
  \EndFor
  \\
  \If{$r < \frac{1}{b^{min}}$}
     \For{$i$ from 0 to $l$}

     \If{$C[i][h_i(x)] = min$}
     \Assign{$C[i][h_i(x)]$}{$min + 1$}
     \EndIf
     \EndFor
  \EndIf \\
\EndProcedure
\end{myalgorithmic}
\caption{Increment for CM sketch with correlated approximate counters, with and without conservative update.}
\label{fig:correlated}

\end{figure}

However, \citet{PitelF15} did not give a proof of any statistical
properties of their combination. The following result shows that this
variant avoids the underapproximation bias of the independent counter
version:

\begin{theorem} Let $Q(x)$ be the query result for key $x$ using correlated counters
in a CM sketch with one of the increment procedures from \figref{fig:correlated}. Then,
\[f_x \leq \Ex{Q(x)} \leq f_x + \frac{N - f_x}{w}\]
\label{thm:correlated-bound}
\end{theorem}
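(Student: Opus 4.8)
The plan is to prove the two inequalities separately, each by a monotone coupling argument, using throughout that the read function $\phi(v) = (b^v-1)/(b-1)$ is strictly increasing (for $b>1$), so that the query result is $Q(x) = \min_i \phi(Z_i) = \phi(\min_i Z_i)$, and that it satisfies the convenient identity $\phi(v+1)-\phi(v) = b^v$.

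For the upper bound, I would fix an arbitrary row $i$ and note that, since $Q(x) \leq \phi(Z_i)$, it suffices to bound $\Ex{\phi(Z_i)}$. Conditioning on the hash functions, the cell $Z_i = C[i][h_i(x)]$ is touched only by increments of $x$ and of keys $y \neq x$ with $h_i(y) = h_i(x)$; in both procedures, given the current cell value $v$, the probability that such an operation advances the cell is at most $b^{-v}$ (an equality for \textsc{incr-correlated}; for \textsc{incr-conservative} the cell advances only when it equals the relevant row-minimum, so the probability is no larger). Combined with $\phi(v+1)-\phi(v) = b^v$, each touching operation increases $\Ex{\phi(Z_i)}$ by at most $1$; telescoping over the at most $f_x + \sum_{y \neq x:\, h_i(y)=h_i(x)} f_y$ such operations and then taking expectation over the $2$-universal hash $h_i$ (so that $\Pr[h_i(y)=h_i(x)] \leq 1/w$) yields $\Ex{\phi(Z_i)} \leq f_x + (N-f_x)/w$.

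For the lower bound I would introduce an auxiliary single correlated counter $W$ driven \emph{only} by the increments of $x$: $W$ starts at $0$, and on the $t$-th increment of $x$ it uses the very same uniform variable $r_t$ drawn inside the sketch's increment routine, advancing iff $r_t < b^{-W}$. By construction $W$ after all $f_x$ increments of $x$ is an ordinary approximate counter, so $\Ex{\phi(W)} = f_x$. Coupling $W$ to the sketch through these shared $r_t$ (and the shared hash functions), I would show by induction over the sequence of sketch operations that $Z_i \geq W$ for every row $i$ at all times: operations other than increments of $x$ (and increments of $x$ at rows where cell $i$ is untouched, in the conservative case) can only raise $Z_i$ while leaving $W$ fixed; and for an increment of $x$, if $Z_i$ advances then $r_t < b^{-Z_i} \leq b^{-W}$ forces $W$ to advance as well, whereas if $W$ advances while $Z_i$ does not, a short case check using $Z_i \geq W$ forces $Z_i \geq W+1$ already. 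Then $\min_i Z_i \geq W$ at the end, so $\Ex{Q(x)} = \Ex{\phi(\min_i Z_i)} \geq \Ex{\phi(W)} = f_x$.

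The hard part will be the inductive monotonicity step for the conservative-update procedure: there $Z_i$ advances only when it is simultaneously the row-minimum among the cells of the key being incremented, and one must verify that this extra gating never lets $W$ overtake some $Z_i$. The delicate case is exactly $Z_i = W$, where one has to argue that if $Z_i$ equals the governing row-minimum then the shared uniform makes $Z_i$ advance in lockstep with $W$. The remaining ingredients — the $\phi$-increment identity, the $2$-universality bound, and the telescoping of conditional expectations — are routine.
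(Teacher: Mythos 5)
Your proof is correct and follows essentially the same route as the paper's: the lower bound is obtained by coupling the cells for $x$ to an auxiliary base-$b$ counter that receives exactly the $f_x$ increments of $x$ through the shared uniform draw, maintaining inductively that this counter never exceeds any cell (equivalently, never exceeds the minimum). The only differences are that you spell out the upper bound (which the paper dismisses as straightforward) via the telescoping identity $\phi(v+1)-\phi(v)=b^{v}$ and the $2$-universality bound, and you treat the conservative-update case explicitly (which the paper calls similar); both of your sketched arguments, including the delicate $Z_i = W = \min$ lockstep case, are sound.
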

\begin{proof} We discuss just the non-conservative update increment procedure, since the proof
is similar for the other case. The upper bound is straightforward.
The lower bound is proved by
exhibiting a coupling~\citep{lindvall2002} between the sketch counters
corresponding to key $x$ and a counter $C$ of base $b$ that will be
incremented exactly $f_x$ times. The coupling is constructed by
induction on $N$, the total number of increments to the
sketch. Throughout, we maintain the invariant that $\phi(C) \leq
Q(x)$; it follows that
$\Ex{\phi(C)} \leq \Ex{Q(x)}$. Since $\Ex{\phi(C)} = f_x$, this will give the desired bound.

In the base case, when $N = 0$, both $\phi(C)$ and $Q(x)$ are
$0$ so the invariant holds trivially. Suppose the invariant holds after the first $k$ increments to the
sketch, and some key $y$ is then incremented. If $x = y$, then we
transition the counter $C$ using the same random uniform variable $r$
that is used to transition the counters $X_1, \dots, X_l$
corresponding to key $x$ in the sketch. There are two cases: either
$r$ is small enough to cause the minimum $X_i$ to increase by 1, or
not. If it is, then since $C \leq \min_i(X_i)$, $r$ is also small
enough to cause $C$ to increase by 1, and so $\phi(C) \leq Q(x)$. If
$\min_i(X_i)$ does not change, but $C$ does, then we must have $C < \min_i(X_i)$ before the
transition; since $C$ can only increase by $1$, we still have $C \leq \min_i(X_i)$ afterward.

If the key $y$ is not equal to $x$, then we leave $C$ as is. 
Since the $X_i$ can only possibly increase while $C$ stays the same,
the invariant holds. Finally, after all $N$ increments
have been performed, $C$ will have received $f_x$ increments, so that
$\Ex{\phi(C)} = f_x$ because approximate counters are unbiased.
\end{proof}

In \appendixref{sec:micro} we describe various microbenchmarks comparing
the behavior of the different ways of combining the two sketches.
 
\section{Asymptotic Convergence}\label{Convergence}

In the previous section, we explored some of the statistical properties of the
combined sketch. We now turn to the question of the behavior of an MCMC
algorithm when we use these sketches in place of exact counts. More precisely,
suppose we have a Markov chain whose states are tuples of the form $(\smat,
\srest)$, where $\smat$ is a $K \times V$ matrix of counts, and $\srest$ 
is an element of some complete separable metric space
$\resttype$. Now, suppose instead of tabulating $\smat$ in a dense array of
exact counters, we replace each row with a sketch using parameters $\param$.
We can ask whether the resulting sketched chain\footnote{Since approximate
  counters can return floating point estimates of counts, replacing the exact
  counts with sketches only makes sense if the transition kernel for the Markov
  chain can be interpreted when these state components involve floating point
  numbers. But this is usually the case since Bayesian models typically apply
  non-integer smoothing factors to integer counts anyway.} has an equilibrium
distribution, and if so, how it relates to the equilibrium distribution of the
original ``exact'' chain.  As we will see, it is often easy to show that the
sketched chain still has an equilibrium distribution. However, the relationship
between the sketched and exact equilibriums may be quite complicated. Still, a
reasonable property to want is that, if we have a consistent sequence of
parameters $\param_{n}$, and we consider a sequence of sketched chains, where
the $i$th chain uses parameters $\param_i$, then the sequence of equilibrium
distributions will converge to that of the exact chain.

The reason such a property is important is that it provides some justification for how these
sketched approximations would be used in practice. Most likely, one would first test the
algorithm using some set of sketch parameters, and then if the results are
not satisfactory, the parameters could be adjusted to decrease error
rates in exchange for higher computational cost. (Just as, when using standard MCMC
techniques without an a priori bound on mixing times, one can run chains
for longer periods of time if various diagnostics suggest poor results).
Therefore, we would
like to know that asymptotically this approach really would converge
to the behavior of the exact chain.
We will now show that under reasonable conditions, this convergence
does in fact hold.

We assume the state space $\stateexact$ of the original chain is a compact, measurable
subset of $\mathbb{N}^{K \times V} \times Y$. 
We suppose that the transition kernel of the chain can be divided into three
phrases, represented by the composition of kernels $\kernelpre \kcomp \kernel
\kcomp \kernelpost$, where in $\kernel$ the matrix of counts is updated in a way
that depends only on the rest of the state, which
is then modified in $\kernelpre$ and $\kernelpost$ (\eg in a
blocked Gibbs sampler $\kernel$ would correspond to the part of a sweep where
$\smat$ is updated). Moreover, we assume that the transitions $\kernelpre$ and $\kernelpost$
are well-defined on the extended state space $\mathbb{R}_{\geq 0}^{K \times V} \times Y$, where
the counts are replaced with positive reals.
Formally, these conditions mean we assume that there exist Markov kernels
$\kernelpre', \kernelpost': \mathbb{R}_{\geq 0}^{K \times V} \times Y \rightarrow Y$ and
$\kernel : Y \rightarrow \mathbb{N}^{K \times V}$ such that
\begin{align*}
\kernelpre((\smat, \srest), A) &= \int \kernelpre'((\smat, \srest), d\srest') 1_A(\smat, \srest') \\
\kernel((\smat, \srest), A) &= \int \kernel'(\srest, d\smat') 1_A(\smat', \srest) \\
\kernelpost((\smat, \srest), A) &= \int \kernelpost'((\smat, \srest), d\srest') 1_A(\smat, \srest')
\end{align*}
where we write $1_A$ for the indicator function corresponding to a measurable set $A$.
We assume
this chain has a unique stationary distribution $\stationary$. Furthermore, we assume
$\kernelpre$, $\kernel$, and $\kernelpost$ are \emph{Feller
  continuous}, that is, if $s_n \rightarrow s$, then $\kernel(s_n,
\cdot) \weaklim \kernel(s, \cdot)$, and similarly for $\kernelpre$ and
$\kernelpost$, where $\weaklim$ is weak convergence of measures.

Fix a consistent sequence of parameters $\param_n$. For each $n$, we define the sketched Markov
chain with transition kernel $\kernelpre \kcomp \kernel_n \kcomp \kernelpost$, where
$\kernel_n$ is the kernel obtained by replacing the exact matrix of counts used in $\kernel$ with a sketched
matrix with parameters $\param_n$: 
\begin{align*}
&  \kernel_n((\smat, \srest), A) \\ &= \int \kernel'(\srest, d\smat') \int \sketchkern_{\param_n}(\smat', d\smat'') 1_A(\smat'', \srest)
\end{align*}

(recall that $\sketchkern_{\param_n}$ is the kernel induced by the combined sketching algorithm, as described in
\Sref{sec:comb1}). We assume that the set $S$
containing the union of the states of the exact chain and the sketched
chains is some compact measurable subset of $\mathbb{R}_{\geq 0}^{K \times V} \times Y$.
Assuming that each $\kernelpre \kcomp \kernel_n \kcomp \kernelpost$ has a stationary distribution
$\stationary_n$, we will show that they converge weakly to
$\stationary$. We use the following general result of Karr:

\begin{theorem}[{\citet[Theorems 4 and 6]{Karr75}}]
  Let $E$ be a complete separable metric space with Borel sigma algebra $\Sigma$. Let $K$ and $K_1, K_2, \dots$ be Markov kernels on $(E, \Sigma)$. Suppose $K$ has a unique stationary distribution $\mu$ and $K_1, \dots$ have stationary distributions $\mu_1, \dots$. \\ Assume the following hold
  \begin{enumerate}
    \item for all $s$, $\{K_n(s, \cdot)\}_{n}$ is tight, and
    \item $s_n \rightarrow s$ implies $K_n(s_n, \cdot) \weaklim K(s, \cdot)$.
  \end{enumerate}
  Then $\mu_n \weaklim \mu$.
\end{theorem}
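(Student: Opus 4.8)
The plan is to prove Karr's theorem by the standard \emph{subsequence principle} for weak convergence: since the space of probability measures on a complete separable metric space, equipped with the topology of weak convergence, is metrizable (by the Prohorov metric), it suffices to show that every subsequence of $\{\mu_n\}$ admits a further subsequence converging weakly to $\mu$. I would organize the argument into three steps: (i) relative compactness of $\{\mu_n\}$; (ii) identification of every subsequential weak limit as a stationary distribution of $K$; (iii) closing the argument by uniqueness of $\mu$.

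For step (i), I would use hypothesis~1 together with the stationarity identity $\mu_n = \mu_n K_n$ to show that $\{\mu_n\}$ is tight, so that Prohorov's theorem yields relative compactness and hence, along any subsequence, a further subsequence $\mu_{n_k} \weaklim \nu$ for some probability measure $\nu$. Establishing tightness of $\{\mu_n\}$ is the delicate part of the whole proof: hypothesis~1 only controls $\{K_n(s,\cdot)\}_n$ for each \emph{fixed} $s$, and one must rule out mass of $\mu_n$ escaping to infinity through the recursion $\mu_n(\cdot) = \int K_n(s,\cdot)\,\mu_n(ds)$, in which the integrating measure itself varies with $n$. In the setting in which this theorem is applied the entire state space is compact, so every family of measures is tight and this step is vacuous; in full generality it is precisely the content Karr supplies, and it is what I expect to be the main obstacle.

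For step (ii), fix a bounded continuous test function $g$ and a convergent subsequence $\mu_{n_k} \weaklim \nu$. Writing $(K_n g)(s) = \int g(t)\,K_n(s,dt)$, I would pass to the limit in the stationarity identity
\[
\int g\, d\mu_{n_k} = \int g\, d(\mu_{n_k} K_{n_k}) = \int (K_{n_k} g)\, d\mu_{n_k}.
\]
The key observation is that hypothesis~2 gives \emph{continuous convergence} of $K_{n_k} g$ to $Kg$: whenever $s_k \to s$, hypothesis~2 yields $K_{n_k}(s_k,\cdot) \weaklim K(s,\cdot)$ and hence $(K_{n_k}g)(s_k) \to (Kg)(s)$. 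Continuous convergence of a uniformly bounded sequence of measurable functions forces the limit $Kg$ to be continuous and, via the generalized continuous mapping theorem (for continuously convergent, uniformly bounded integrands against weakly convergent measures), gives $\int (K_{n_k}g)\, d\mu_{n_k} \to \int Kg\, d\nu$. Since the left-hand side above equals $\int g\, d\mu_{n_k} \to \int g\, d\nu$, I obtain $\int g\, d\nu = \int Kg\, d\nu = \int g\, d(\nu K)$ for every bounded continuous $g$, so $\nu = \nu K$ and $\nu$ is stationary for $K$. Justifying this interchange of limits — precisely the version of the continuous mapping theorem just invoked — is the second place requiring care.

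Finally, for step (iii), since $K$ has $\mu$ as its unique stationary distribution and every subsequential weak limit $\nu$ of $\{\mu_n\}$ is stationary for $K$, every such limit equals $\mu$. Combined with the relative compactness from step (i), the subsequence principle then gives $\mu_n \weaklim \mu$, completing the proof.
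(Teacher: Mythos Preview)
The paper does not supply its own proof of this statement: it is quoted verbatim as a result of \citet{Karr75} and then \emph{applied} by verifying its hypotheses for the sketched chains. So there is no in-paper argument to compare your proposal against.

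That said, your outline is the standard route to this kind of result and matches what one would expect from Karr's original argument: Prohorov for relative compactness, continuous convergence of $K_n g$ to $Kg$ from hypothesis~2 to identify subsequential limits as $K$-stationary, and uniqueness of $\mu$ to close. You are right to flag step~(i) as the genuinely delicate point and to observe that it is vacuous in the paper's application because the state space there is compact. In full generality, hypothesis~1 alone (pointwise tightness of $\{K_n(s,\cdot)\}_n$) together with stationarity of $\mu_n$ does \emph{not} obviously force tightness of $\{\mu_n\}$; this is where the actual work in Karr's Theorem~4 lies, and your proposal correctly defers to the reference rather than claiming it follows immediately. Steps~(ii) and~(iii) are fine as written.
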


We now show that the assumptions of this theorem hold for our chains. The first condition is straightforward:
\begin{lemma}
For all $x$, the family of measures $\{(\kernelpre \kcomp \kernel_n \kcomp \kernelpost)(x, \cdot)\}_{n}$ is tight.
\end{lemma}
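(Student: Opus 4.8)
The plan is to read tightness off directly from the standing compactness hypothesis on the joint state space, so almost no work is required. Recall that by assumption the set $S$ — which contains the states of the exact chain together with the states of all the sketched chains — is a compact measurable subset of $\mathbb{R}_{\geq 0}^{K \times V} \times Y$. Being a compact metric space, $S$ is complete and separable, so we may take $E = S$ when invoking Karr's theorem; and the collection of all Borel probability measures on $S$ is tight in one stroke, since for any $\epsilon > 0$ the compact set $S$ itself witnesses tightness for every such measure simultaneously ($\mu(S) = 1 \geq 1 - \epsilon$).

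The one thing I would actually verify is that, for each fixed $x \in S$ and each $n$, the measure $(\kernelpre \kcomp \kernel_n \kcomp \kernelpost)(x, \cdot)$ is concentrated on $S$. This is immediate from how $\kernel_n$ was set up: starting from $x \in S$ and applying the three phases $\kernelpre$, $\kernel_n$, $\kernelpost$ in composition — where $\kernel_n$ redraws the count-matrix coordinate via $\kernel'$ and then passes it through the sketch kernel $\sketchkern_{\param_n}$ — yields a state that is by construction a reachable state of the $n$th sketched chain, and by hypothesis all such states lie in $S$. Hence $(\kernelpre \kcomp \kernel_n \kcomp \kernelpost)(x, S) = 1$ for all $n$, and choosing the compact set $S$ in the definition of tightness finishes the argument.

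There is no genuine obstacle in this lemma: all the content has been pushed into the modeling assumption that $S$ is compact and forward-invariant for every chain under consideration, so that all the kernels $\kernelpre \kcomp \kernel_n \kcomp \kernelpost$ are genuinely $S$-valued. The only point deserving a sentence of care is checking that the three-phase composition does not escape $S$ into the larger ambient space $\mathbb{R}_{\geq 0}^{K \times V} \times Y$, which is exactly what the assumption on $S$ rules out. (By contrast, the real work in applying Karr's theorem will be in its \emph{second} hypothesis — the continuity requirement $s_n \to s \implies (\kernelpre \kcomp \kernel_n \kcomp \kernelpost)(s_n, \cdot) \weaklim (\kernelpre \kcomp \kernel \kcomp \kernelpost)(s, \cdot)$ — where Feller continuity of $\kernelpre$, $\kernel$, $\kernelpost$, consistency of the parameter sequence $\param_n$, and \thmref{thm:convprob-vector} all come into play; that is treated separately.)
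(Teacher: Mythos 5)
Your proof is correct and takes exactly the same route as the paper, which disposes of this lemma in one line by appealing to the compactness of $S$; your version simply spells out the two implicit steps (that the kernels stay supported on $S$, and that a single compact set witnesses tightness uniformly in $n$). Nothing is missing.
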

\begin{proof}
This follows immediately from the assumption that the set of states $S$ is a compact measurable set.
\end{proof}

To establish the second condition, we start with the following:
\begin{lemma}
If $s_n \rightarrow s$, then $(\kernelpre \kcomp \kernel_n)(s_n, \cdot) \weaklim (\kernelpre \kcomp \kernel)(s, \cdot)$.
\end{lemma}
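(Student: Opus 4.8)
The plan is to factor the claim into two parts: first establish that the sketched kernels $\kernel_n$ converge \emph{continuously} to $\kernel$ — meaning $s_n \to s$ implies $\kernel_n(s_n, \cdot) \weaklim \kernel(s, \cdot)$ — and then combine this with the Feller continuity of $\kernelpre$ to handle the composition. Throughout, write $s_n = (\smat_n, \srest_n) \to (\smat, \srest) = s$, so in particular $\srest_n \to \srest$; note that $\kernel_n$ ignores the input count matrix, so only the $\srest$-coordinate is relevant for that step.

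For the first part, observe that $\kernel_n(s_n, \cdot)$ is the law of $(\smat''_n, \srest_n)$, where $\smat'_n \sim \kernel'(\srest_n, \cdot)$ and then $\smat''_n \sim \sketchkern_{\param_n}(\smat'_n, \cdot)$, while $\kernel(s, \cdot)$ is the law of $(\smat', \srest)$ with $\smat' \sim \kernel'(\srest, \cdot)$. Since the $\srest$-coordinate converges deterministically, it suffices to prove the marginal convergence $\smat''_n \weaklim \smat'$ (the joint statement then follows from the Skorokhod representation theorem together with uniform continuity of bounded continuous functions on the compact set $S$). Fix a bounded continuous $g$ on $\mathbb{R}_{\geq 0}^{K \times V}$. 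Because each $\kernel'(\srest_n, \cdot)$ is a probability measure on the discrete set $\mathbb{N}^{K \times V}$,
\[
\Ex{g(\smat''_n)} \;=\; \sum_{m \in \mathbb{N}^{K\times V}} \kernel'(\srest_n, \{m\})\, \Ex{g(Q_{\param_n}(m))}.
\]
Feller continuity of $\kernel$ gives $\kernel'(\srest_n, \cdot) \weaklim \kernel'(\srest, \cdot)$, and since each integer point is isolated (one can build a continuous bump supported near $m$ meeting no other integer point), this yields $\kernel'(\srest_n, \{m\}) \to \kernel'(\srest, \{m\})$ for every $m$. For fixed $m$, \thmref{thm:convprob-vector} applied to the consistent sequence $\param_n$ gives $Q_{\param_n}(m) \plim m$, hence $Q_{\param_n}(m) \weaklim \delta_m$ and $\Ex{g(Q_{\param_n}(m))} \to g(m)$. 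Each summand is dominated by $\|g\|_\infty\, \kernel'(\srest_n, \{m\})$, whose sum over $m$ is the constant $\|g\|_\infty$, so a generalized dominated convergence argument (Pratt's lemma) lets us pass the limit inside the sum, giving $\Ex{g(\smat''_n)} \to \sum_m \kernel'(\srest, \{m\})\, g(m) = \Ex{g(\smat')}$. This is the desired continuous convergence $\kernel_n(s_n, \cdot) \weaklim \kernel(s, \cdot)$.

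For the second part, let $g$ be bounded continuous on $S$ and put $G_n(y) = \int g\, d\kernel_n(y, \cdot)$ and $G(y) = \int g\, d\kernel(y, \cdot)$; both are bounded by $\|g\|_\infty$, and the continuous convergence just proved shows $y_n \to y$ implies $G_n(y_n) \to G(y)$. Now $(\kernelpre \kcomp \kernel_n)(s_n, \cdot)$ integrates $G_n$ against $\kernelpre(s_n, \cdot)$, and $\kernelpre(s_n, \cdot) \weaklim \kernelpre(s, \cdot)$ by Feller continuity of $\kernelpre$; applying the Skorokhod representation theorem to this weak limit on the compact metric space $S$ and then the bounded convergence theorem to $G_n$ along the Skorokhod sequence yields $\int G_n\, d\kernelpre(s_n,\cdot) \to \int G\, d\kernelpre(s,\cdot)$, which is precisely $(\kernelpre \kcomp \kernel_n)(s_n, \cdot) \weaklim (\kernelpre \kcomp \kernel)(s, \cdot)$.

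The main obstacle is the exchange of limit and the a priori infinite sum over count matrices in the middle step: this is where both hypotheses do real work — Feller continuity of $\kernel$ controls how the law of the unsketched counts moves as $\srest_n \to \srest$, \thmref{thm:convprob-vector} controls the sketching error on each fixed count matrix, and the fact that the $\kernel'(\srest_n, \cdot)$ are genuine probability measures (no mass escaping to infinity) is exactly what makes the generalized dominated convergence applicable. Everything else — passing from marginal to joint weak convergence, and composing with $\kernelpre$ — is a routine application of the Skorokhod representation theorem on the compact set $S$.
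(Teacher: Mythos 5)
Your proof is correct, but it takes a genuinely different route from the paper's. The paper applies Skorokhod's representation theorem once, to the weak limit $(\kernelpre \kcomp \kernel)(s_n, \cdot) \weaklim (\kernelpre \kcomp \kernel)(s, \cdot)$ supplied by Feller continuity, obtaining representatives with $(\rsmat_n, \rsrest_n) \plim (\rsmat, \rsrest)$, and then shows directly that $Q_{\param_n}(\rsmat_n) \plim \rsmat$; the key step there is that the union $U$ of the supports of the $\rsmat_n$ consists of integer matrices lying in the compact set $S$ and is therefore \emph{finite}, so the error bound from \thmref{thm:convprob-vector} can be made uniform over $U$ by taking a maximum of finitely many thresholds. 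You instead factor the claim: you first prove continuous convergence of $\kernel_n$ to $\kernel$ by expanding $\int g\, d\kernel_n(s_n,\cdot)$ as a countable sum over integer count matrices and interchanging limits via Pratt's lemma, using only that the $\kernel'(\srest_n,\cdot)$ are probability measures concentrated on the discrete set $\mathbb{N}^{K\times V}$ (so that weak convergence gives pointwise convergence of the masses and no mass escapes), and you then compose with $\kernelpre$ via the generic fact that weak convergence of the initial laws together with continuous convergence of the integrands passes to the integrals. The paper's single coupling is shorter and more concrete, but leans on compactness of $S$ to obtain the finite-support uniformization; your decomposition isolates the sketching approximation in a self-contained continuous-convergence statement about $\kernel_n$ that does not require that trick (and so would survive in non-compact settings or with several sketched matrices), at the cost of invoking Pratt's lemma and an additional abstract composition step. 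A couple of your intermediate moves (passing from marginal to joint convergence when one coordinate is a deterministic convergent sequence, and the final composition step) could be cited as standard facts rather than re-derived through Skorokhod, but the argument as written is sound.
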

\begin{proof}
  To match up with the results in \Sref{Analysis}, it is helpful to
  rephrase this as a question of convergence of distribution of random
  variables with appropriate laws. By assumption $\kernelpre \kcomp \kernel$ is Feller continuous, so we know that $(\kernelpre \kcomp \kernel)(s_n,
  \cdot) \weaklim (\kernelpre \kcomp \kernel)(s, \cdot)$, hence by Skorokhod's representation
  theorem, there exists random matrices $\rsmat, \rsmat_1, \dots,$ and
  random $Y$-elements  $\rsrest, \rsrest_1, \dots$
  such that the law of $(\rsmat_n,
  \rsrest_n)$ is $(\kernelpre \kcomp \kernel)(s_n, \cdot)$, that of $(\rsmat, \rsrest)$ is
  $(\kernelpre \kcomp \kernel)(s, \cdot)$, and $(\rsmat_n, \rsrest_n) \plim (\rsmat, \rsrest)$.
  Then the distribution of $(Q_{\param_{n}}(\rsmat_n), \rsrest_n)$
  is that of $(\kernelpre \kcomp \kernel_n)(s_n, \cdot)$, so it suffices to show that
  $Q_{\param_{n}}(\rsmat_{n}) \plim \rsmat$.

  Fix $\delta, \epsilon > 0$.
  Let $U$ be the union of the supports of each $\rsmat_{n}$. Then $U$ consists of
  integer matrices lying in some compact subset of real vectors (since
  $S$ is compact and the counts returned by $\kernel$ are exact
  integers), so $U$ is finite. Moreover, by \thmref{thm:convprob-vector} we know that for
  all $\smat$, there exists $n_{\smat}$ such that for all $n > n_{\smat}$,
  $ \Pr\left[\norm{Q_{\param_{n}}(\rsmat_{n}) - \smat} > \epsilon/2\ \mid \ \rsmat_{n} = \smat\right] < \delta/2 $.
  Let $m_1$ be the maximum of the $n_{\smat}$ for $\smat \in U$.
  We also know that there exists $m_2$ such that for all $n > m_2$, 
  $ \Pr\left[\norm{\rsmat_{n} - \rsmat} > \epsilon/2\right] < \delta/2 $.
  For $n > \max(m_1, m_2)$, we then have
  $\Pr\left[\norm{Q_{\param_{n}}(\rsmat_{n}) - \rsmat} > \epsilon\right] < \delta$.
\end{proof}

Continuity of $\kernelpost$ then gives us:
\begin{lemma}
If $s_n \rightarrow s$, then $(\kernelpre \kcomp \kernel_n \kcomp \kernelpost)(s_n, \cdot) \weaklim (\kernelpre \kcomp \kernel \kcomp \kernelpost)(s, \cdot)$.
\end{lemma}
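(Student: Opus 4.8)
The plan is to reduce this to the preceding lemma by pushing bounded continuous test functions through $\kernelpost$. Fix an arbitrary bounded continuous function $f$ on the common state space $S$, and set $g(s) \eqdef \int f(s')\,\kernelpost(s, ds')$. Since $\kernelpost$ is a probability kernel, $g$ is bounded by $\sup|f|$; and by the assumed Feller continuity of $\kernelpost$, if $s_m \to s$ then $\kernelpost(s_m, \cdot) \weaklim \kernelpost(s, \cdot)$, so $g(s_m) = \int f\, d\kernelpost(s_m,\cdot) \to \int f\, d\kernelpost(s, \cdot) = g(s)$. Hence $g \in C_b(S)$. This is the one step that actually uses a hypothesis rather than bookkeeping, and it is the reason Feller continuity of $\kernelpost$ was imposed.

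Next I would invoke the elementary identity, valid for any finite measure $\nu$ and Markov kernel $P$, that $\int f\, d(\nu P) = \int (Pf)\, d\nu$ with $(Pf)(s) = \int f(s')\,P(s, ds')$ (Fubini--Tonelli, using that everything is $\sigma$-finite and $f$ bounded). Applying this with $P = \kernelpost$ gives
\[
\int f\, d\big((\kernelpre \kcomp \kernel_n \kcomp \kernelpost)(s_n, \cdot)\big) = \int g\, d\big((\kernelpre \kcomp \kernel_n)(s_n, \cdot)\big),
\]
and the same identity with $\kernel$ in place of $\kernel_n$ and $s$ in place of $s_n$ relates the limiting quantities.

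Now I apply the preceding lemma, which gives $(\kernelpre \kcomp \kernel_n)(s_n, \cdot) \weaklim (\kernelpre \kcomp \kernel)(s, \cdot)$. Since $g \in C_b(S)$, the definition of weak convergence yields $\int g\, d\big((\kernelpre \kcomp \kernel_n)(s_n, \cdot)\big) \to \int g\, d\big((\kernelpre \kcomp \kernel)(s, \cdot)\big)$. Combining with the two displayed identities shows $\int f\, d\big((\kernelpre \kcomp \kernel_n \kcomp \kernelpost)(s_n, \cdot)\big) \to \int f\, d\big((\kernelpre \kcomp \kernel \kcomp \kernelpost)(s, \cdot)\big)$, and since $f$ was an arbitrary element of $C_b(S)$ this is precisely the claimed weak convergence. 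In short: weak convergence is stable under post-composition with a Feller-continuous kernel, and the preceding lemma already did the hard analytic work (it is there that \thmref{thm:convprob-vector} and Skorokhod representation enter).

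The main obstacle is really just the continuity of $g$, which is handled by the Feller assumption as above; the only additional care needed is to make sure all measures and test functions are consistently viewed as living on the single compact set $S \subseteq \mathbb{R}_{\geq 0}^{K \times V} \times Y$, so that ``bounded continuous'' is unambiguous and the kernel compositions are well-defined — but this is immediate from the standing assumptions. Together with the tightness lemma, this verifies both hypotheses of Karr's theorem, and hence $\stationary_n \weaklim \stationary$.
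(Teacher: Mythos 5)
Your proof is correct and is precisely the standard argument that the paper's one-line remark (``Continuity of $\kernelpost$ then gives us\ldots'') leaves implicit: weak convergence is preserved under post-composition with a Feller kernel, proved by pushing $f \in C_b(S)$ through $\kernelpost$ to get $g = \kernelpost f \in C_b(S)$ and applying the preceding lemma. The paper offers no written proof for this lemma, so your write-up simply supplies the intended details.
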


Thus by Karr's theorem we conclude:
\begin{theorem} $\stationary_{n} \weaklim \stationary$.  \end{theorem}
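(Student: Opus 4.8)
The plan is to obtain $\stationary_{n} \weaklim \stationary$ as a direct application of Karr's theorem to the kernels $K = \kernelpre \kcomp \kernel \kcomp \kernelpost$ and $K_{n} = \kernelpre \kcomp \kernel_{n} \kcomp \kernelpost$ on the state space $S$, so that all the work is in checking the theorem's two hypotheses. The first, tightness of $\{K_{n}(x, \cdot)\}_{n}$ for each $x$, is immediate from the standing assumption that $S$ is a compact measurable subset of $\mathbb{R}_{\geq 0}^{K \times V} \times Y$: every $K_{n}(x, \cdot)$ is supported on $S$, and a family of probability measures on a fixed compact set is tight. This is the content of the tightness lemma above.

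The substance is the second hypothesis: $s_{n} \to s$ implies $K_{n}(s_{n}, \cdot) \weaklim K(s, \cdot)$. I would establish this in two composition steps. First, for the prefix $\kernelpre \kcomp \kernel_{n}$: Feller continuity of $\kernelpre \kcomp \kernel$ gives $(\kernelpre \kcomp \kernel)(s_{n}, \cdot) \weaklim (\kernelpre \kcomp \kernel)(s, \cdot)$, so by Skorokhod's representation theorem these laws can be realized by random variables $(\rsmat_{n}, \rsrest_{n})$ and $(\rsmat, \rsrest)$ with $(\rsmat_{n}, \rsrest_{n}) \plim (\rsmat, \rsrest)$. Since the sketch kernel $\sketchkern_{\param_{n}}$ acts only on the count block, the law of $(\kernelpre \kcomp \kernel_{n})(s_{n}, \cdot)$ is exactly that of $(Q_{\param_{n}}(\rsmat_{n}), \rsrest_{n})$, so it suffices to prove $Q_{\param_{n}}(\rsmat_{n}) \plim \rsmat$. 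Because $\kernel$ returns exact integer counts and $S$ is compact, all the $\rsmat_{n}$ take values in a single finite set $U$ of integer matrices; combining the uniform-over-$U$ form of \thmref{thm:convprob-vector} (applied conditionally on the value of $\rsmat_{n}$) with $\rsmat_{n} \plim \rsmat$, via a routine $\epsilon/2$–$\delta/2$ split, yields $Q_{\param_{n}}(\rsmat_{n}) \plim \rsmat$. Second, appending $\kernelpost$: since $\kernelpost$ is Feller continuous, $x \mapsto \int f\,d\kernelpost(x,\cdot)$ is bounded continuous for every bounded continuous $f$, so the weak convergence just obtained is preserved under composition with $\kernelpost$, giving $K_{n}(s_{n}, \cdot) \weaklim K(s, \cdot)$.

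With both hypotheses of Karr's theorem verified, and using the standing assumptions that $K$ has the unique stationary distribution $\stationary$ and each $K_{n}$ has a stationary distribution $\stationary_{n}$, the theorem yields $\stationary_{n} \weaklim \stationary$.

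The main obstacle is the middle step: the sketch kernel $\sketchkern_{\param_{n}}$ is \emph{not} Feller continuous in a form one could compose directly, because the consistency statement of \thmref{thm:convprob-vector} is only pointwise in the count matrix and supplies no uniformity over a continuum of matrices. The observation that rescues the argument is that the matrices actually fed into the sketch are produced by $\kernel$, which emits exact integers lying in the compact set $S$, so only finitely many count matrices are ever reachable; this finiteness is precisely what upgrades pointwise consistency to the joint convergence $Q_{\param_{n}}(\rsmat_{n}) \plim \rsmat$ needed to feed Karr's theorem. Everything else is bookkeeping with weak convergence, the Skorokhod representation, and composition of Markov kernels.
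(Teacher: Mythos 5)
Your proposal is correct and follows essentially the same route as the paper: an application of Karr's theorem, with tightness obtained from compactness of $S$, and the continuous-convergence hypothesis verified by composing a Skorokhod-representation argument for $\kernelpre \kcomp \kernel_n$ (using the key observation that the reachable count matrices form a finite set of integer matrices, which upgrades the pointwise consistency of \thmref{thm:convprob-vector} to $Q_{\param_n}(\rsmat_n) \plim \rsmat$) with the Feller continuity of $\kernelpost$. No substantive differences from the paper's argument.
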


In the above, we have assumed that there is a single sketched matrix of counts,
and that each row of the matrix uses the same sketch parameters. However, the
argument can be generalized to the case where there are several sketched
matrices with different parameters.  We now explain how this result can be
applied to some Dirichlet-Categorical models:

\paragraph{Example 1: SEM for LDA.}
When using stochastic expectation maximization (SEM) for the LDA topic
model~\citep{LDA}, the states of the Markov chain are matrices $wpt$
and $tpd$ giving the words per topic and topic per document
counts. Within each round, estimates of the corresponding topic and
document distributions $\theta$ and $\phi$ are computed from smoothed
versions of these counts; new topic assignments are sampled according
to this distribution, and the counts $wpt$ and $tpd$ are updated. We
can replace the rows of either $wpt$ or $tpd$ with sketches. In this
case $\kernelpre$ and $\kernelpost$ are the identity, and the Feller
continuity of $\kernel$ follows from the fact that the estimates of
$\theta$ and $\phi$ are continuous functions of the $\wpt$ and $\tpd$
counts. Compactness of the state space is a consequence of the fact
that the set of documents (and hence maximum counts) are finite, and the maximum counter base is bounded.
Finally, the sketched kernels still have unique stationary
distributions because the smoothing of the $\theta$ and $\phi$
estimates guarantees that if a state is representable in the sketched
chain, we can transition to it in a single step from any other state.

\paragraph{Example 2: Gibbs for Pachinko Allocation.}
The Pachinko Allocation Model (PAM)~\citep{li06pachinko} is a generalization
of LDA in which there is a hierarchy of topics with a directed
acyclic structure connecting them. A blocked Gibbs sampler for this
model can be implemented by first conditioning on topic distributions
and sampling topic assignments for words, then conditioning on these
topic assignments to update topic distributions -- in the latter phase,
one needs counts of the occurrences of words in the different topics
and subtopics which can be collected using sketches. Since the priors
for sampling topics based on these counts are smoothed, the sketched
chains once again have unique stationary distributions for the same reason as in LDA.

\section{Experimental Evaluation}\label{Evaluation}

We now examine the empirical performance of using these sketches. We
implemented a sketched version of SCA~\citep{SCA}, an optimized form
of SEM which is used in state of the art scalable topic
models~\citep{SCA, ZenLDA, WarpLDA, SaberLDA}, and apply it LDA. Full
details of SCA can be found in the appendix.

\paragraph{Setup}  We fit LDA (100 topics,
$\alpha=0.1$, $\beta=0.1$, 291k-word vocabulary after removing rare and
stop-words as is customary) to 6.7 million English Wikipedia articles using 60
iterations of SCA distributed across eight 8-core machines, and measure the
perplexity of the model after each iteration on 10k randomly sampled Reuters
documents.  For all experiments, we report the mean and standard-deviation of
perplexity and timing across three trials.  Example topics from the various
configurations are shown in the appendix.  For more details, see
Appendix~\ref{sec:apx-lda-hardware}.

In this distributed setting, each machine must store a copy of the
word-per-topic ($wpt$) frequency counts, and at the end of an iteration, updated
counts from different machines must be merged.  However, each machine
only needs to store the rows of the topics-per-document matrix ($tpd$)
pertaining to the documents it is processing. Hence, controlling the size of
$wpt$ is more important from a scalability perspective, so we will examine
the effects of sketching $wpt$.

The data set and number of topics we are using for these tests are small enough
that the non-sketched $wpt$ matrix and documents can feasibly fit in each
machine's memory, so sketching is not strictly necessary in this setting. Our
reason for using this data set is to be able to produce baselines
of statistical performance for the non-sketched version to compare against the
sketched versions.

\begin{figure}
\begin{center}
    \includegraphics[width=0.5\textwidth]{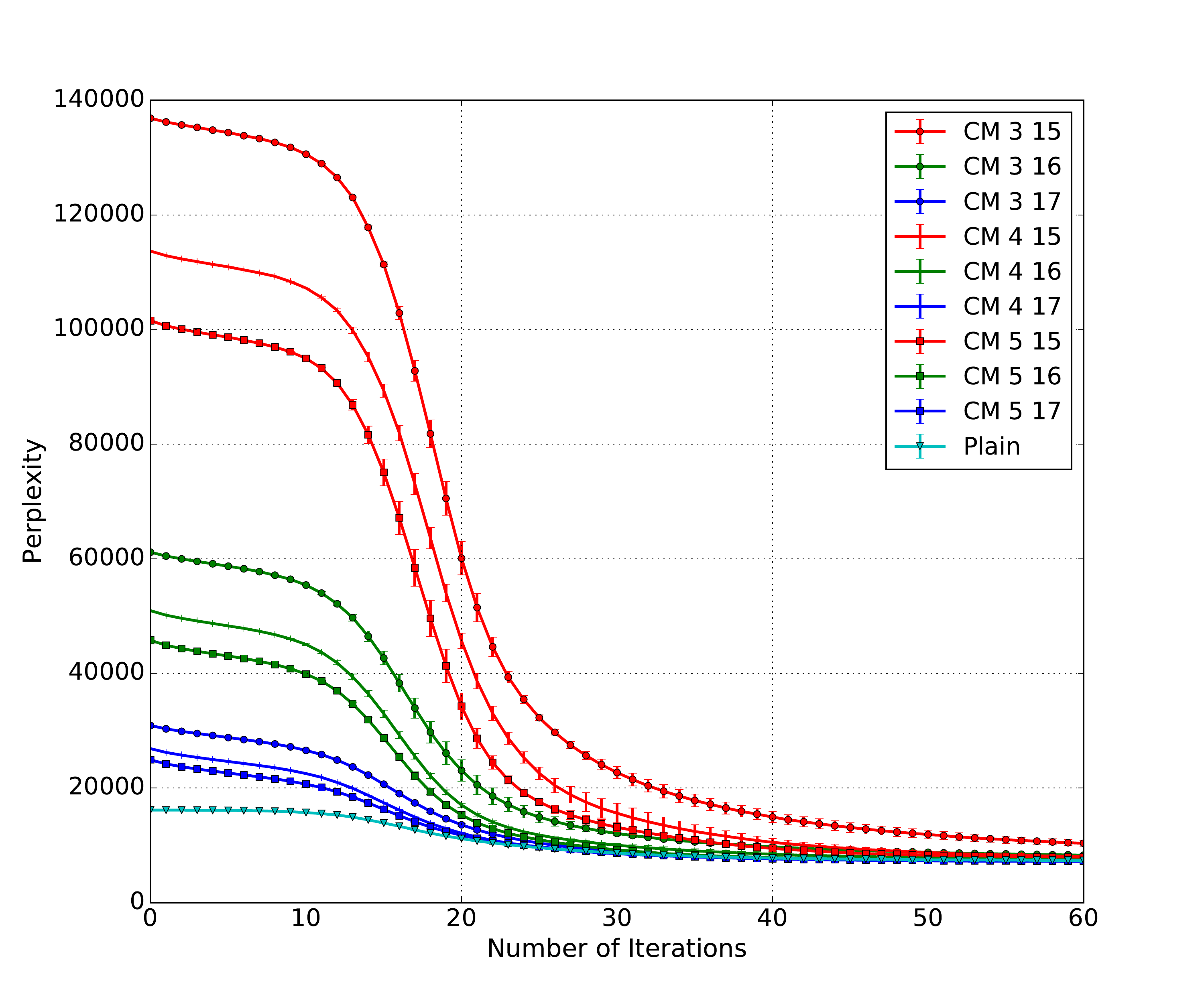}
\end{center}
 \caption{LDA perplexity with count-min sketch.}
\label{fig:body:perplexity:cmsketch}
\end{figure}

\paragraph{Experiment 1: Impact of the CM sketch.} In the first
experiment, we evaluate the results of just using the CM sketch. We replace each
row of the $wpt$ matrix in baseline plain SCA
with a count-min sketch.  We vary the number of hash functions $l\in\{3,4,5\}$
and the bits per hash from $\{15,16,17\}$.  Figure~\ref{fig:body:perplexity:cmsketch} displays
perplexity results for these configurations.
While the more compressive
variants of the sketches start at worse perplexities, by the final iterations,
they converge to similar perplexities as the exact baseline with arrays.
The range of the hash has a much larger effect than the number of hash functions
in the earlier iterations of inference.

Table~\ref{table:body:time:cmsketch} gives timing and space usage (the first row
corresponds to the baseline time and space). Recall that our main interest in
sketching is to reduce space usage. Note that some of the parameter
configurations here use more space than a dense array, so the purpose of
including them is to better understand the statistical and timing effects of the
parameters. Even though the smaller configurations do save space compared to the
baseline, hashing the keys adds time overheads. Again, this is relative to the
ideal case for the baseline, in which the documents and the full $wpt$ matrix
represented as a dense array can fit in main memory.

\begin{table}
\centering
\begin{tabular}{|c|c|c|c|}
\hline
$l$ & $\log_2(w)$ & time (s) & size ($10^5$ bytes) \\
\hline
NA & NA & 12.14 $\pm$ 1.82 & 1164.0\\
\hline
3 & 15 & 22.75  $\pm$ 4.30 & 393.2\\ 
3 & 16 & 23.90  $\pm$ 4.41 & 786.43\\
3 & 17 & 25.32  $\pm$ 4.68 & 1572.9\\
\hline
4 & 15 & 29.70 $\pm$  5.82 & 524.3\\
4 & 16 & 32.75  $\pm$ 6.17 & 1048.6\\
4 & 17 & 33.35  $\pm$ 5.89 & 2097.2\\
\hline
5 & 15 & 37.76  $\pm$ 6.95 & 655.4\\
5 & 16 & 39.71  $\pm$ 7.01 & 1310.7\\
5 & 17 & 42.33  $\pm$ 7.75 & 2621.4\\
\hline
\end{tabular}
\caption{Time per iteration and size of $wpt$ representation for LDA with CM sketch. The first row
gives non-sketched baseline.  4-byte integers are used to store entries in the
dense matrix and sketches.}
\label{table:body:time:cmsketch}
\end{table}

\paragraph{Experiment 2: Combined Sketches} 

\begin{figure}
\begin{center}
    \includegraphics[width=0.5\textwidth]{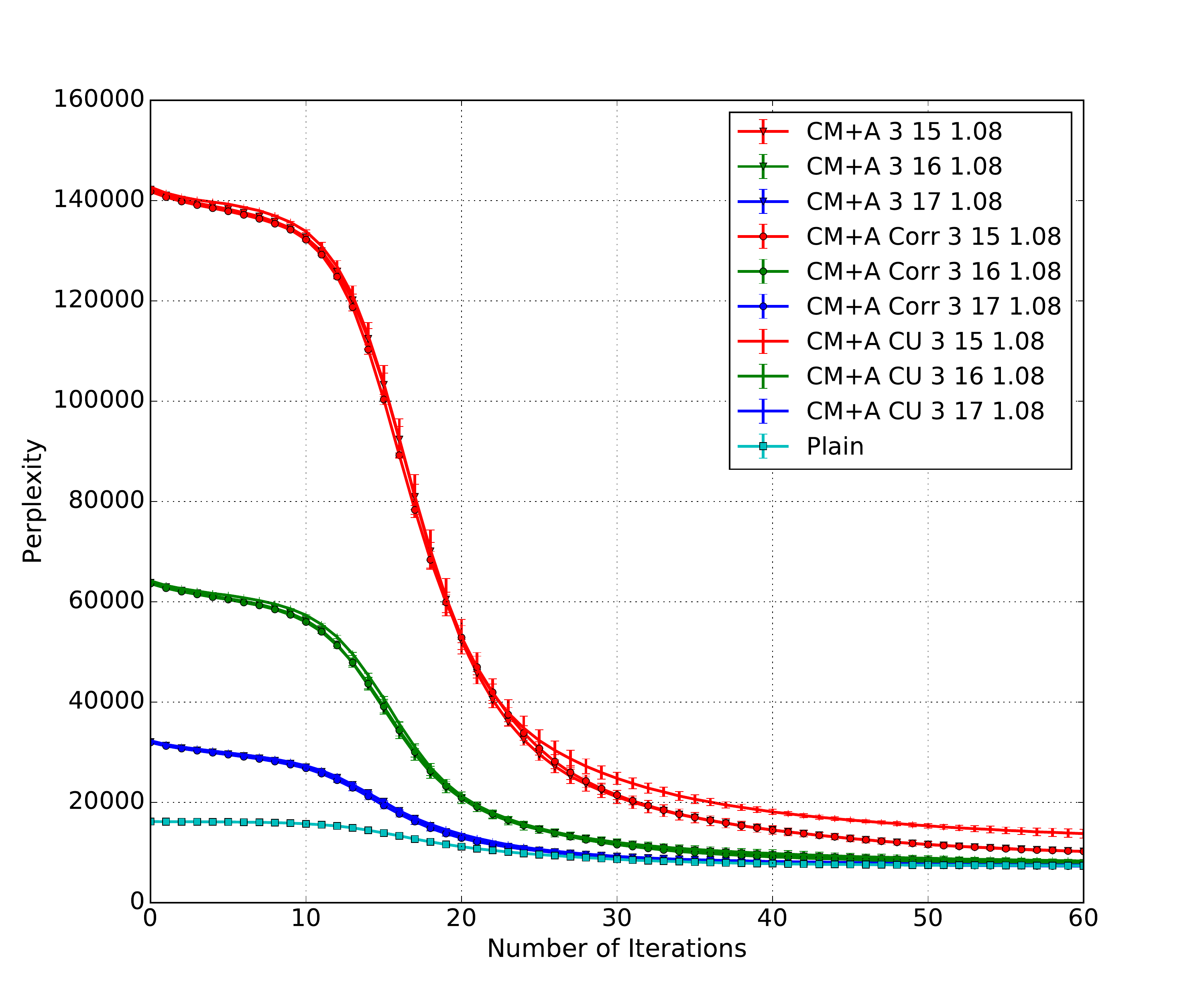}
\end{center}
 \caption{LDA perplexity with combined sketches.}
\label{fig:body:perplexity:combined}
\end{figure}

For the next experiment (Figure~\ref{fig:body:perplexity:combined}), we use the
three variants of combined sketches with approximate counters described in
Section~\ref{Analysis} (sketch with independent counters (\texttt{CM+A}), sketch
with correlated counters (\texttt{CM+A Corr}), and sketch with correlated
counters and the conservative update rule (\texttt{CM+A CU})). We use 1-byte
base-1.08 approximate counters in order to represent a similar range as a 4-byte
integer (but using 1/4 the memory). Given the results of the previous
experiment, we just consider the case where $3$ hash functions are used. In this
particular benchmark, we do not see a large difference in perplexity between the
various update rules, which again converge reasonably close to the perplexity of
the baseline.

Table~\ref{table:body:time:combined} gives timing and space usage for the combined sketches
using the independent counter update rule.
Each iteration runs faster than when just using the CM sketch with similar parameters.
This is because the combined sketches are a quarter of the size of the CM sketch, so there
is less communication complexity involved in sending the representation to other machines.

\begin{table}
\centering
\begin{tabular}{|c|c|c|c|}
\hline
$l$ & $\log_2(w)$ & time (s) & size ($10^5$ bytes) \\
\hline
NA & NA & 12.14 $\pm$ 1.82 & 1164.0\\
\hline
3 & 15 & 12.58 $\pm$ 2.00 & 98.3 \\
3 & 16 & 17.57 $\pm$ 2.78 & 196.6 \\
3 & 17 & 22.69 $\pm$ 3.72 & 393.22 \\
\hline
\end{tabular}
\caption{Time per iteration results for LDA with combined sketch using 1-byte, base 1.08 independent independent counters. Timing for other update rules are similar.}
\label{table:body:time:combined}
\end{table}

\paragraph{}We explore a more comprehensive set of sketch and counter parameter effects on perplexity in
Appendix~\ref{sec:apx-all-params}, run time in
Appendix~\ref{sec:apx-timing}, and example topics in Appendix~\ref{sec:apx-topics}.

\section{Conclusion}

As machine learning models grow in complexity and datasets grow in size, it is becoming
more and more common to use sketching algorithms to represent the data
structures of learning algorithms. When used with MCMC algorithms, a primary question is what effect sketching will have on equilibrium distributions.
In this paper we analyzed sketching algorithms that are commonly used to scale non-Bayesian NLP applications and proved that their use in various MCMC algorithms is justified by showing that sketch parameters can be tuned to reduce the distance between sketched and exact equilibrium distributions.

\bibliography{ref}

\begin{thebibliography}{32}
\providecommand{\natexlab}[1]{#1}
\providecommand{\url}[1]{\texttt{#1}}
\expandafter\ifx\csname urlstyle\endcsname\relax
  \providecommand{\doi}[1]{doi: #1}\else
  \providecommand{\doi}{doi: \begingroup \urlstyle{rm}\Url}\fi

\bibitem[{Alquier} et~al.(2014){Alquier}, {Friel}, {Everitt}, and
  {Boland}]{Alquier}
P.~{Alquier}, N.~{Friel}, R.~{Everitt}, and A.~{Boland}.
\newblock {Noisy Monte Carlo: Convergence of Markov chains with approximate
  transition kernels}.
\newblock \emph{ArXiv e-prints}, March 2014.

\bibitem[Angelino et~al.(2016)Angelino, Johnson, and Adams]{AngelinoJA16}
Elaine Angelino, Matthew~James Johnson, and Ryan~P. Adams.
\newblock Patterns of scalable bayesian inference.
\newblock \emph{Foundations and Trends in Machine Learning}, 9\penalty0
  (2-3):\penalty0 119--247, 2016.

\bibitem[Bardenet et~al.(2014)Bardenet, Doucet, and Holmes]{BardenetDH14}
R{\'{e}}mi Bardenet, Arnaud Doucet, and Christopher~C. Holmes.
\newblock Towards scaling up markov chain monte carlo: an adaptive subsampling
  approach.
\newblock In \emph{Proceedings of the 31th International Conference on Machine
  Learning, {ICML} 2014, Beijing, China, 21-26 June 2014}, pages 405--413,
  2014.

\bibitem[Blei et~al.(2003)Blei, Ng, and Jordan]{LDA}
David~M. Blei, Andrew~Y. Ng, and Michael~I. Jordan.
\newblock Latent dirichlet allocation.
\newblock \emph{J. Mach. Learn. Res.}, 3:\penalty0 993--1022, March 2003.

\bibitem[Chen et~al.(2016)Chen, Li, Zhu, and Chen]{WarpLDA}
Jianfei Chen, Kaiwei Li, Jun Zhu, and Wenguang Chen.
\newblock Warplda: A cache efficient o(1) algorithm for latent dirichlet
  allocation.
\newblock \emph{Proc. VLDB Endow.}, 9\penalty0 (10):\penalty0 744--755, June
  2016.
\newblock ISSN 2150-8097.

\bibitem[Cormode and Muthukrishnan(2005)]{CMsketch}
Graham Cormode and S.~Muthukrishnan.
\newblock An improved data stream summary: The count-min sketch and its
  applications.
\newblock \emph{J. Algorithms}, 55\penalty0 (1):\penalty0 58--75, April 2005.
\newblock ISSN 0196-6774.

\bibitem[Deng and Rafiei(2007)]{CountMeanMin}
Fan Deng and Davood Rafiei.
\newblock New estimation algorithms for streaming data: Count-min can do more,
  2007.

\bibitem[Durme and Lall(2009{\natexlab{a}})]{DurmeL09}
Benjamin~Van Durme and Ashwin Lall.
\newblock Streaming pointwise mutual information.
\newblock In \emph{Advances in Neural Information Processing Systems 22: 23rd
  Annual Conference on Neural Information Processing Systems 2009. Proceedings
  of a meeting held 7-10 December 2009, Vancouver, British Columbia, Canada.},
  pages 1892--1900, 2009{\natexlab{a}}.

\bibitem[Durme and Lall(2009{\natexlab{b}})]{TOMB}
Benjamin~Van Durme and Ashwin Lall.
\newblock Probabilistic counting with randomized storage.
\newblock In \emph{{IJCAI} 2009, Proceedings of the 21st International Joint
  Conference on Artificial Intelligence, Pasadena, California, USA, July 11-17,
  2009}, pages 1574--1579, 2009{\natexlab{b}}.

\bibitem[Estan and Varghese(2002)]{EstanV02}
Cristian Estan and George Varghese.
\newblock New directions in traffic measurement and accounting.
\newblock In \emph{Proceedings of the {ACM} {SIGCOMM} 2002 Conference on
  Applications, Technologies, Architectures, and Protocols for Computer
  Communication, August 19-23, 2002, Pittsburgh, PA, {USA}}, pages 323--336,
  2002.

\bibitem[Flajolet(1985)]{Flajolet}
Philippe Flajolet.
\newblock Approximate counting: A detailed analysis.
\newblock \emph{BIT}, 25\penalty0 (1):\penalty0 113--134, June 1985.

\bibitem[Geppert et~al.(2017)Geppert, Ickstadt, Munteanu, Quedenfeld, and
  Sohler]{GeppertIMQS17}
Leo~N. Geppert, Katja Ickstadt, Alexander Munteanu, Jens Quedenfeld, and
  Christian Sohler.
\newblock Random projections for bayesian regression.
\newblock \emph{Statistics and Computing}, 27\penalty0 (1):\penalty0 79--101,
  2017.

\bibitem[Goyal and {Daum\'e III}(2011)]{GoyalD11}
Amit Goyal and Hal {Daum\'e III}.
\newblock Approximate scalable bounded space sketch for large data {NLP}.
\newblock In \emph{EMNLP}, pages 250--261, 2011.

\bibitem[Goyal et~al.(2012)Goyal, {Daum\'e III}, and Cormode]{GoyalDC12}
Amit Goyal, Hal {Daum\'e III}, and Graham Cormode.
\newblock Sketch algorithms for estimating point queries in {NLP}.
\newblock In \emph{EMNLP-CoNLL}, pages 1093--1103, 2012.

\bibitem[Gumbel(1954)]{Gumbel54}
E.~J. Gumbel.
\newblock The maxima of the mean largest value and of the range.
\newblock \emph{The Annals of Mathematical Statistics}, 25\penalty0
  (1):\penalty0 76--84, 1954.
\newblock ISSN 00034851.
\newblock URL \url{http://www.jstor.org/stable/2236513}.

\bibitem[Hartley and David(1954)]{Hartley1954}
H.~O. Hartley and H.~A. David.
\newblock Universal bounds for mean range and extreme observation.
\newblock \emph{Ann. Math. Statist.}, 25\penalty0 (1):\penalty0 85--99, 03
  1954.
\newblock \doi{10.1214/aoms/1177728848}.
\newblock URL \url{https://doi.org/10.1214/aoms/1177728848}.

\bibitem[Huggins et~al.(2016)Huggins, Campbell, and Broderick]{HugginsCB16}
Jonathan~H. Huggins, Trevor Campbell, and Tamara Broderick.
\newblock Coresets for scalable bayesian logistic regression.
\newblock In \emph{Advances in Neural Information Processing Systems 29: Annual
  Conference on Neural Information Processing Systems 2016, December 5-10,
  2016, Barcelona, Spain}, pages 4080--4088, 2016.

\bibitem[Karr(1975)]{Karr75}
Alan~F. Karr.
\newblock Weak convergence of a sequence of markov chains.
\newblock \emph{Wahrscheinlichkeitstheorie verw Gebiete}, 35\penalty0
  (1):\penalty0 41--48, 1975.

\bibitem[Li et~al.(2017)Li, Chen, Chen, and Zhu]{SaberLDA}
Kaiwei Li, Jianfei Chen, Wenguang Chen, and Jun Zhu.
\newblock Saberlda: Sparsity-aware learning of topic models on gpus.
\newblock In \emph{ASPLOS}, pages 497--509, 2017.
\newblock ISBN 978-1-4503-4465-4.

\bibitem[Li and McCallum(2006)]{li06pachinko}
Wei Li and Andrew McCallum.
\newblock Pachinko allocation: Dag-structured mixture models of topic
  correlations.
\newblock In \emph{International Conference on Machine Learning}, 2006.

\bibitem[Lindvall(2002)]{lindvall2002}
Torgny Lindvall.
\newblock \emph{Lectures on the Coupling Method}.
\newblock Dover Books on Mathematics Series. Dover Publications, Incorporated,
  2002.
\newblock ISBN 9780486421452.

\bibitem[Morris(1978)]{Morris}
Robert Morris.
\newblock Counting large numbers of events in small registers.
\newblock \emph{Commun. ACM}, 21\penalty0 (10):\penalty0 840--842, October
  1978.
\newblock ISSN 0001-0782.
\newblock \doi{10.1145/359619.359627}.
\newblock URL \url{http://doi.acm.org/10.1145/359619.359627}.

\bibitem[{Negrea} and {Rosenthal}(2017)]{Rosenthal2017}
J.~{Negrea} and J.~S. {Rosenthal}.
\newblock {Error Bounds for Approximations of Geometrically Ergodic Markov
  Chains}.
\newblock \emph{ArXiv e-prints}, February 2017.

\bibitem[{Pillai} and {Smith}(2014)]{Pillai}
N.~S. {Pillai} and A.~{Smith}.
\newblock {Ergodicity of Approximate MCMC Chains with Applications to Large
  Data Sets}.
\newblock \emph{ArXiv e-prints}, May 2014.

\bibitem[Pitel and Fouquier(2015)]{PitelF15}
Guillaume Pitel and Geoffroy Fouquier.
\newblock Count-min-log sketch: Approximately counting with approximate
  counters.
\newblock \emph{CoRR}, abs/1502.04885, 2015.
\newblock URL \url{http://arxiv.org/abs/1502.04885}.

\bibitem[Roughgarden and Valiant(2015)]{LectureCountMin}
Tim Roughgarden and Gregory Valiant.
\newblock Approximate heavy hitters and the count-min sketch.
\newblock Lecture notes., 2015.
\newblock URL \url{http://theory.stanford.edu/~tim/s15/l/l2.pdf}.

\bibitem[Sa et~al.(2016)Sa, R{\'{e}}, and Olukotun]{SaRO16}
Christopher~De Sa, Christopher R{\'{e}}, and Kunle Olukotun.
\newblock Ensuring rapid mixing and low bias for asynchronous gibbs sampling.
\newblock In \emph{ICML}, pages 1567--1576, 2016.

\bibitem[Steele and Tristan(2016)]{Adding}
Guy~L. Steele, Jr. and Jean-Baptiste Tristan.
\newblock Adding approximate counters.
\newblock In \emph{PPoPP}, pages 15:1--15:12, 2016.

\bibitem[Yurochkin and Nguyen(2016)]{Geometry1}
Mikhail Yurochkin and XuanLong Nguyen.
\newblock Geometric dirichlet means algorithm for topic inference.
\newblock In D.~D. Lee, M.~Sugiyama, U.~V. Luxburg, I.~Guyon, and R.~Garnett,
  editors, \emph{Advances in Neural Information Processing Systems 29}, pages
  2505--2513. 2016.

\bibitem[Yurochkin et~al.(2017)Yurochkin, Guha, and Nguyen]{Geometry2}
Mikhail Yurochkin, Aritra Guha, and XuanLong Nguyen.
\newblock Conic scan-and-cover algorithms for nonparametric topic modeling.
\newblock In I.~Guyon, U.~V. Luxburg, S.~Bengio, H.~Wallach, R.~Fergus,
  S.~Vishwanathan, and R.~Garnett, editors, \emph{Advances in Neural
  Information Processing Systems 30}, pages 3881--3890. 2017.

\bibitem[Zaheer et~al.(2016)Zaheer, Wick, Tristan, Smola, and Steele]{SCA}
Manzil Zaheer, Michael Wick, Jean-Baptiste Tristan, Alex Smola, and Guy Steele.
\newblock Exponential stochastic cellular automata for massively parallel
  inference.
\newblock In \emph{AISTATS}, volume~51 of \emph{Proceedings of Machine Learning
  Research}, pages 966--975. PMLR, 09--11 May 2016.

\bibitem[Zhao et~al.(2015)Zhao, Zhou, Li, and Huang]{ZenLDA}
Bo~Zhao, Hucheng Zhou, Guoqiang Li, and Yihua Huang.
\newblock Zenlda: An efficient and scalable topic model training system on
  distributed data-parallel platform.
\newblock \emph{CoRR}, abs/1511.00440, 2015.
\newblock URL \url{http://arxiv.org/abs/1511.00440}.

\end{thebibliography}
\bibliographystyle{plainnat}
\clearpage
\appendix

\section{Consistency of CM sketch with Approximate Counters}
\label{app-consistency}
In this section, we prove \thmref{thm:convprob} from \Sref{sec:comb1}:
\newtheorem*{theorem:convprobfull}{Theorem \ref*{thm:convprob}}
 \begin{theorem:convprobfull}
 \label{thm:convprob-full}
 Let $\param_{n} = (b_n, l_n, w_n)$. Suppose $b_n \rightarrow 1$, $w_n \to \infty$ and there exists some $L$ such that $l_n \leq L$ for all $n$. Let $Q_{\param_n}$ be a sketch with these parameters, using either independent counters or correlated counters. Then for all $x$, $Q_{\param_n}(x)$ converges in probability to $f_x$ as $n \to \infty$. 
 \end{theorem:convprobfull}

 \begin{proof}

Let $N$ be the sum of the frequencies of all keys. Fix some key $x$,
and write $f_x$ for its true frequency. Now, for all
$\delta, \epsilon > 0$ we must show that there exists $n_0$ such that
for $n' > n_0$, $\probgt{|Q_{\param_{n'}}(x) - f_x|}{\epsilon}
< \delta$. Let $X_{n, i}$ be the random variable for the $i$th counter
for key $x$ in sketch $n$, and let $Z_{n, i}$ be the random variable
indicating how many times this counter is incremented.

As in the regular analysis of the count-min sketch, we
have that $\Ex{Z_{n, i}} \leq f_x + \frac{N}{w_n}$ and
$Z_{n, i} - f_x \geq 0$. Thus by Markov's inequality applied to this
difference, we have that $\probgt{Z_{n, i} -
f_x}{1/2} \leq \frac{2N}{w_n}$. Thus for large enough
$w_n$, we can make this probability arbitrarily small. Since
$w_n \rightarrow \infty$, there exists $n_1$ such that for $n > n_1$,
$\probgt{Z_{n, i} - f_x}{1/2} < \frac{\delta}{2L}$.  Since $Z_{n, i}$ and
$f_x$ are both integers, this implies
$\probneq{Z_{n, i}}{f_x} < \frac{\delta}{2L}$.  Now, applying the Chebyshev bound to $X_{n, i}$ we have:
\[
\Pr\left[| X_{n, i} - f_x | > \epsilon \mid Z_{n, i} = f_x\right] \leq \frac{b_n-1}{2 \epsilon^2} (f_x^2 - f_x)
\]
So that once more, for $b_n$ close enough to $1$, this probability will be less than $\frac{\delta}{2L}$, and there exists some $n_2$ such that for all $n > n_2$, $b_n$ will be sufficiently close to $1$. Set $n_0 = \max(n_1, n_2)$, then we have that for $n > n_0$:
\begin{align*}
&\Pr\left[| X_{n, i} - f_x | > \epsilon\right]
\\&\quad= \Pr\left[| X_{n, i} - f_x | > \epsilon \mid Z_{n, i} = f_x\right] \probeq{Z_{n, i}}{f_x}
\\&\quad\qquad{} + \Pr\left[| X_{n, i} - f_x | > \epsilon \mid Z_{n, i} \neq f_x\right] \probneq{Z_{n, i}}{f_x} 
\\&\quad\leq \frac{\delta}{L}
\end{align*}
Thus we have that for such $n$:
\begin{align*}
\Pr\left[| Q_{\param_n}(x) - f_x | > \epsilon\right]
&= \Pr\left[| \min_i X_{n, i} - f_x | > \epsilon\right] \\
&\leq \sum_i^{l_n}  \Pr\left[| X_{n, i} - f_x | > \epsilon\right] \\
&\leq l_n \cdot \frac{\delta}{L}  \\
&\leq \delta
\end{align*}
where the second inequality follows from the union bound. Because we use the union bound, it does not matter whether the $X_{n, i}$ are correlated for different $i$.

 \end{proof}

 The above argument can be adapted to handle the case when we add
 together a finite number of sketches using the \citet{Adding} rule
 for adding approximate counters. The idea is to argue that when the
 base of the counters of the two summands are close to 1, they will
 both with high probability be equal to the ``true'' number of increments
 that have been performed to each, and similarly so will the sum.

\section{Bias from Merging Combined Sketches}

As we have explained in the body of the paper, merging the combined
sketches can be done using the addition routine of
\citet{Adding}. However, if these additions are done independently,
this introduces potential for underestimation even in the case where
correlated counters are used for incrementing within each sketch prior to merging.
In this section we bound the resulting bias.

The expected values of maxima and minima of IID random variables is well studied. In particular,
we have:
 
\begin{theorem}[\citet{Gumbel54}, \citet{Hartley1954}]
Let $Y_1, \dots, Y_l$ be a collection of iid random variables, such
that $\Ex{Y_1} = \mu$ and $\V{Y_1} = \sigma^2$.
\[ \Ex{\max_{i} Y_i} \leq \mu + \sigma \left(\frac{l-1}{\sqrt{2l - 1}}\right) \]
\label{thm:gumbel}
\end{theorem}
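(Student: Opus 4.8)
The plan is to follow the classical argument behind this bound, which reduces it to a single application of the Cauchy--Schwarz inequality once the expected maximum is re-expressed through the quantile function of the common law. First I would normalize: it suffices to prove the bound when $\mu = 0$ and $\sigma = 1$, since replacing each $Y_i$ by $(Y_i - \mu)/\sigma$ (and handling the degenerate case $\sigma = 0$ separately) converts the general statement into this normalized one. So from now on assume $\Ex{Y_1} = 0$ and $\V{Y_1} = 1$.

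Next I would pass to a quantile coupling. Let $G$ denote the quantile function (generalized inverse of the CDF) of the common distribution, and let $U_1, \dots, U_l$ be i.i.d.\ Uniform$[0,1]$. Then $(G(U_1), \dots, G(U_l))$ has the same joint law as $(Y_1, \dots, Y_l)$, and since $G$ is nondecreasing, $\max_i G(U_i) = G(\max_i U_i)$. As $\max_i U_i$ has density $l u^{l-1}$ on $[0,1]$, this gives
\[
\Ex{\max_i Y_i} \;=\; \Ex{G(\max_i U_i)} \;=\; \int_0^1 G(u)\, l u^{l-1}\, du ,
\]
and the integral is finite because $\int_0^1 G(u)^2\, du = \Ex{Y_1^2} = 1$. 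This is the only step that uses the i.i.d.\ hypothesis (through the density $l u^{l-1}$), and routing it through the coupling $G(U_i)$ avoids having to make a change of variables $u = F(x)$ rigorous in the presence of atoms or unbounded support.

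Then I would apply Cauchy--Schwarz. Since $\int_0^1 G(u)\, du = \Ex{Y_1} = 0$, I may subtract the constant $1$ inside the integral for free, and then
\[
\Ex{\max_i Y_i} \;=\; \int_0^1 G(u)\,(l u^{l-1} - 1)\, du \;\le\; \Bigl(\int_0^1 G(u)^2\, du\Bigr)^{1/2}\Bigl(\int_0^1 (l u^{l-1} - 1)^2\, du\Bigr)^{1/2}.
\]
The first factor equals $1$; expanding the second integrand as $l^2 u^{2l-2} - 2 l u^{l-1} + 1$ and integrating term by term gives $\frac{l^2}{2l-1} - 2 + 1 = \frac{(l-1)^2}{2l-1}$. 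Hence $\Ex{\max_i Y_i} \le (l-1)/\sqrt{2l-1}$ in the normalized case, and undoing the normalization yields $\Ex{\max_i Y_i} \le \mu + \sigma\,(l-1)/\sqrt{2l-1}$. (For $l = 1$ the bound reads $\Ex{Y_1} \le \mu$, with equality, as it should; note also that subtracting the constant $1$ here is exactly what sharpens the cruder $\mu + \sigma (l-1)/\sqrt{l}$ estimate one gets from deviations about the sample mean.)

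The only genuine obstacle is the quantile step: justifying that the maximum of $l$ i.i.d.\ copies can be integrated against the exact density $l u^{l-1}$ regardless of the shape of the underlying distribution. I would dispatch this entirely through the probability-integral-transform coupling $G(U_i) \eqdistRel Y_i$ and the elementary identity $\max_i G(U_i) = G(\max_i U_i)$, which needs no assumptions beyond $\Ex{Y_1^2} < \infty$; everything after that is a routine integral and the Cauchy--Schwarz inequality.
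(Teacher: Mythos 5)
Your proof is correct. Note, however, that the paper does not prove this statement at all --- it imports it as a known result, citing \citet{Gumbel54} and \citet{Hartley1954} --- so there is no internal proof to compare against. What you have written is essentially the original Hartley--David/Gumbel argument: reduce to $\mu=0$, $\sigma=1$; represent $\Ex{\max_i Y_i}$ as $\int_0^1 G(u)\,l u^{l-1}\,du$ via the quantile coupling (the identity $\max_i G(U_i)=G(\max_i U_i)$ for nondecreasing $G$ correctly handles atoms and unbounded support); subtract the free constant $1$ using $\int_0^1 G(u)\,du=0$; and apply Cauchy--Schwarz, with $\int_0^1(lu^{l-1}-1)^2\,du=\frac{l^2}{2l-1}-1=\frac{(l-1)^2}{2l-1}$ giving exactly the stated constant. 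The computation and the integrability justifications all check out, and the degenerate case $\sigma=0$ and the sanity check at $l=1$ are handled. This is a complete and correct proof of the cited theorem.
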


Let $X'_n$ be the value of an approximate counter obtained by adding together several independent base $b$ counters that have been incremented a total of $n$ times collectively.

Then, the results of \citet{Adding} show that:
\[ \V{\phi(X'_n)} \leq \frac{b - 1}{2}(n^2 - n) + \frac{1}{-2(b^2 - 4b + 1)} \]
Note that this bound holds regardless of how many counters were added together or what their relative sizes were.

\newcommand{\devbound}{\xi}

As \citet{Adding} point out, for $1 < b \leq 2$, the right summand is
less than $\frac{1}{4}$.  Define $\devbound(n) =
\sqrt{\frac{b-1}{2}(n^2 - n)} + \frac{1}{2}$, then we have that
$\sqrt{\V{\phi(X'_n)}} \leq \devbound(n)$ for $b$ in this range.

\subsection{Merging Independent Counter Sketches}

Let $Y'_1, \dots, Y'_l$ be IID random variables such that each $Y'_i$ is an approximate counter whose value is obtained by adding together several independent base $b$ counters that have been incremented a total of $n$ times collectively. Let $\MincounterAdd{b}{l}{n}$ be $\min_i \phi(Y'_i)$.

\begin{theorem} If $1 < b \leq 2$, then
\[ \Ex{\MincounterAdd{b}{l}{n}} \geq
  n \left(1 - \left(\sqrt{\frac{b-1}{2}} + \frac{1}{2n}\right)\left(\frac{l-1}{\sqrt{2l - 1}}\right)\right) \]
\end{theorem}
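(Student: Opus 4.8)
The plan is to bound $\Ex{\MincounterAdd{b}{l}{n}}$ below by relating $\min_i \phi(Y'_i)$ to $-\max_i(-\phi(Y'_i))$ and applying the Gumbel--Hartley bound of \thmref{thm:gumbel} to the collection $-\phi(Y'_1), \dots, -\phi(Y'_l)$. Since the $Y'_i$ are IID, so are the $-\phi(Y'_i)$, and each has mean $-n$ (approximate counters, even after merging via the \citet{Adding} rule, remain unbiased) and variance $\V{\phi(X'_n)}$. Writing $\min_i \phi(Y'_i) = -\max_i(-\phi(Y'_i))$ and invoking \thmref{thm:gumbel} with $\mu = -n$ and $\sigma = \sqrt{\V{\phi(X'_n)}}$ gives
\[
\Ex{\max_i(-\phi(Y'_i))} \leq -n + \sqrt{\V{\phi(X'_n)}}\left(\frac{l-1}{\sqrt{2l-1}}\right),
\]
hence $\Ex{\MincounterAdd{b}{l}{n}} \geq n - \sqrt{\V{\phi(X'_n)}}\left(\frac{l-1}{\sqrt{2l-1}}\right)$.

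The next step is to substitute the variance bound. Using the displayed inequality $\sqrt{\V{\phi(X'_n)}} \leq \devbound(n) = \sqrt{\frac{b-1}{2}(n^2-n)} + \frac{1}{2}$ valid for $1 < b \leq 2$, and then the crude bound $\sqrt{\frac{b-1}{2}(n^2-n)} \leq \sqrt{\frac{b-1}{2}}\,n$, we get
\[
\sqrt{\V{\phi(X'_n)}} \leq \sqrt{\tfrac{b-1}{2}}\,n + \tfrac{1}{2} = n\left(\sqrt{\tfrac{b-1}{2}} + \tfrac{1}{2n}\right).
\]
Plugging this into the lower bound from the previous paragraph yields exactly
\[
\Ex{\MincounterAdd{b}{l}{n}} \geq n\left(1 - \left(\sqrt{\tfrac{b-1}{2}} + \tfrac{1}{2n}\right)\left(\tfrac{l-1}{\sqrt{2l-1}}\right)\right),
\]
which is the claim.

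There is essentially no hard part here once the ingredients are lined up; the only things to be careful about are (i) confirming that the merged counter $X'_n$ is still unbiased so that $\Ex{-\phi(Y'_i)} = -n$, which follows from the unbiasedness statement for approximate counters together with the fact that the \citet{Adding} addition rule preserves the expected count, and (ii) that the variance bound from \citet{Adding} is the right $\sigma^2$ to feed into \thmref{thm:gumbel}, since that theorem only requires a common mean and variance for IID summands, not any distributional assumption. The inequality $\sqrt{(b-1)(n^2-n)/2} \le \sqrt{(b-1)/2}\,n$ is immediate from $n^2 - n \le n^2$. So the proof is a short chain: negate, apply Gumbel--Hartley, substitute the \citet{Adding} variance bound, and simplify.
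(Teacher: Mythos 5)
Your proposal is correct and follows essentially the same route as the paper's proof: apply the Gumbel--Hartley bound of \thmref{thm:gumbel} (via the negation trick, which the paper leaves implicit) to get $\Ex{\MincounterAdd{b}{l}{n}} \geq n - \devbound(n)\bigl(\frac{l-1}{\sqrt{2l-1}}\bigr)$, then bound $\devbound(n) \leq n\bigl(\sqrt{\frac{b-1}{2}} + \frac{1}{2n}\bigr)$ using $n^2 - n \leq n^2$. Your explicit attention to the unbiasedness of the merged counters and to the applicability of the variance bound is a welcome bit of extra care, but it does not change the argument.
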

\begin{proof}
Applying \thmref{thm:gumbel}, we obtain:
 \begin{align*}
 &  \Ex{\MincounterAdd{b}{l}{n}} \\
 & \geq n - \devbound(n)\left(\frac{l-1}{\sqrt{2l - 1}}\right) \\
 &\geq n\left(1 - \left(\sqrt{\frac{b - 1}{2}\left(1 - \frac{1}{n}\right)}
         + \frac{1}{2n}\right)\left(\frac{l-1}{\sqrt{2l - 1}}\right)\right) \\
 &\geq n\left(1 - \left(\sqrt{\frac{b - 1}{2}}
         + \frac{1}{2n}\right)\left(\frac{l-1}{\sqrt{2l - 1}}\right)\right)
 \end{align*}
 \end{proof}

Then, if $Q$ is a CM sketch obtained by merging several sketches using independent counters, and $f_x$ is the total frequency of key $x$ across the sketches, then $\Ex{\MincounterAdd{b}{l}{f_x}} \leq \Ex{Q(x)}$

\subsection{Merging Correlated Counter Sketches}

If we use the correlated increment rule of \citet{PitelF15}, then the
counters corresponding to a key within each sketch are not
independent, hence they are not independent in the merged sketch
either, so the results of \thmref{thm:gumbel} do not immediately
apply. However, we can still obtain the same bound as in the independent case, as we will see.

To simplify the notation, we will just assume we are merging only two
sketches; the proof generalizes to merging an arbitrary number of
sketches. Let $f_{x,1}$ be the frequency that $x$ occurs in the data
stream processed by the first sketch, and let $f_{x,2}$ be the
frequency in the second sketch, so that $f_x = f_{x,1} +
f_{x,2}$. Since we seek a lower bound $\Ex{Q(x)}$, and hash collisions
between $x$ and other keys only possibly increase $\Ex{Q(x)}$, it
suffices to bound the case when there are no collisions.

In that case, the correlated increment rule is the same whether we do
conservative update or not, and the values of the counters for key $x$
within sketch $1$ are all equal, and similarly for all the counters
within sketch $2$. Let $Y_1$ and $Y_2$ denote these values. For each $m$ and $n$, if we condition
on $Y_1 = m$ and $Y_2 = n$, then the results of adding the cells for key $x$ are IID,
and so \thmref{thm:gumbel} applies once more, so that we get:

\begin{align}
  & \Excond{Q(x)}{Y_1 = m, Y_2 = n} \\
  & \geq (n + m) - \devbound(n+m)\left(\frac{l-1}{\sqrt{2l - 1}}\right) \\
  & \geq (n + m)\left(1 - \sqrt{\frac{b-1}{2}}\left(\frac{l-1}{\sqrt{2l - 1}}\right)\right) \\
   &\quad - \frac{1}{2}\left(\frac{l-1}{\sqrt{2l - 1}}\right)
\end{align}

By the law of the total expectation, it follows that 
\begin{align}
  &  \Ex{Q(x)} \\
  & \geq (\Ex{Y_1} + \Ex{Y_2})\left(1 - \sqrt{\frac{b-1}{2}}\left(\frac{l-1}{\sqrt{2l - 1}}\right)\right)  \\
  &\quad - \frac{1}{2}\left(\frac{l-1}{\sqrt{2l - 1}}\right) \\
  &= f_x\left(1 - \sqrt{\frac{b-1}{2}}\left(\frac{l-1}{\sqrt{2l - 1}}\right)\right)  \\
  &\quad - \frac{1}{2}\left(\frac{l-1}{\sqrt{2l - 1}}\right) \\
  &= f_x\left(1 - \left(\sqrt{\frac{b-1}{2}} + \frac{1}{2 f_x}\right)\left(\frac{l-1}{\sqrt{2l - 1}}\right)\right) 
\end{align}

\section{LDA}
\label{sec:apx-lda}

\subsection{Inference with SCA}
\label{sec:apx-lda-inference}

SCA for LDA has the following parameters: $I$ is the number of iterations to
perform, $M$ is the number of documents, $V$ is the size of the
vocabulary, $K$ is the number of topics, $N[M]$ is an integer array of
size $M$ that describes the shape of the data $w$, $\alpha$ is a parameter that
controls how concentrated the distributions of topics per documents
should be, $\beta$ is a parameter that controls how concentrated the
distributions of words per topics should be, $w[M][N]$ is ragged array
containing the document data (where subarray $w[m]$ has length
$N[m]$), $\theta[M][K]$ is an $M \times K$ matrix where $\theta[m][k]$
is the probability of topic $k$ in document $m$, and $\phi[V][K]$ is a
$V \times K$ matrix where $\phi[v][k]$ is the probability of word $v$
in topic $k$.  Each element $w[m][n]$ is a nonnegative integer less
than $V$, indicating which word in the vocabulary is at position $n$
in document $m$.  The matrices $\theta$ and $\phi$ are typically
initialized by the caller to randomly chosen distributions of topics
for each document and words for each topic; these same arrays serve to
deliver ``improved'' distributions back to the caller.

The algorithm uses three local data structures to store various
statistics about the model
(lines~\ref{ssca-decl-start}--\ref{ssca-decl-end}): $\tpd[M][K]$
is a $M \times K$ matrix where $\tpd[m][k]$ is the number of times topic
$k$ is used in document $m$, $\wpt[V][K]$ is an $V \times K$ matrix
where $\wpt[v][k]$ is the number of times word $v$ is assigned to
topic $k$, and $\wt[K]$ is an array of size $K$ where $\wt[k]$ is the
total number of time topic $k$ is in use. We actually have two copies
of each of these data structures because the algorithm alternates
between reading one to write in the other, and vice versa.

The SCA algorithm iterates over the data to compute statistics for the
topics (loop starting on line~\ref{ssca-iterate-start} and ending on
line~\ref{ssca-iterate-end}). The output of SCA are the two probability
matrices $\theta$ and $\phi$, which need to be computed in a
post-processing phase that follows the iterative phase. This post-processing phase is similar to the one of a classic collapsed Gibbs sampler. In this
post-processing phase, we compute the $\theta$ and $\phi$
distributions as the means of Dirichlet distributions induced by the
statistics.

In the iterative phase of SCA, the values of $\theta$ and $\phi$,
which are necessary to compute the topic proportions, are computed on
the fly (lines~\ref{ssca-compute-theta} and~\ref{ssca-compute-phi}).
Unlike the Gibbs algorithm, where in each iteration we have a
back-and-forth between two phases, where one reads $\theta$ and $\phi$
in order to update the statistics and the other reads the statistics
in order to update $\theta$ and $\phi$; SCA performs the
back-and-forth between two copies of the statistics. Therefore, the
number of iterations is halved (line~\ref{ssca-iterate-count}), and
each iteration has two subiterations (line~\ref{ssca-iterate-twice}),
one that reads $\tpd[0]$, $\wpt[0]$, and $\wt[0]$ in order to write
$\tpd[1]$, $\wpt[1]$, and $\wt[1]$, then one that reads $\tpd[1]$,
$\wpt[1]$, and $\wt[1]$ in order to write $\tpd[0]$, $\wpt[0]$, and
$\wt[0]$.  

\paragraph{Sketching and hasing}
\label{sec:apx-lda-implementation}
Modifying SCA to support feature hashing and the CM sketch is fairly simple.
The read of $\wpt$ on line \ref{ssca-compute-phi} and the write of $\wpt$ on line \ref{ssca-write-phi} are
replaced by the read and write procedures of the CM sketch, respectively. Note that the input data $w$
on line \ref{ssca-first} is not of type int anymore but rather of type string. Consequently, the data needs 
to be hashed before the main iteration starts on line \ref{ssca-iterate-count}. We can replace the $V$ used
on line \ref{ssca-compute-phi} with the size of the hash space.

\paragraph{Implementation and Hardware}
\label{sec:apx-lda-hardware}
Our implementation is in Java. To achieve good performance, we use
only arrays of primitive types and preallocate all of the necessary
structures before the learning starts. We implement multi-threaded
parallelization within a node using the work-stealing Fork/Join
framework, and distribute across multiple nodes using the Java binding
to OpenMPI. Our implementation makes use of the Alias method to sample
the topics and leverage the document sparsity. We run our experiments
on a small cluster of 8 nodes connected through 10Gb/s Ethernet. Each
node has two 8-core Intel Xeon E5 processors (some nodes have Ivy
Bridge processors while others have Sandy Bridge processors) for a
total of 32 hardware threads per node and 512GB of memory. We use 4
MPI processes per node and set Java's memory heap size to 10GB.

\begin{figure*}
  \begin{itemize}
  \item[] $M$ \hfill (Number of documents)
  \item[] $\forall m \in \{1..M\}, N_m$ \hfill  (Length of document $m$)
  \item[] $V$ \hfill (Size of the vocabulary)
  \item[] $K$ \hfill (Number of topics)
  \item[] $\bm{\alpha} \in \mathbb{R}^K$ \hfill (Hyperparameter
    controlling documents)
  \item[] $\bm{\beta} \in \mathbb{R}^V$ \hfill (Hyperparameter
    controlling topics)
  \item[] $\forall m \in \{1..M\}, \theta_m \sim Dir(\bm{\alpha})$ \hfill (Distribution of topics in document $m$)
  \item[] $\forall k \in \{1..K\}, \phi_k \sim Dir(\bm{\beta})$ \hfill (Distribution of words in topic $k$)
  \item[] $\forall m \in \{1..M\}, \forall n \in \{1..N_m\}, z_{mn} \sim Cat(\theta_m)$ \hfill (Topic assignment)
  \item[] $\forall m \in \{1..M\}, \forall n \in \{1..N_m\}, w_{mn}
    \sim Cat(\phi_{z_{mn}})$ \hfill (Corpus content)
  \end{itemize}
\end{figure*}

\begin{figure*}
\begin{myalgorithmic}
\Procedure{$\mathit{SCA}$}{int $I$, int $M$, int $K$, int $N[M]$, float $\alpha$, float
  $\beta$, int $w[M][N]$}\label{ssca-first}
  \LocalArray{int $\mathit{tpd}[2][M][K]$}\label{ssca-decl-start}
  \LocalArray{int $\mathit{wpt}[2][H][R_2][K]$}
  \LocalArray{int $\mathit{wt}[2][K]$}\label{ssca-decl-end}
  \InitializeArray{$\mathit{tpd}[0]$}
  \InitializeArray{$\mathit{wpt}[0]$}
  \InitializeArray{$\mathit{wt}[0]$}
  \Remark{The main iteration}
  \For{$i$ from 0 through $(I \div 2)-1$}\label{ssca-iterate-start}\label{ssca-iterate-count}
    \For{$r$ from $0$ through $1$}\label{ssca-iterate-twice}
      \ClearArray{$\mathit{tpd}[1-r]$}
      \ClearArray{$\mathit{wpt}[1-r]$}
      \ClearArray{$\mathit{wt}[1-r]$}
      \Remark{Compute new statistics by sampling distributions}
      \Remark{\hskip0.6em that are computed from old statistics}
      \ForInRange{$m$}{$M$}
        \ForInRange{$n$}{$N$}
          \LocalArray{float $\mathit{p}[K]$}
          \Bind{$v$}{$w[m][n]$}
          \ForInRange{$k$}{$K$}
            \Bind{$\theta$}{$(\mathit{tpd}[r][m][k] + \alpha) / (N[m] + K \times \alpha)$}\label{ssca-compute-theta}
            \Bind{$\phi$}{$(\mathit{wpt}[r][v][k] + \beta) / (\mathit{wt}[r][k] + V \times \beta)$}\label{ssca-compute-phi}
            \Assign{$p[k]$}{$\theta \times \phi$}
          \EndFor
          \Bind{$z$}{$\mathit{sample}(p)$}   \Comment{\setbox0\hbox{\hskip0.6em (which may be integer}\hbox to \wd0{Now $0 \leq z < K$\hfil}}
          \Increment{$tpd[1-r][m][z]$}  \Comment{\setbox0\hbox{\hskip0.6em (which may be integer}\hbox to \wd0{Increment counters\hfil}}
          \Increment{$wpt[1-r][v][z]$}\label{ssca-write-phi} 
          \Increment{$wt[1-r][z]$} 
        \EndFor
      \EndFor
    \EndFor
  \EndFor\label{ssca-iterate-end}
\EndProcedure
\end{myalgorithmic}
\caption{Pseudocode for Streaming SCA}\label{fig:super-ssca}
\end{figure*}

\cut{
\begin{figure}
\begin{myalgorithmic}
\Procedure{$\mathit{SCA}$}{int $I$, int $M$, int $V$, int $K$, int $N[M]$, float $\alpha$, float $\beta$, int $w[M][N]$}
  \LocalArray{int $\mathit{tpd}[2][M][K]$}\label{sca-decl-start}
  \LocalArray{int $\mathit{wpt}[2][V][K]$}
  \LocalArray{int $\mathit{wt}[2][K]$}\label{sca-decl-end}
  \InitializeArray{$\mathit{tpd}[0]$}
  \InitializeArray{$\mathit{wpt}[0]$}
  \InitializeArray{$\mathit{wt}[0]$}
  \Remark{The main iteration}
  \For{$i$ from 0 through $(I \div 2)-1$}\label{sca-iterate-start}\label{sca-iterate-count}
    \For{$r$ from $0$ through $1$}\label{sca-iterate-twice}
      \ClearArray{$\mathit{tpd}[1-r]$}
      \ClearArray{$\mathit{wpt}[1-r]$}
      \ClearArray{$\mathit{wt}[1-r]$}
      \Remark{Compute new statistics by sampling distributions}
      \Remark{\hskip0.6em that are computed from old statistics}
      \ForInRange{$m$}{$M$}
        \ForInRange{$n$}{$N$}
          \LocalArray{float $\mathit{p}[K]$}
          \ForInRange{$k$}{$K$}
            \Bind{$\theta$}{$(\mathit{tpd}[r][m][k] + \alpha) / (N[m] + K \times \alpha)$}\label{sca-compute-theta}
            \Bind{$\phi$}{$(\mathit{wpt}[r][v][k] + \beta) / (\mathit{wt}[r][k] + V \times \beta)$}\label{sca-compute-phi}
            \Assign{$p[k]$}{$\theta \times \phi$}
          \EndFor
          \Bind{$z$}{$\mathit{sample}(p)$}   \Comment{\setbox0\hbox{\hskip0.6em (which may be integer}\hbox to \wd0{Now $0 \leq z < K$\hfil}}
          \Increment{$tpd[1-r][m][z]$}  \Comment{\setbox0\hbox{\hskip0.6em (which may be integer}\hbox to \wd0{Increment counters\hfil}}
          \Increment{$wpt[1-r][w[m][n]][z]$} 
          \Increment{$wt[1-r][z]$} 
        \EndFor
      \EndFor
    \EndFor
  \EndFor\label{sca-iterate-end}
  \Remark{Final output pass}
  \ForInRange{$m$}{$M$}\label{sca-output-start}
    \ForInRange{$k$}{$K$}
      \WriteAs{$\theta[m][k]$}{$(\mathit{tpd}[0][m][k] + \alpha) / (N[m] + K \times \alpha)$}
    \EndFor
  \EndFor
  \ForInRange{$v$}{$V$}
    \ForInRange{$k$}{$K$}
      \WriteAs{$\phi[v][k]$}{$(\mathit{wpt}[0][v][k] + \beta) / (\mathit{wt}[0][k] + V \times \beta)$}
    \EndFor
  \EndFor\label{sca-output-end}
\EndProcedure
\end{myalgorithmic}
\caption{Pseudocode for the Stochastic Cellular Automaton algorithm}\label{fig:sca}
\end{figure}
}

\cut{
\begin{figure}
\begin{myalgorithmic}
\Procedure{$\mathit{Increment}$}{int $r$, int $v$, int $k$}
\Lock $wpt[r][v][k]$
\Assign{$wpt[r][v][k]$}{$wpt[r][v][k] + 1$}
\Unlock $wpt[r][v][k]$
\EndProcedure
\end{myalgorithmic}
\caption{Pseudocode for incrementing $wpt$}\label{fig:increment}
\end{figure}
}

\section{Microbenchmarks}
\label{sec:micro}

 In these microbenchmarks we measure mean relative error when
 estimating bigram frequencies using various CM sketches. The test set
 is a corpus of 2 million tokens drawn from a snapshot of Wikipedia
 with 769,722 unique bigrams.

\figref{fig:micro1} shows a comparison between CM with and without approximate counters.
 We use $w = 6666$ for the
 approximate counter sketch and $w = 3333$ for the conventional sketch,
 with $l = 3$ for both, so that the approximate counter sketches have
 twice as many total counters.  Since the most frequent bigram in this
 sample occurs 19430 times, the conventional sketch requires at least
 16 bit integers, while only 8 bits would suffice for each approximate
 counter, hence total space usage is approximately the same. We
 evaluate both kinds of sketches with and without conservative
 update. As expected, conservative update dramatically lowers error in
 both cases, particularly for less frequent words. The larger value of
 $w$ enabled by approximate counters further lowers error on less
 frequent words, though for more frequent words there is some increased
 error.
 Additional benchmarks measuring the effect of the counter
 base and errors resulting from merging sketches are given in Appendix~\ref{sec:apx-experiments}.
 
 \begin{figure*}
 \begin{center}
   \subfigure[ \label{fig:micro1a}]{
     \includegraphics[width=0.45\textwidth]{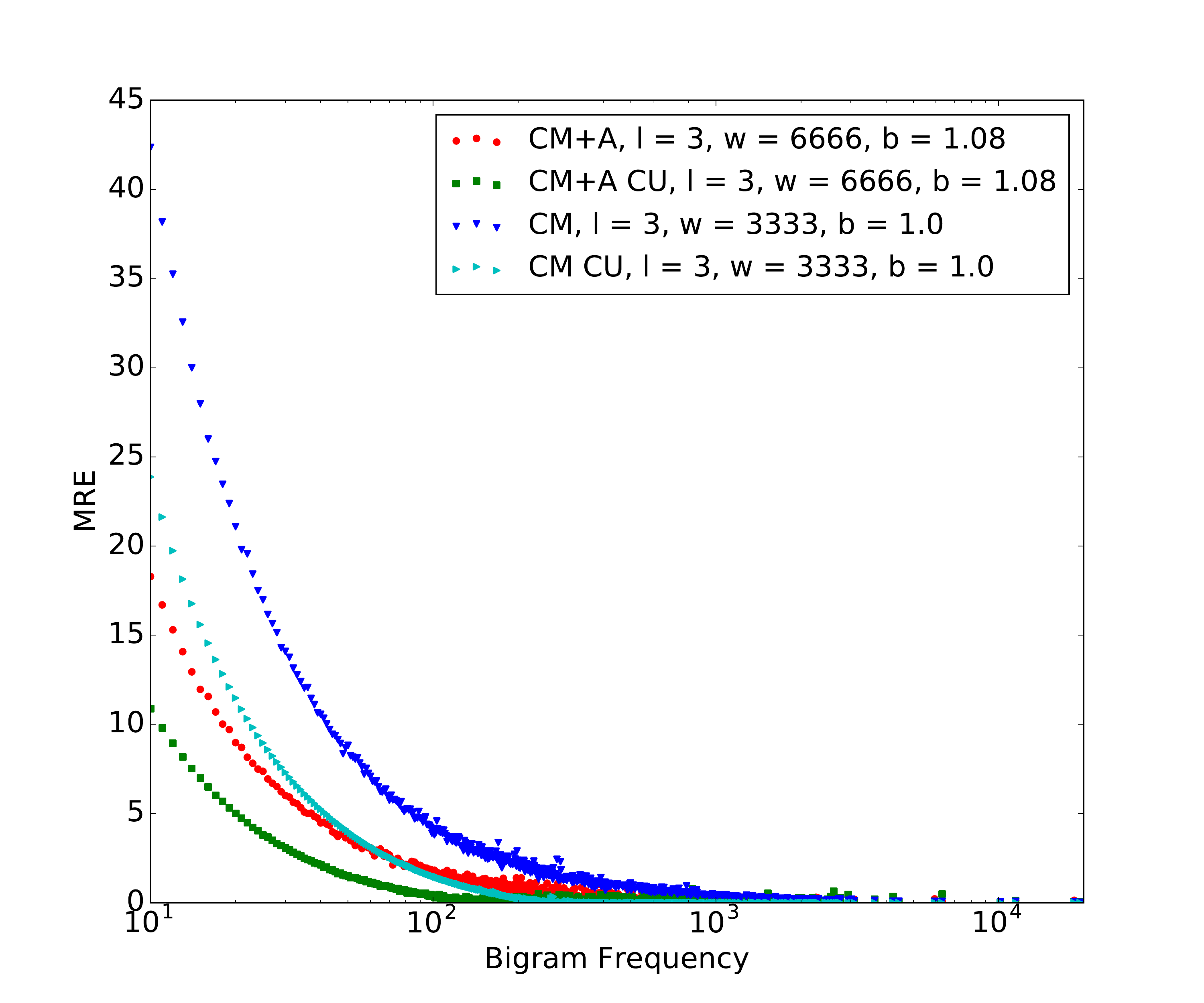}
    }
   \subfigure[ \label{fig:micro1b}]{
     \includegraphics[width=0.45\textwidth]{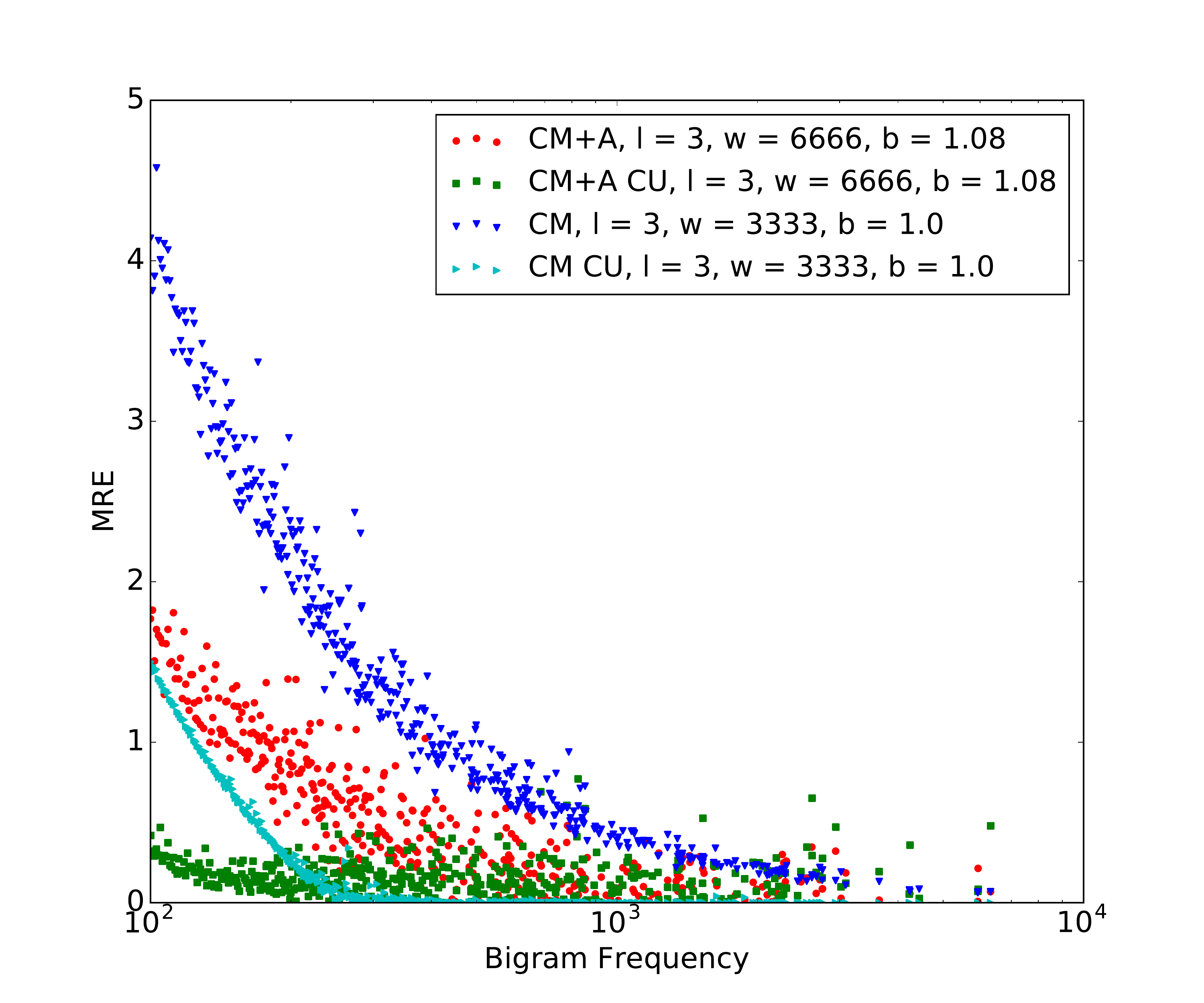}
   }
   \caption{Experiment measuring mean relative error for bigram estimation. The $y$-value of each point on the plot is equal to the mean relative error for all bigrams whose true frequency is equal to the $x$-value of the point, averaged over $5$ trials; the right figure is the same data zoomed to a subset of the $x$-axis. ``CM+A'' is a combined sketch using independent counters; ``CM+A CU'' is a combined sketch with correlated conservative update. ``CM CU'' and ``CM'' are conventional sketches with/without conservative update, respectively.}
   \label{fig:micro1}
 \end{center}
\end{figure*}

\figref{fig:base_normal} and \figref{fig:base_cu} show the effects of varying
the counter base $b$ when using the independent counter and conservative update
alternatives. There are two interesting phenomena in these plots. First, when
using the independent counters, we see that the error for low frequency words is
slightly but consistently lower when the base is larger. This makes sense
because we know that the sketch overestimates such words, but when we take the
minimum of independent counters the expected value of the result is smaller,
which reduces the error for infrequent words -- with a larger counter base, it
is more likely that the minimum counter happens to underestimate. The
second effect is that when using conservative update with a large base, there
appear to be more instances with large error when estimating frequent words,
compared to the independent case. Again, a plausible explanation is that these
errors correspond to cases where the approximate counters occasionally take on a
much larger value than the true count, and that when taking the minimum of
several independent counters this is unlikely to happen.

\begin{figure*}
\begin{center}
  \subfigure[ \label{fig:base_normal1}]{
    \includegraphics[width=0.45\textwidth]{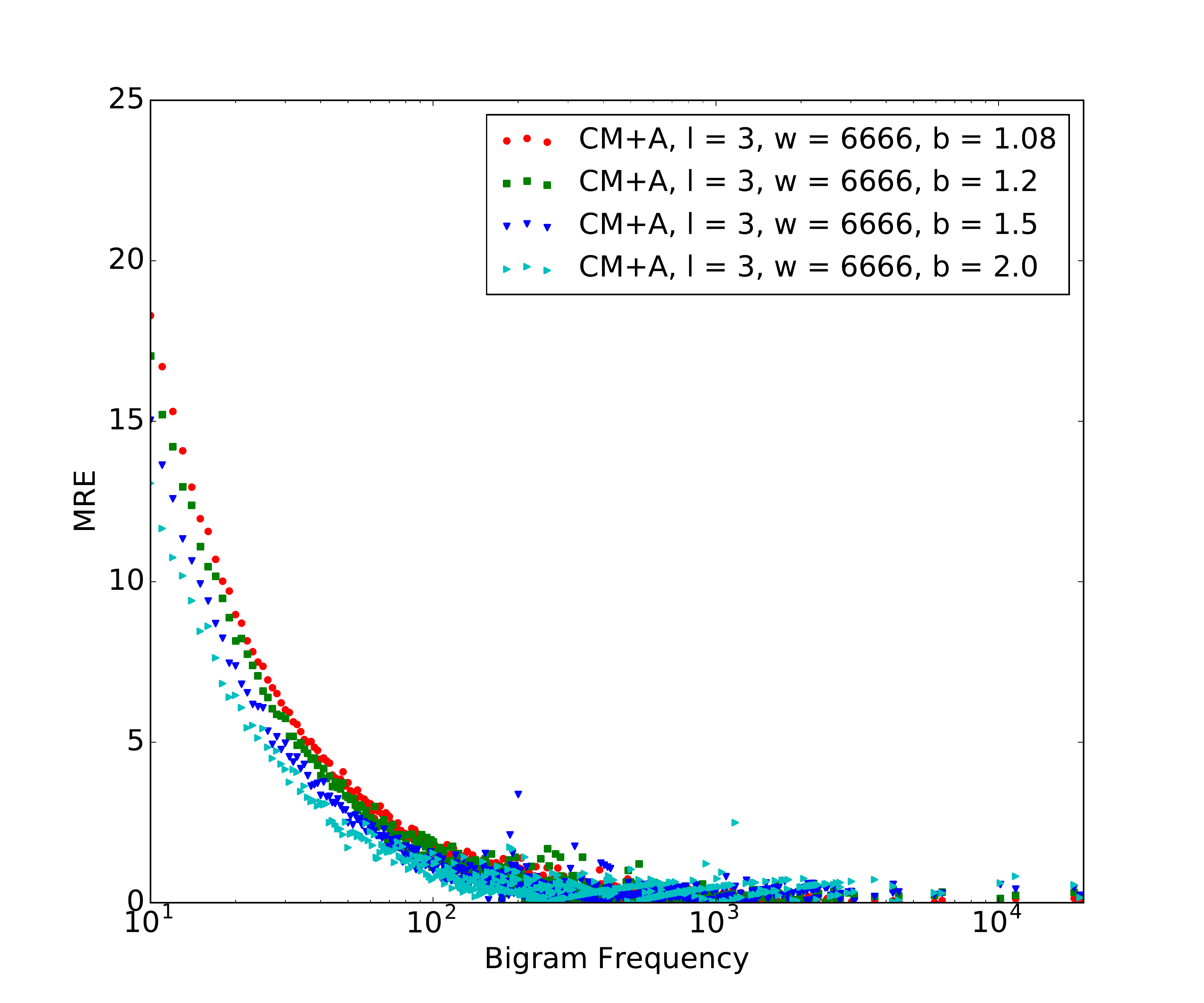}
   }
  \subfigure[ \label{fig:base_normal2}]{
    \includegraphics[width=0.45\textwidth]{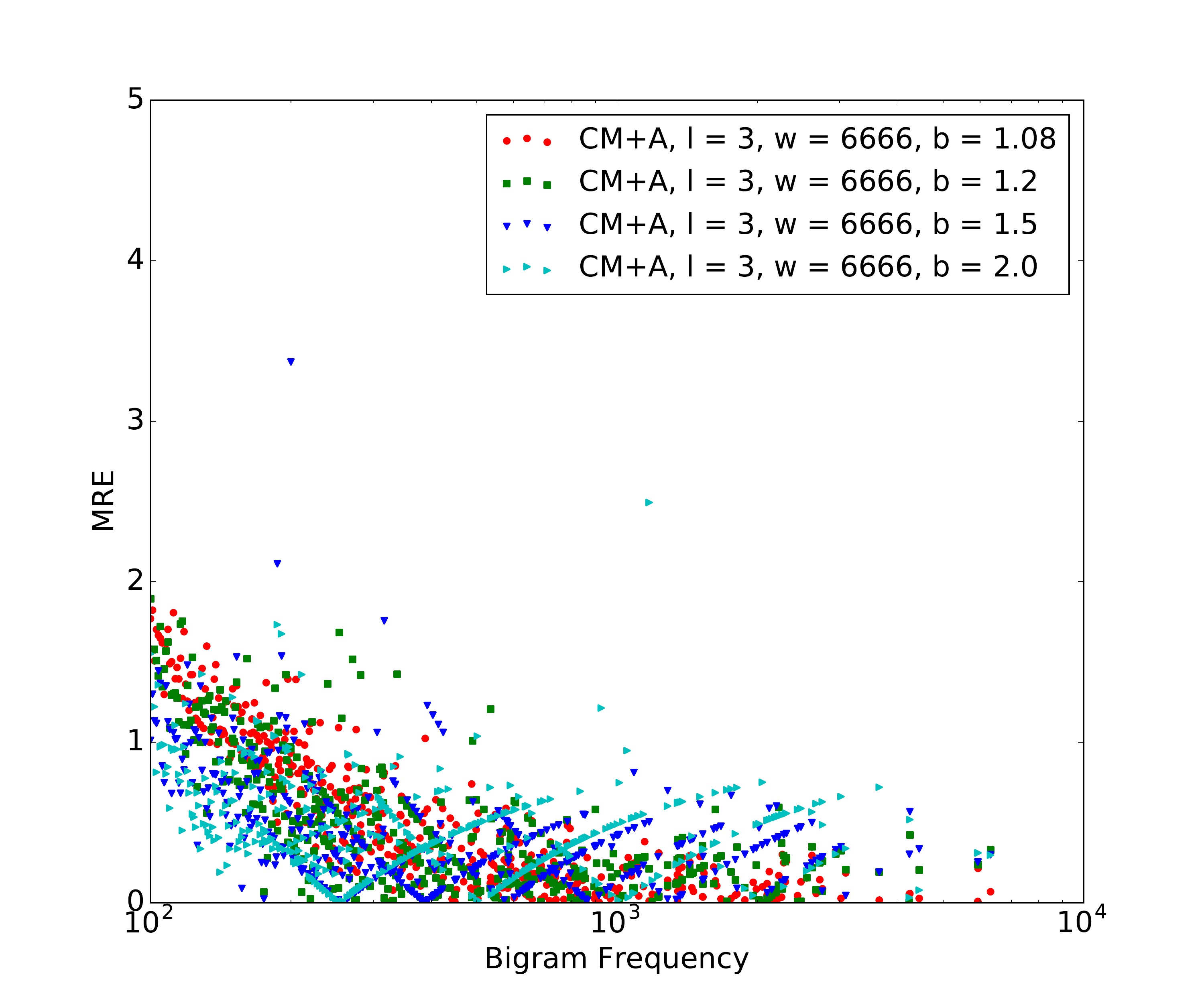}
  }
  \caption{Effects of different approximate counter base $b$ for bigram estimation benchmark using independent counters. The figure on the right is a zoomed in version of the same data.}
  \label{fig:base_normal}
\end{center}
\end{figure*}

\begin{figure*}
\begin{center}
  \subfigure[ \label{fig:base_cu1}]{
    \includegraphics[width=0.45\textwidth]{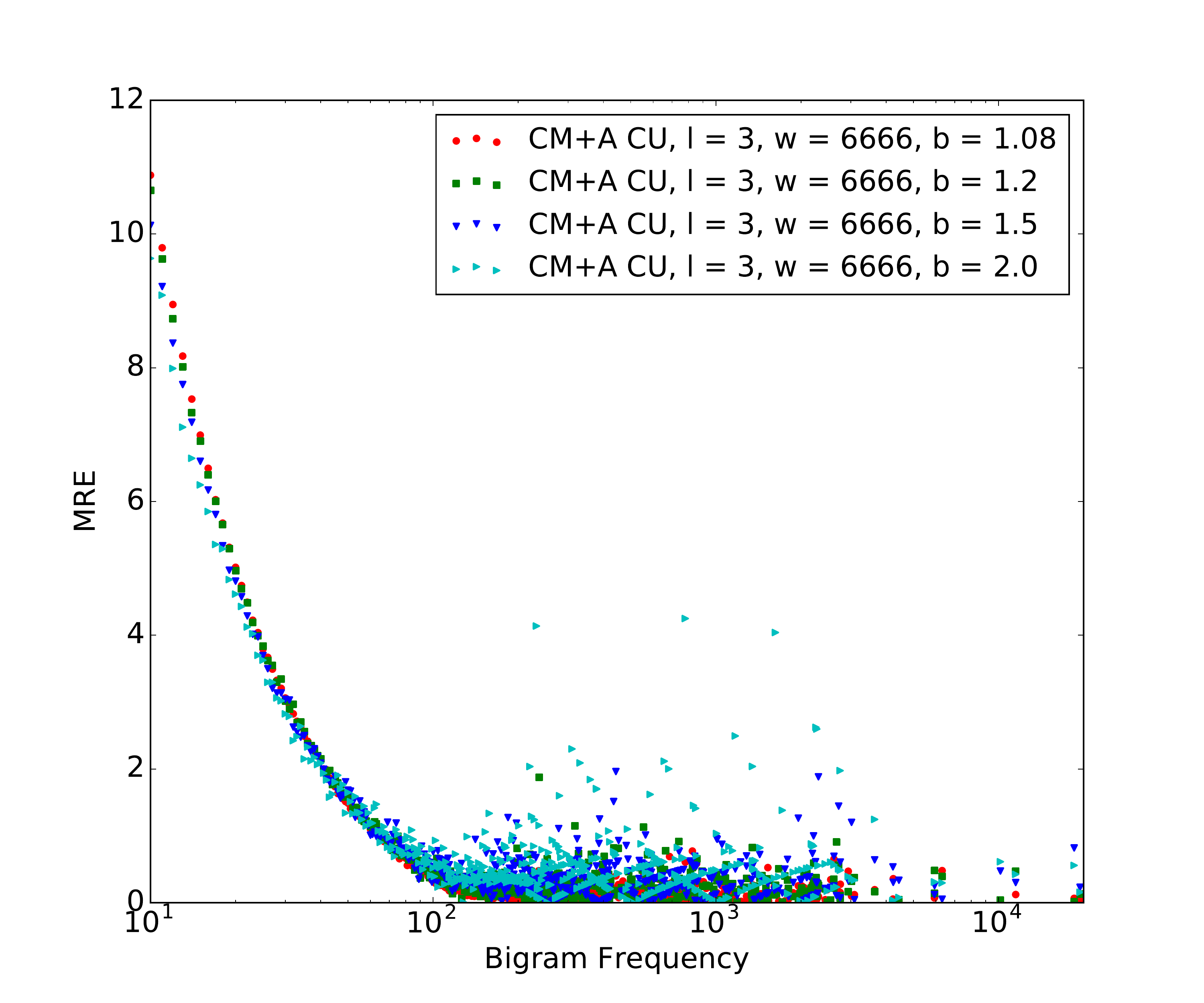}
   }
  \subfigure[ \label{fig:base_cu2}]{
    \includegraphics[width=0.45\textwidth]{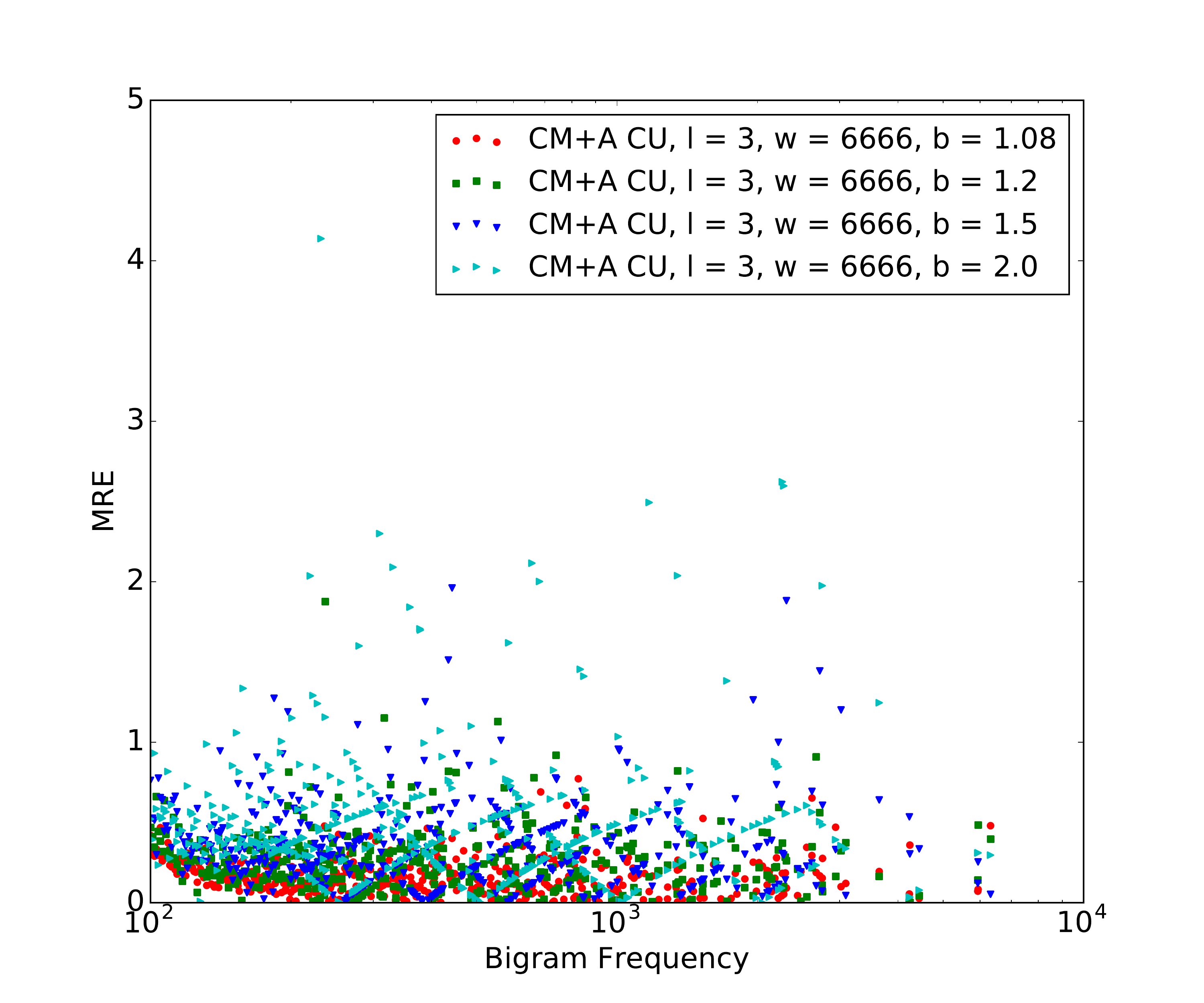}
  }
  \caption{Effects of different approximate counter base $b$ for bigram estimation benchmark using conservative update.}
  \label{fig:base_cu}
\end{center}
\end{figure*}

\figref{fig:streak} shows just the points for the $b=2$ conservative update case from these same experiments. There is a noticeable repeating ``crisscross'' pattern in the errors for the more frequent keys. This arises because the base is so large the counter can only represent counts of the form $2^k-1$, so the mean relative error for these keys is largely dependent on how close they are to a value that can be represented in this form.

\begin{figure*}
  \begin{center}
    \includegraphics[width=0.75\textwidth]{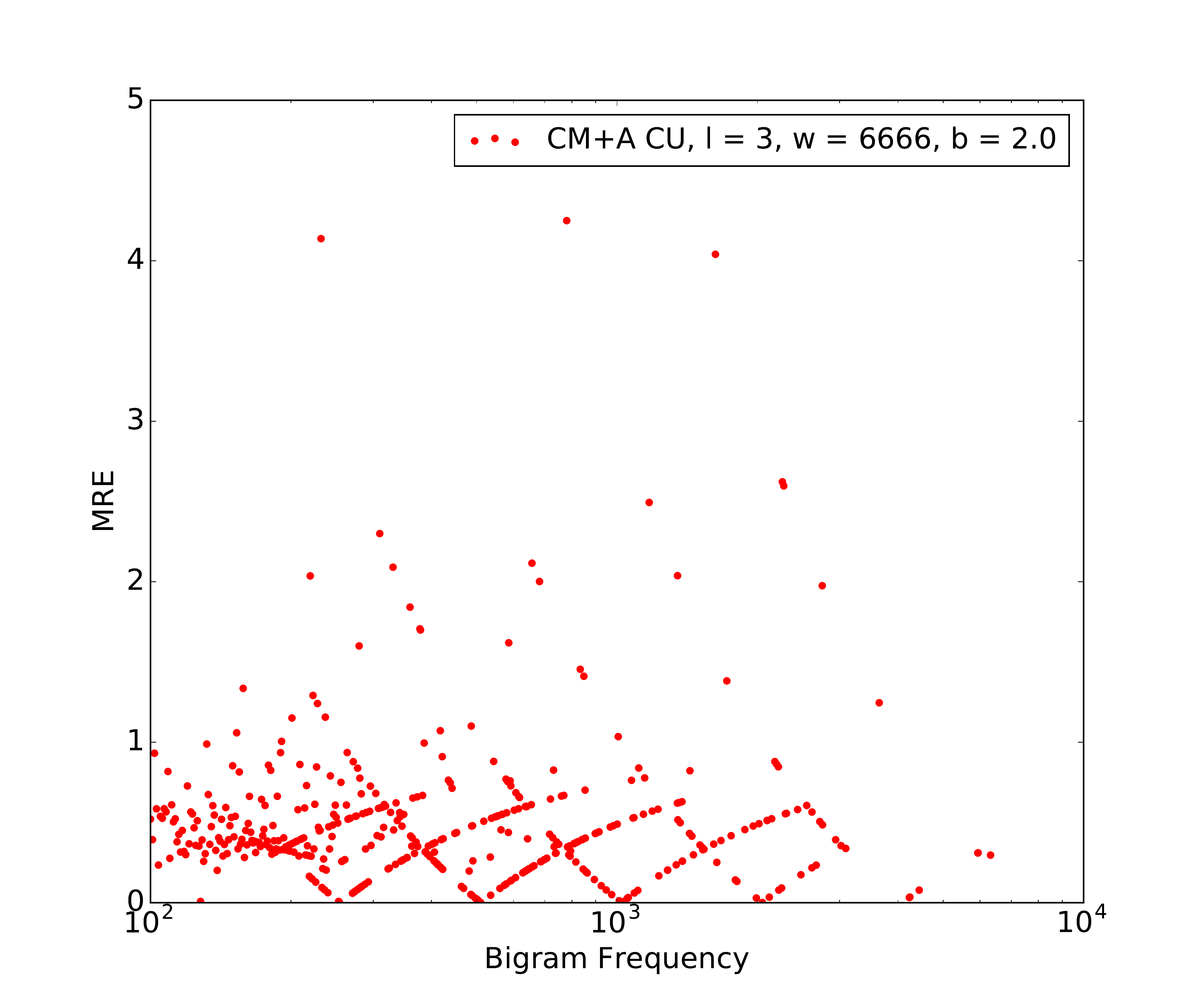}
  \end{center}
  \caption{Results showing crisscross pattern for large frequency key errors with large bases.}
  \label{fig:streak}
\end{figure*}

Finally, \figref{fig:splits} shows the error when merging
different sketches together. In these experiments, we split the corpus
into $k$ substreams of equal size, compute sketches on each, and then
merge the $k$ sketches together. We vary $k$ from $1$ to $4$, and use conservative update
when computing each of the subsketches. There does not appear 
to be a substantial difference between the errors when merging or not when using $b = 1.08$.

\begin{figure*}
\begin{center}
  \subfigure[ \label{fig:splits1}]{
    \includegraphics[width=0.45\textwidth]{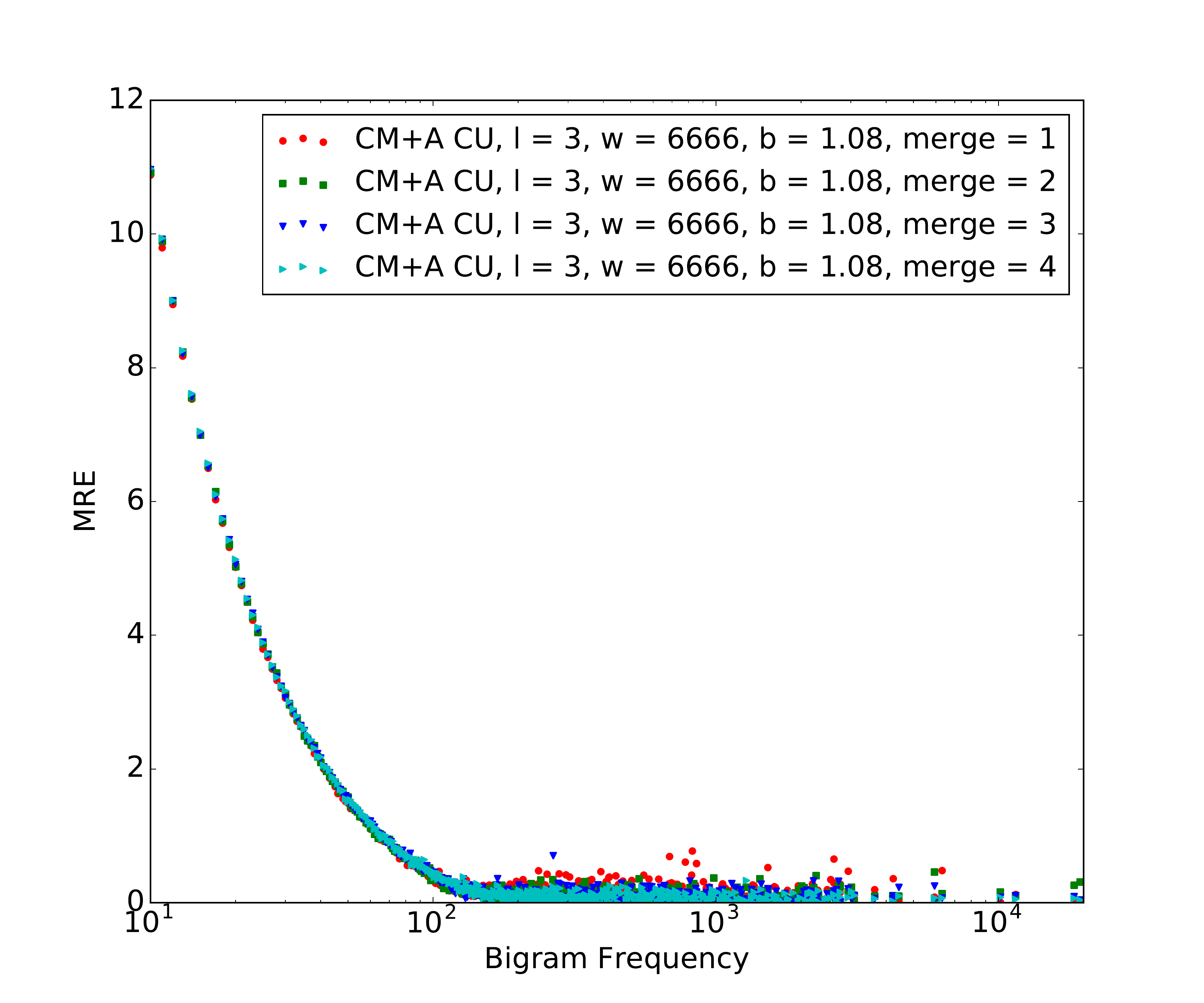}
   }
  \subfigure[ \label{fig:splits2}]{
    \includegraphics[width=0.45\textwidth]{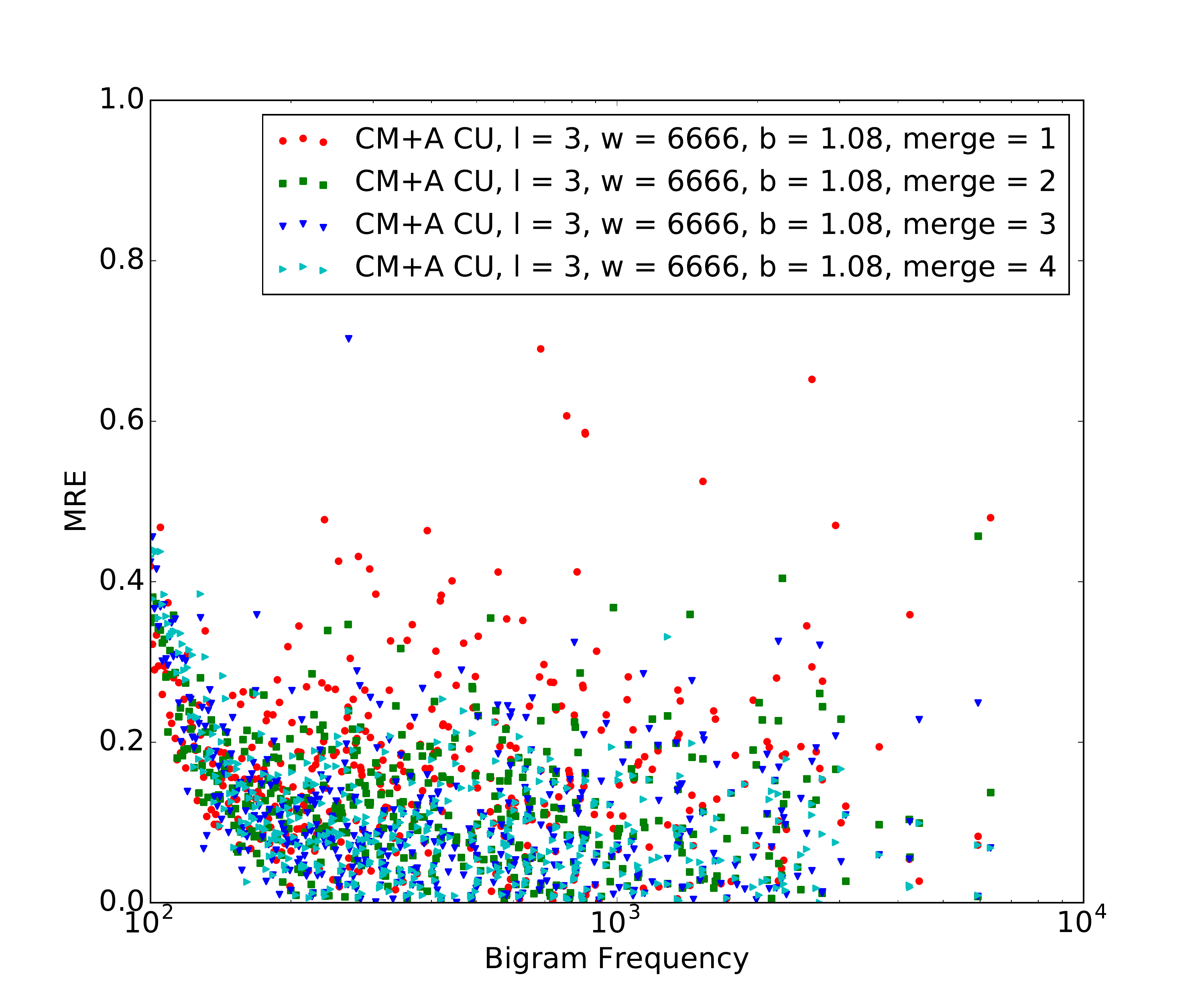}
  }
  \caption{Effects of merging multiple sketches together using the addition procedure of \citet{Adding}.}
  \label{fig:splits}
\end{center}
\end{figure*}

\section{Experiments}
\label{sec:apx-experiments}

In this section we present additional experiments, provide example
topics and report timing results for the various algorithms.

\subsection{Example Topics}
\label{sec:apx-topics}

Since it is possible for a model to have good perplexity, but yield
poor topics, we also manually inspect the top-words per topics to
ensure that the topics are reasonable in our LDA experiments.  In
general, we find no perceptible difference in quality between the
models that employ sketching and hashing representations of the
sufficient statistics and those that employ exact array-based
representations.  In Table~\ref{tab:topics} we provide example topics
from three different models, all of which use representations that
compress the sufficient statistics more than a traditional array of
4-byte integers.  The first two systems combine count-min sketch with
approximate counters while the third combines all three ideas:
count-min sketch, feature hashing and approximate counters.  As is
typical of LDA and other topic-models, not all topics are perfect and
some topics arise out of idiosyncracies of the data like a topic full
of dates from a certain century, but again, there did not appear to be
noticeable differences in quality between the systems.

\cut{
\begin{table}[t]
\begin{centering}
\begin{tabular}{|c|c|c|}
\hline
{\bf Topic A} & {\bf Topic B} & {\bf Topic C} \\
\hline
canada     & party & king \\
canadian  & election & family \\
hockey & elections & son \\
ontario & elected & father \\
toronto & democratic & married \\
ice & political & died \\
columbia & state & death \\
quebec & assembly & daughter \\
montreal & republican & brother \\
nhl & house & wife \\
\hline
\end{tabular}
\caption{Example topics and their top-10 most likely words for the
  final streaming system with CM sketch (3 18-bit hash functions),
  feature-hashing (23 bit hash), and approximate counters (8 bits, 1.08 base).}
\label{tab:topics}
\end{centering}
\end{table}
}

\cut{
\begin{table}
\centering
  \begin{tabular}[b]{|c|c|c|}
    \hline
    {\bf Topic A} & {\bf Topic B} & {\bf Topic C} \\
    \hline
    band     & law  & road \\
    album  & court & route \\
    released & act & bridge \\
    records & police & highway \\
    rock & case & north\\
    music & states & street \\
    live & united & east \\
    track & legal & west \\
    release & rights & south \\
    guitar & state & state \\
    \hline
  \end{tabular}
\caption{Example topics and their top-10 most likely words for the
  final streaming system with count-min sketch (3 18-bit hash functions),
  feature-hashing (22 bit hash), and approximate counters (8 bits, 1.08 base).}
\label{tab:topics}
\end{table}
}

\begin{table}
\centering
  \begin{tabular}[b]{|c|c|c|}
    \hline
    {\bf Topic A} & {\bf Topic B} & {\bf Topic C} \\
    \hline
    \multicolumn{3}{|c|}{CM+A 3 16 1.08} \\
    \hline
    space & episode & law \\
    light & series & court \\
    energy & season & act \\
    system & show & states \\
    earth & episodes & legal \\
    \hline
    \multicolumn{3}{|c|}{CM+A 3 15 1.04} \\
    \hline
    russian & john & man \\
    war & william & time \\
    government & died & story \\
    soviet & king & series \\
    union & henry & back \\
    \hline
    \multicolumn{3}{|c|}{CM+H+A 3 18 22 1.08} \\
    \hline
    system & band & court \\
    high & album & police \\
    power & released & case \\
    systems & rock & law \\
    device & records & prison \\
    \hline
  \end{tabular}
  \caption{Example topics. ``CM+A 3 16 1.08'' indicates CM-sketch
    (three 16-bit hashes) with approximate counters (8-bit base-1.08),
    ``CM+A 3 15 1.04'' indicates CM-sketch (three 15-bit hashes) with
    approximate counters (8-bit base-1.04), and ``CM+H+A 3 16 22
    1.08'' indicates CM-sketch (three 18-bit hashes) with feature hashing
    (22-bit) and approximate counters (8-bit base-1.08).  All three
    configurations are more compressive than the traditional array-based
    representation of the sufficient statistics that utilize 4-byte integers.}
\label{tab:topics}
\end{table}

\subsection{Timing results}
\label{sec:apx-timing}

\begin{table}
\centering
\begin{tabular}{|l|r|r|r|}
\hline
{\bf Method} &  {\bf \# hashes} & {\bf hash range} & {\bf time (s)} \\
\hline
SSCA & NA & NA & 12.14 $\pm$ 1.82\\
\hline
SSCAS & 3 & 15 & 22.75  $\pm$ 4.30\\ 
SSCAS & 3 & 16 & 23.90  $\pm$ 4.41\\
SSCAS & 3 & 17 & 25.32  $\pm$ 4.68\\
SSCAS & 3 & 18 & 28.10  $\pm$ 5.18\\
\hline
SSCAS & 4 & 15 & 29.70 $\pm$  5.82\\
SSCAS & 4 & 16 & 32.75  $\pm$ 6.17\\
SSCAS & 4 & 17 & 33.35  $\pm$ 5.89\\
SSCAS & 4 & 18 & 36.18  $\pm$ 5.97\\
\hline
SSCAS & 5 & 15 & 37.76  $\pm$ 6.95\\
SSCAS & 5 & 16 & 39.71  $\pm$ 7.01\\
SSCAS & 5 & 17 & 42.33  $\pm$ 7.75\\
SSCAS & 5 & 18 & 45.47 $\pm$  7.70\\
\hline
SSCAS & 6 & 15 & 45.04 $\pm$  8.51\\
SSCAS & 6 & 16 & 46.38 $\pm$  8.18\\
SSCAS & 6 & 17 & 48.70 $\pm$  8.21\\
SSCAS & 6 & 18 & 53.03 $\pm$  8.89\\
\hline
SSCAS & 7 & 15 & 51.66  $\pm$ 9.45\\
SSCAS & 7 & 16 & 53.35 $\pm$  9.38\\
SSCAS & 7 & 17 & 56.44 $\pm$  9.42\\
SSCAS & 7 & 18 & 61.15 $\pm$  9.93\\
\hline
\end{tabular}
\caption{Time per iteration results for LDA with CM sketch.}
\label{tab:time-cmsketch}
\end{table}

We report the timing results in this section.

\paragraph{CM sketch timing}

Although the CM sketch representation of the sufficient
statistics behaves well statistically, there are some computational
concerns.  In particular, the CM sketch stores multiple counts
per word (one per-hash function) and a distributed algorithm must then
communicate the extra counts over the network.  Therefore, to evaluate
the effect on run-time performance, we also report the average
per-iteration wall-clock time for each system in
Table~\ref{tab:time-cmsketch}. The method ``SSCA'' is the default
implementation of SCA the employs arrays to represent the sufficient
statistics and methods marked ``SSCAS'' employ the CM sketch.

As expected, the number of hash functions has a bigger impact on
running time than the range of the hash functions.  Fortunately, at
least empirically, the range of the hash function is more important
than the number of hash functions in that it has a greater effect on
inference's ability to achieve higher accuracy in fewer iterations.
Thus, in terms of both the number of iterations and the wall-clock
time, increasing the hash range is more beneficial than increasing the
number of hashes.

\paragraph{Timing with approximate counters}

Although using the CM sketch with these dimensions increases the running time as demonstrated
above, approximate counters effectively compensate for the increased
communication overhead because the corresponding sketches are much smaller for a given number of hash functions and range.  We report the results in
Table~\ref{tab:time-overcome}.  The first row, method ``SSCA'' is the
default implementation of SSCA using array representation of the
sufficient statistics and ``SSCASA'' is the version with sketching and
approximate counters (8 bits and base 1.08).

\begin{table}
\centering
\begin{tabular}{|l|r|r|r|}
\hline
{\bf Method} &  {\bf \# hashes} & {\bf hash range} & {\bf time (s)} \\
\hline
SSCA & NA & NA & 12.14 $\pm$ 1.82\\
\hline
SSCASA & 3 & 15 & 12.58 $\pm$ 2.00\\
SSCASA & 3 & 16 & 17.57 $\pm$ 2.78\\
SSCASA & 3 & 17  & 22.69 $\pm$ 3.72\\
SSCASA & 3 & 18 & 24.29 $\pm$ 3.93\\
\hline
SSCASA & 4 & 15  & 17.00 $\pm$ 2.82\\
SSCASA & 4 & 16 & 24.50 $\pm$ 4.18\\
SSCASA & 4 & 17 & 29.46 $\pm$ 4.88\\
SSCASA & 4 & 18  & 32.19 $\pm$ 5.30\\
\hline
SSCASA & 5 & 15  & 22.11 $\pm$ 3.84\\
SSCASA & 5 & 16 & 31.26 $\pm$ 5.54\\
SSCASA & 5 & 17 & 37.78 $\pm$ 6.60\\
SSCASA & 5 & 18 & 41.12 $\pm$ 6.74\\
\hline
\end{tabular}
\caption{Time per iteration results for LDA with CM sketch and approximate counters.}
\label{tab:time-overcome}
\end{table}

\subsection{Exploring more sketching and hashing parameters}
\label{sec:apx-all-params}

In this section, we report a wider range of settings to the parameters
of the CM sketch.  In particular, we report numbers for a
sketch with a range of just 15-bits to one with 18, while also varying
the number of hash functions from 3 to 7.  To make the plots more
readable, we depict curves for sketches with the same hash range in
the same color (for example, all sketches with a 15-bit range are
red).  We also depict curves for sketches with the same number of hash
functions with the same symbol (for example, sketches with three hash
functions are all marked with circles).

In Figure~\ref{fig:cmsketch-all} we report the results for just the
CM sketch and begin to reach the limits of our ability to push
the compressiveness of the sketch.  We see that the final perplexity
(after 60 iterations) is the same in all cases except for the most
compressive variant of the sketch (with 15 bits and 3 hash functions).
Since the vocabulary size is 291,561, this sketch is one-third of the
size as the raw array-based representation for representing word
counts per topic.  This seems to be the point at which we begin to see
worse final perplexity.

We can also see from this plot that the hash range (curves from the
same hash-range are in the same color, and curves with different
ranges are in different colors) has a bigger impact on initial
performance than the number of hash functions.  For example, if we
wanted to double the size of the sketch, it would be better to add an
extra bit to the range than to double the number of hash functions
from 3 to six (as seen by the perplexity gaps in the early iterations
between each of the hash ranges). This is in line with the earlier results of \citet{GoyalD11, GoyalDC12}.

In Figure~\ref{fig:cmsketch-ac-all} we repeat the same experiment, but
employ an 8-bit base-1.08 approximate counter to represent the counts
in the sketch (instead of the usual 4-byte integers).  We include up
to 5 hashes for this plot.  Note that despite using just one-quarter
the amount of memory to represent the sufficient statistics, the
results for these combined data-structures are similar to the
CM sketch alone.  Further, as noted earlier, the approximate
counters are much faster in a distributed setting since they overcome
the additional data that needs to be transmitted by the CM
sketch.  Thus, the combined data-structure is not only more
compressive than the CM sketch alone, but it also runs much
faster, achieving similar performance as the original algorithm
depending on the setting to its parameters.

Finally, as we mentioned in Section~\ref{Analysis}, combining the
CM sketch and the approximate counters is non-obvious due to
the way the min operation interacts with the counters.  We had
proposed and discussed several alternatives: CM sketch with
independent counters, counter min-sketch with correlated counters, and
CM sketch with correlated counters and the conservative update
rule to reduce the bias. We show the plots for these counters in
Figures~\ref{fig:combined_i},~\ref{fig:combined_s},~\ref{fig:combined_f}
respectively.  We also vary the base of the counters while keeping the
number of counter bits fixed at 8.  Each color represents a different
base (1.08, 1.09 and 1.10) to make it easier to interpret.  The main
takeaways from these plots is that the method in which you combine the
counters and min-sketch does not matter as much for an application
like LDA, which seems to be robust to the bias in the first two
methods. We note that in some cases, increasing the base of the counter appears to improve perplexity.
While we have not been able to find a satisfactory explanation for this phenomenon, 
previous work on the geometric aspects of topic modeling~\cite{Geometry1,Geometry2} has
highlighted the subtle interaction between the geometry of the topic simplex and perplexity.

\begin{figure*}[tb]
\begin{center}
  \subfigure[CM sketch only \label{fig:cmsketch-all}]{
    \includegraphics[width=0.35\textwidth]{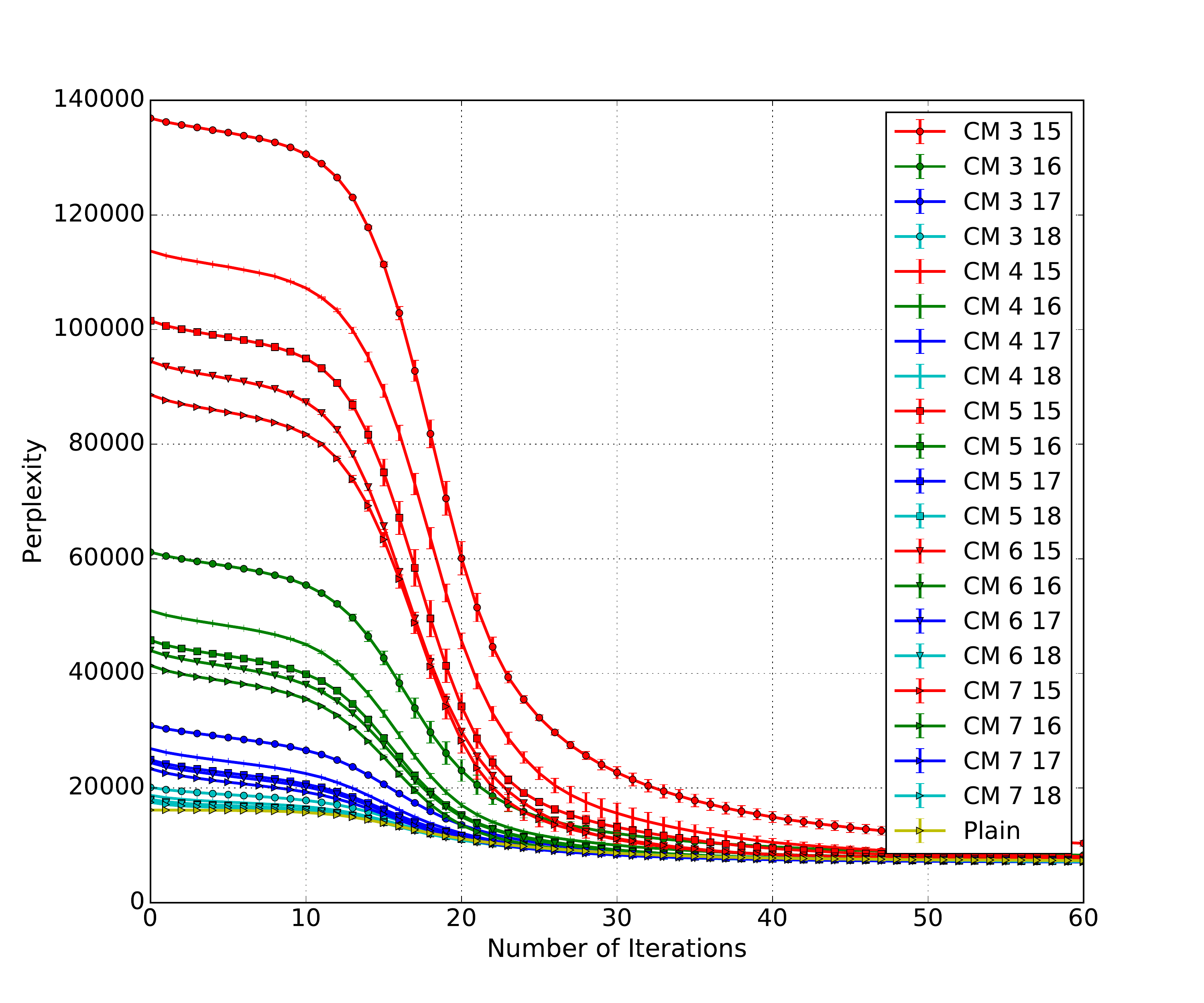}
   }
  \subfigure[sketch + approx. counters  \label{fig:cmsketch-ac-all}]{
    \includegraphics[width=0.35\textwidth]{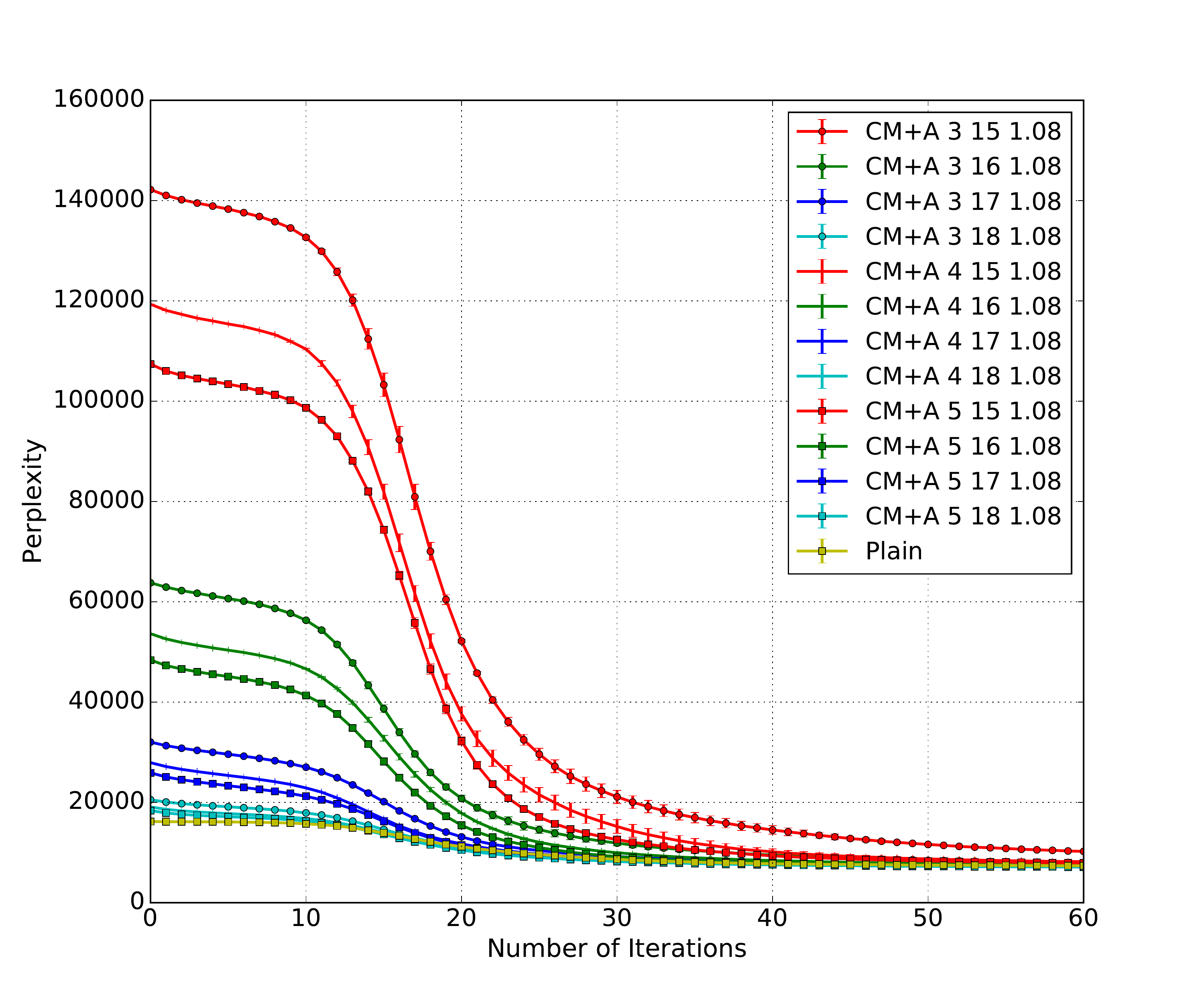}
  }
\caption{Experiments across a full range of parameter settings for the
sketch.}
\label{fig:all-sketch-params}
\end{center}
\end{figure*}

\begin{figure*}[tb]
\begin{center}
  \subfigure[15 bits \label{fig:combo_i_15}]{
    \includegraphics[width=0.22\textwidth]{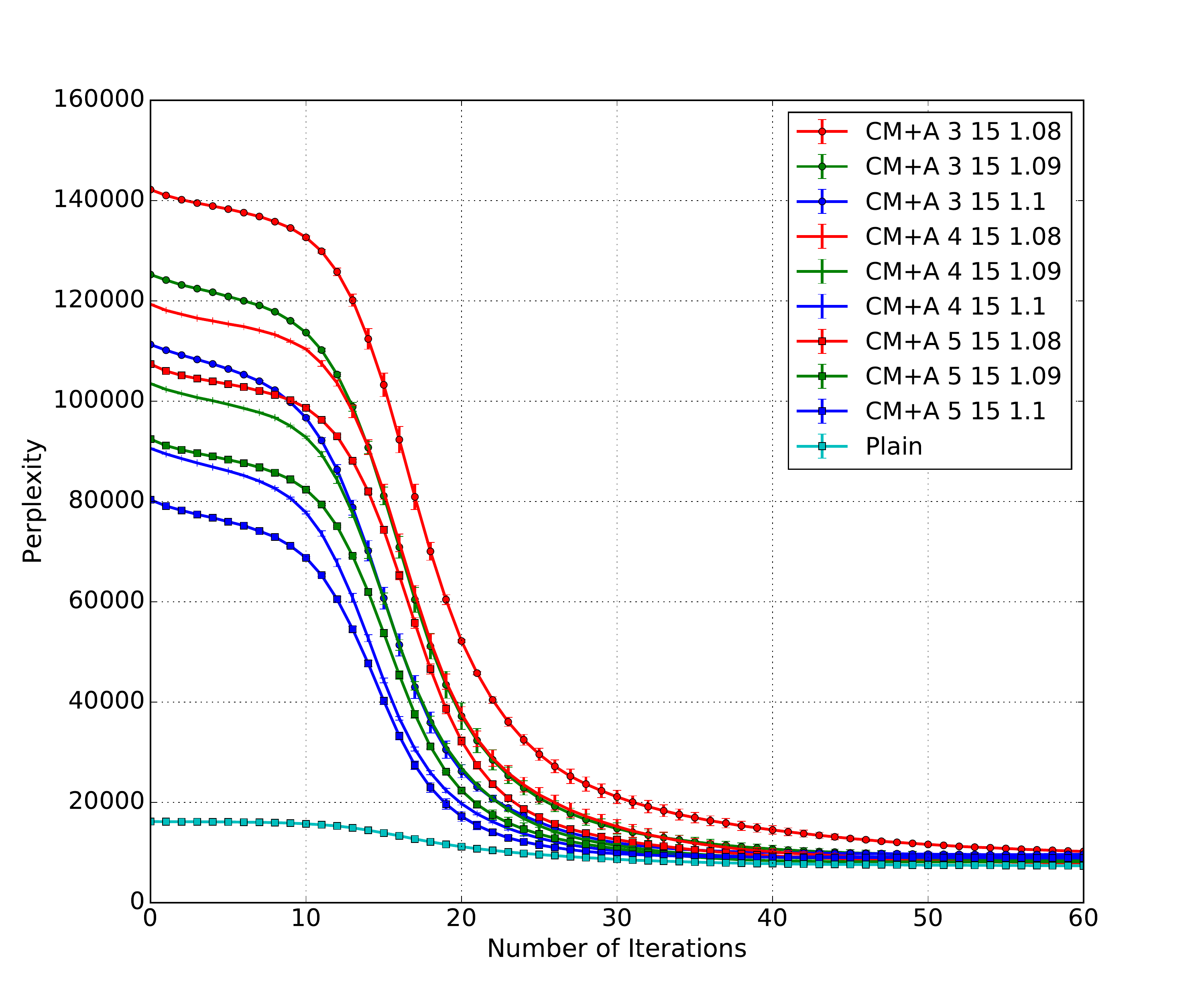}
   }
  \subfigure[16 bits \label{fig:combo_i_16}]{
    \includegraphics[width=0.22\textwidth]{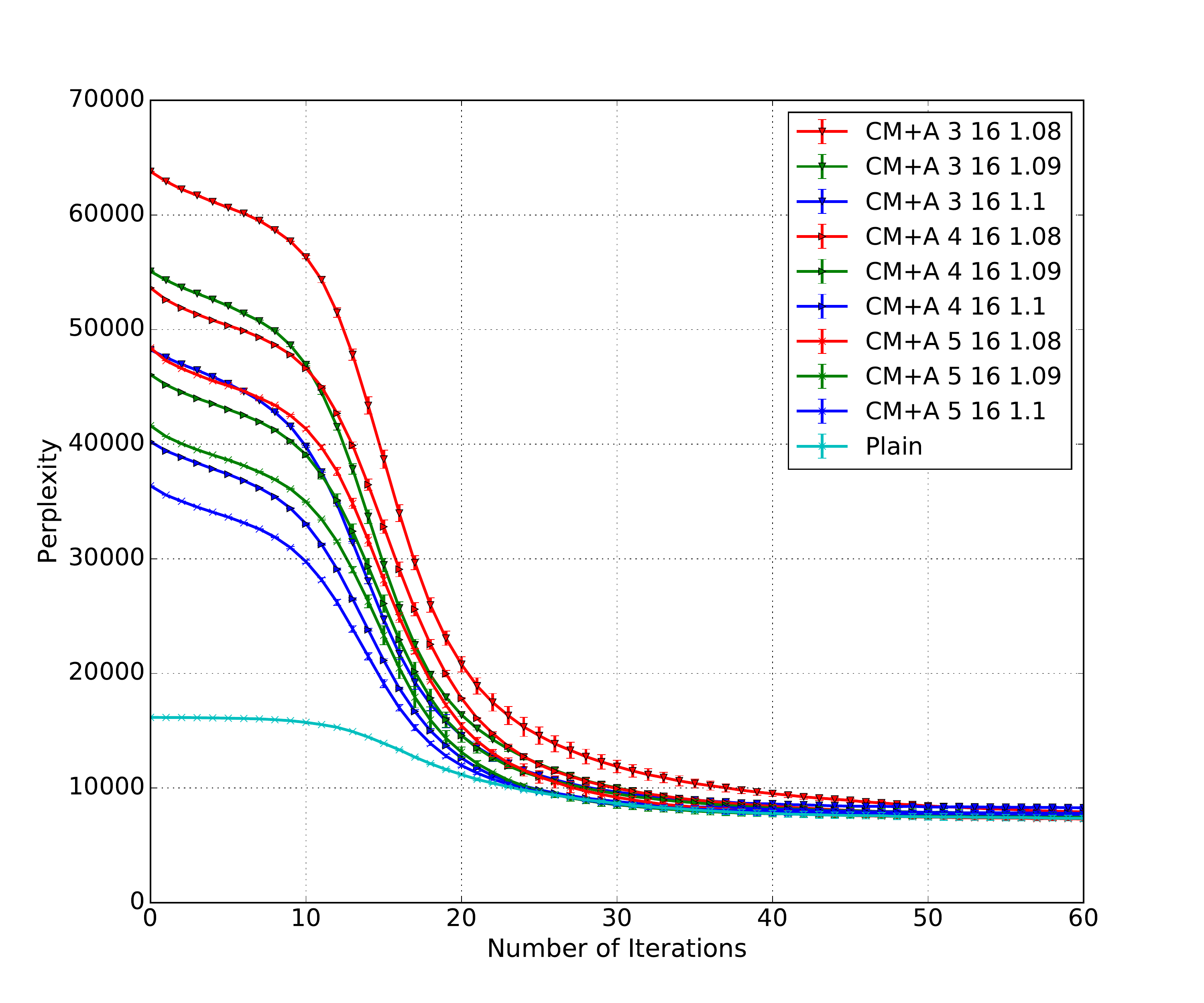}
  }
  \subfigure[17 bits \label{fig:combo_i_17}]{
    \includegraphics[width=0.22\textwidth]{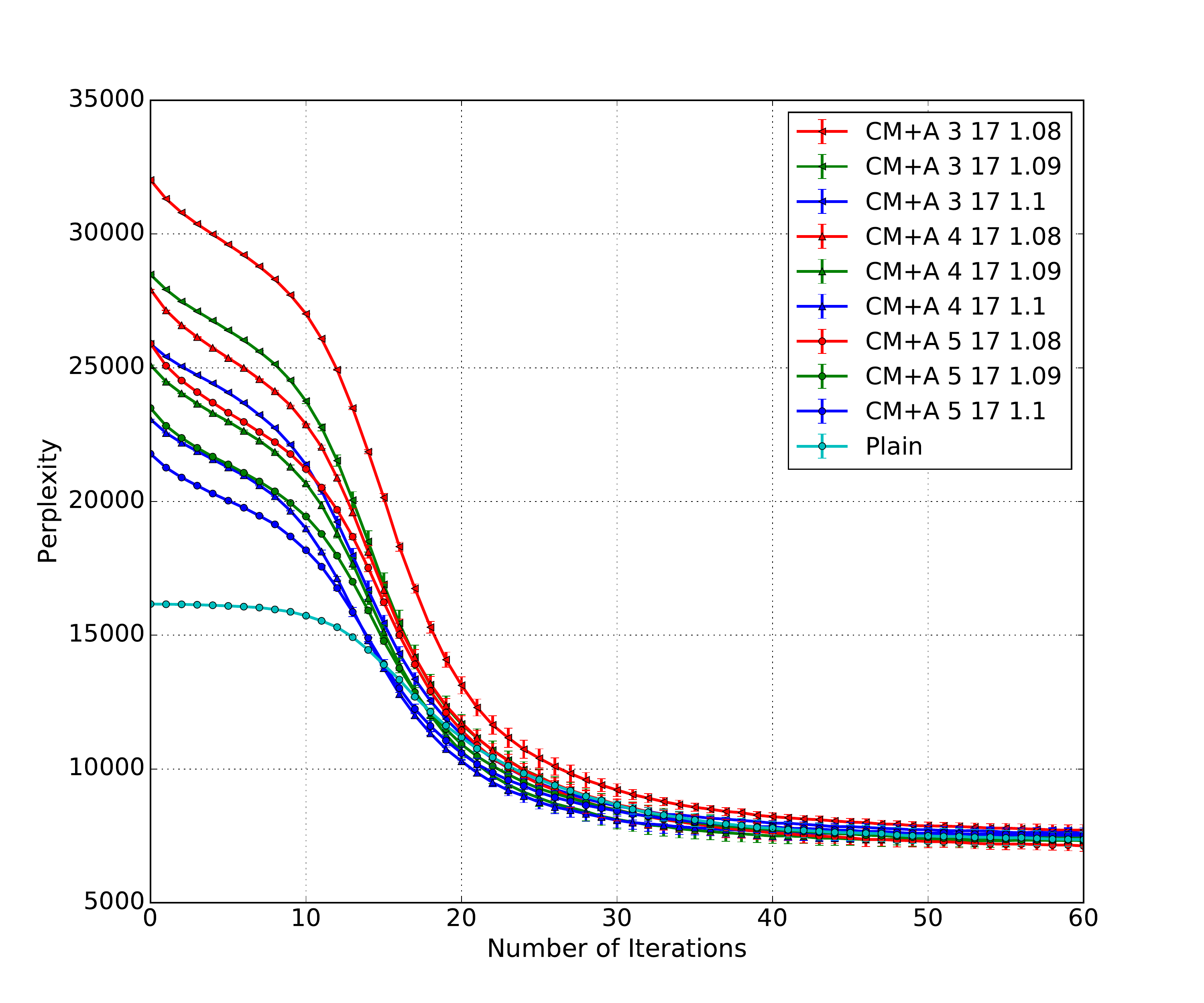}
  }
  \subfigure[18 bits  \label{fig:combo_i_18}]{
    \includegraphics[width=0.22\textwidth]{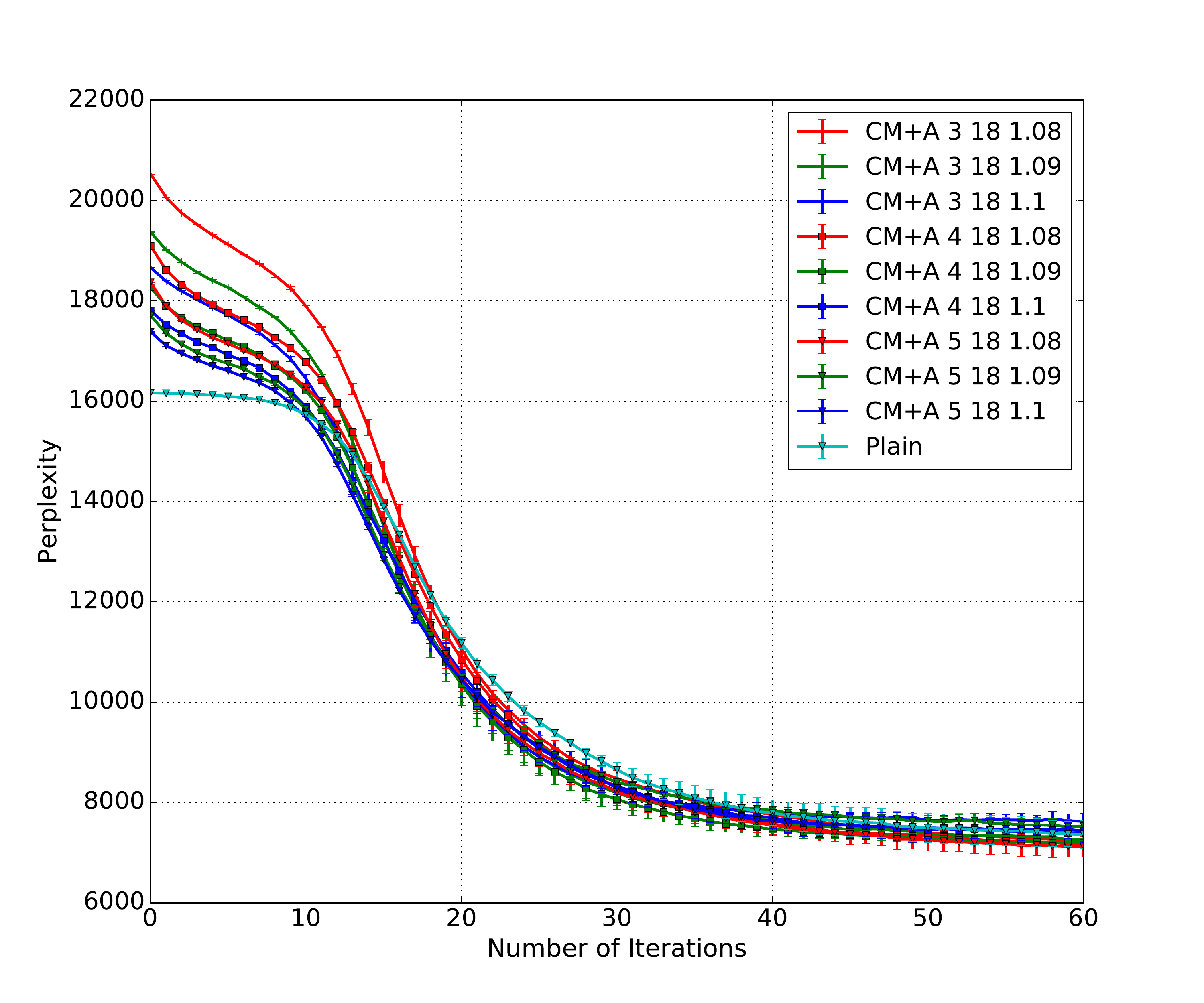}
  }
\caption{{\em Independent counter} variant of the combined CM sketch
  and approximate counter data-structure on LDA  with a hash range of 15,16,17 and 18 as
  indicated in the respective captions.}
\label{fig:combined_i}
\end{center}
\end{figure*}

\begin{figure*}[tb]
\begin{center}
  \subfigure[15 bits \label{fig:combo_s_15}]{
    \includegraphics[width=0.22\textwidth]{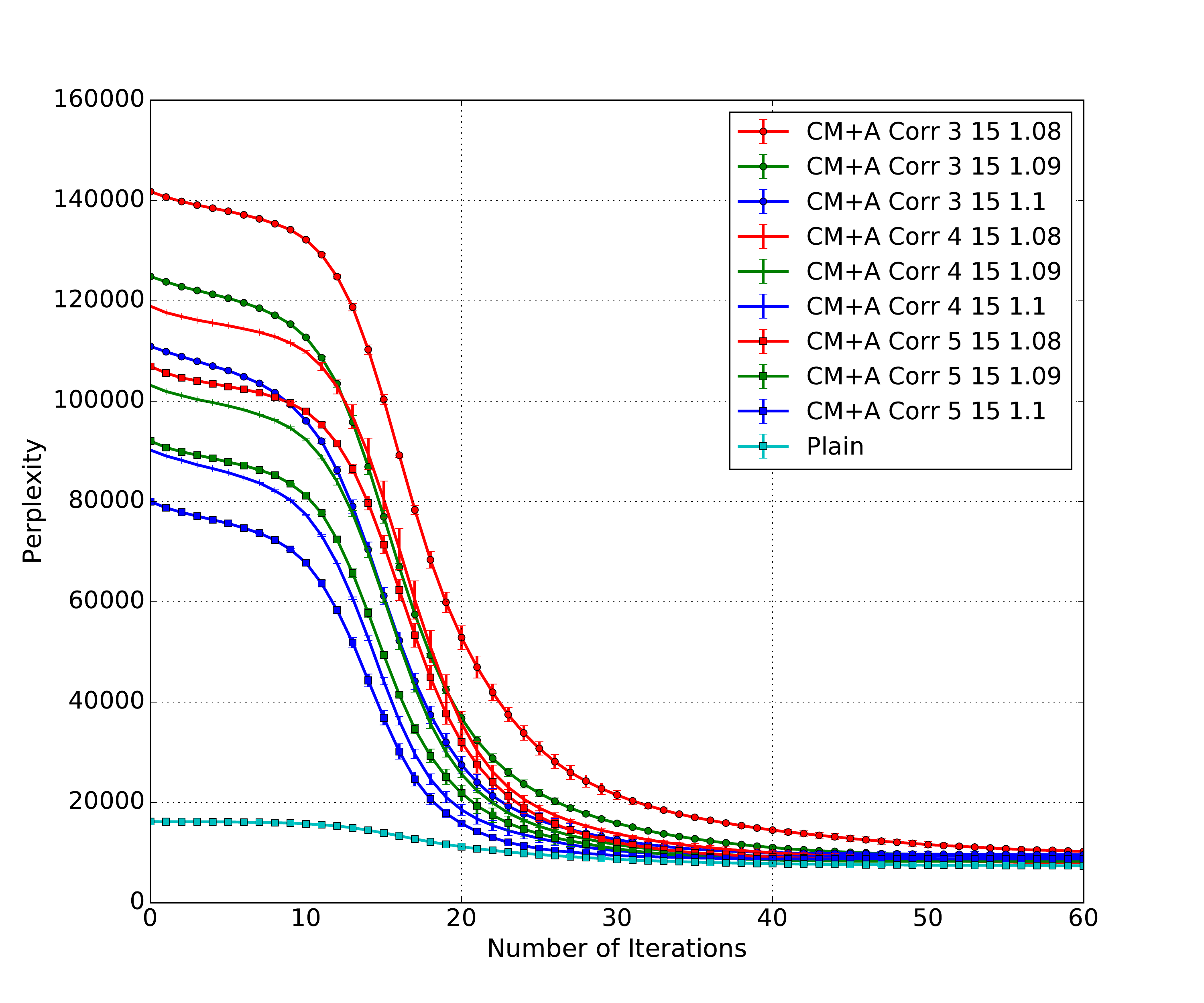}
   }
  \subfigure[16 bits \label{fig:combo_s_16}]{
    \includegraphics[width=0.22\textwidth]{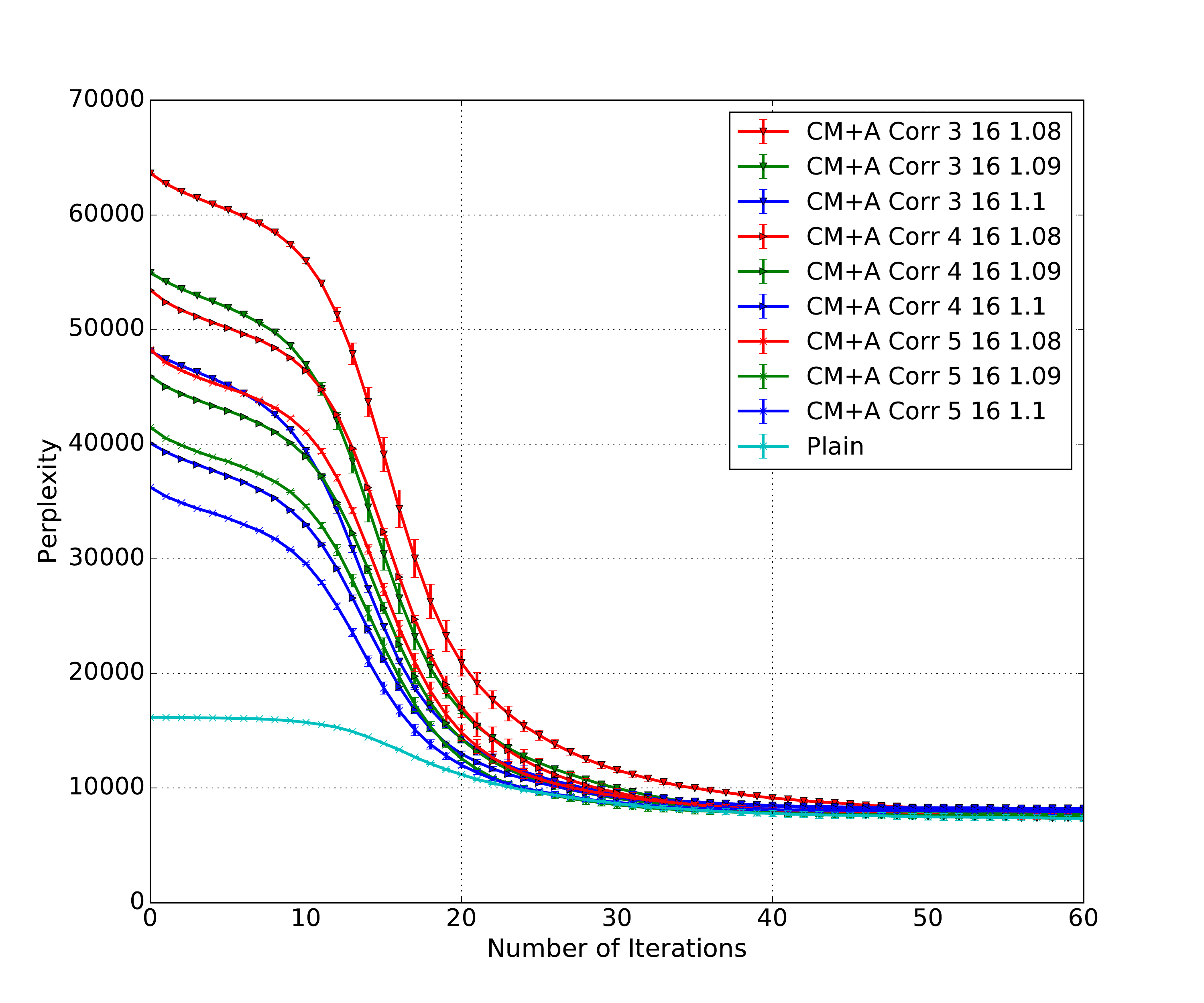}
  }
  \subfigure[17 bits \label{fig:combo_s_17}]{
    \includegraphics[width=0.22\textwidth]{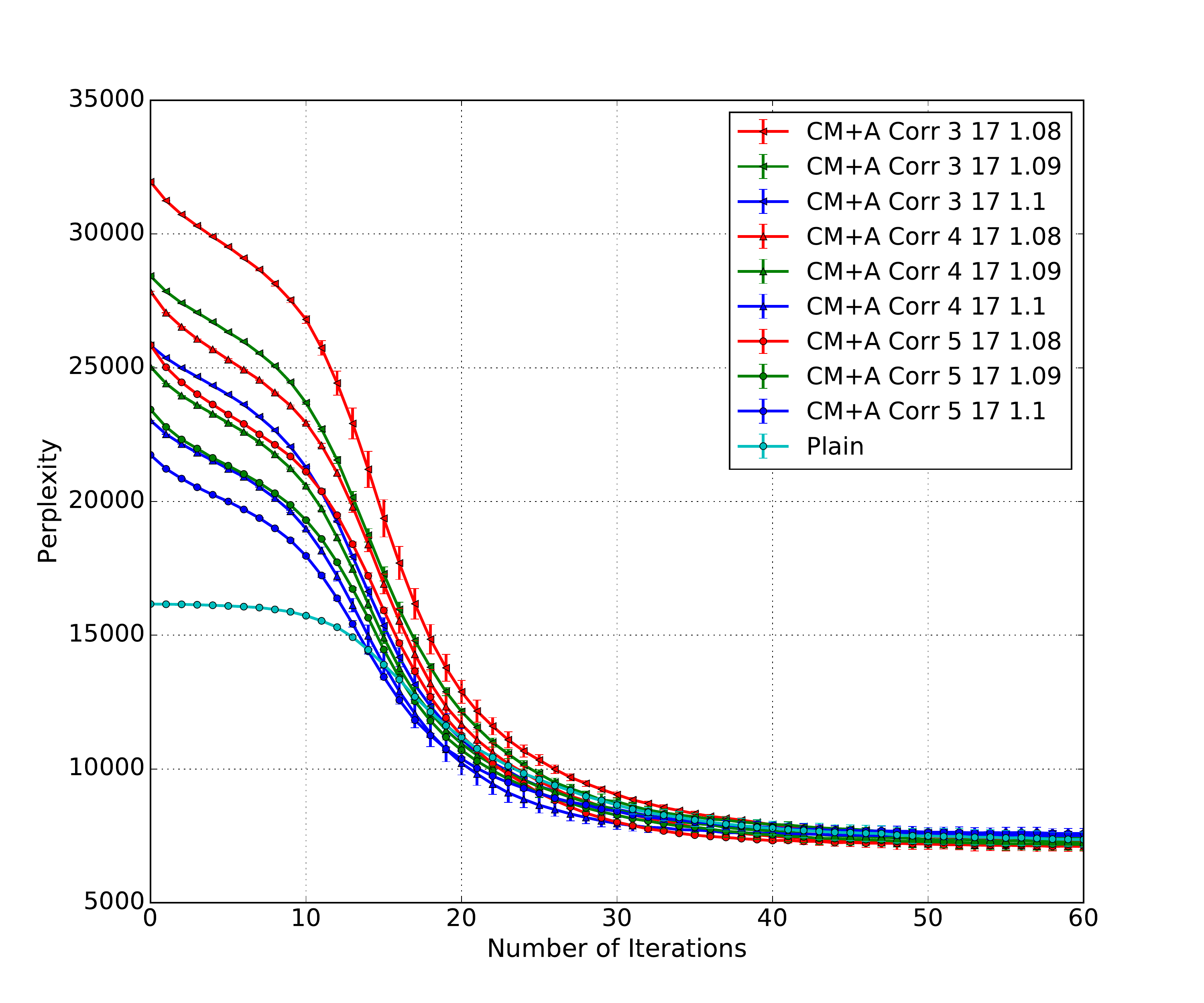}
  }
  \subfigure[18 bits  \label{fig:combo_s_18}]{
    \includegraphics[width=0.22\textwidth]{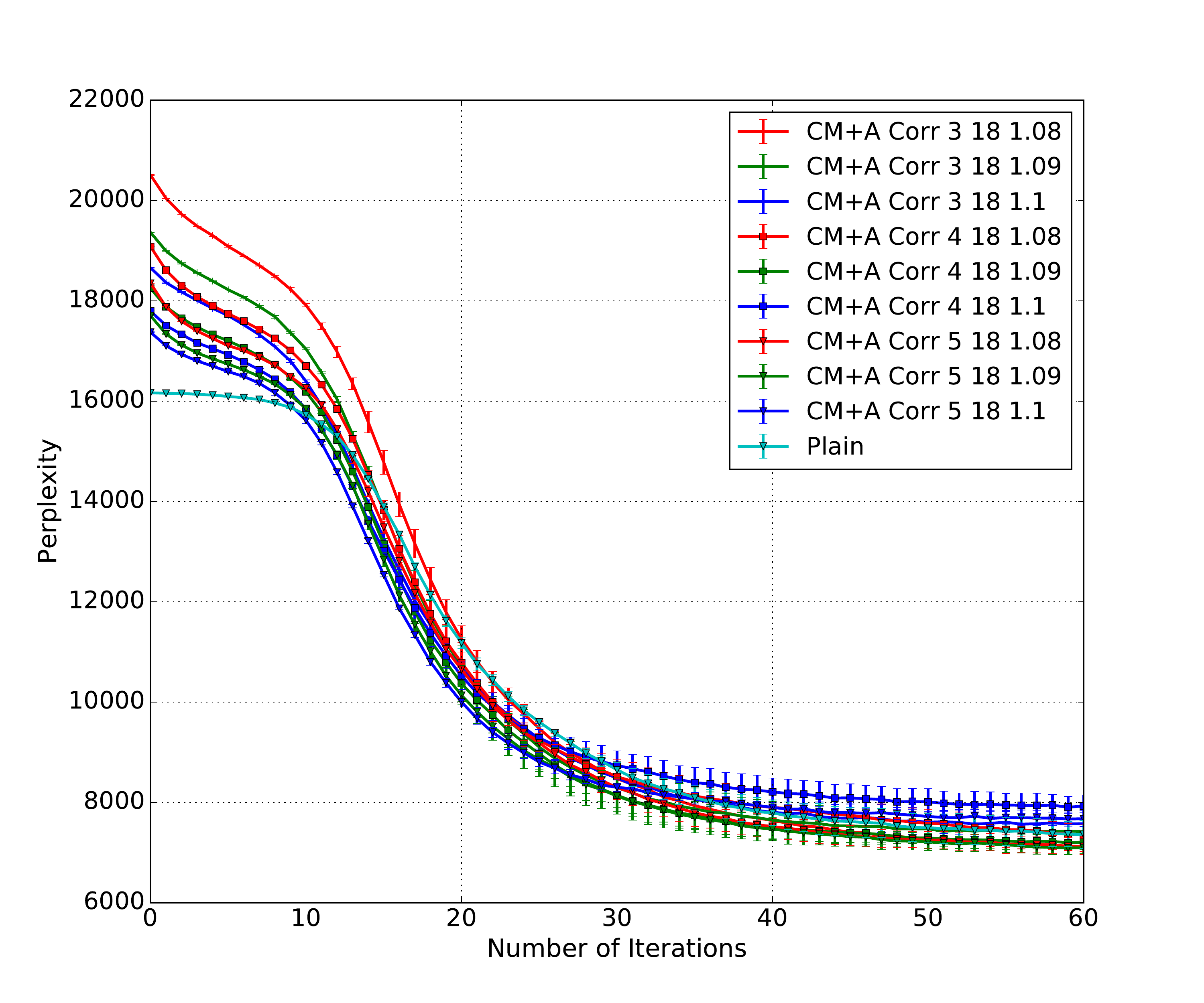}
  }
\caption{{\em Correlated counter} variant of the combined CM sketch
  and approximate counter data-structure on LDA  with a hash range of 15,16,17 and 18 as
  indicated in the respective captions.}
\label{fig:combined_s}
\end{center}
\end{figure*}

\begin{figure*}[tb]
\begin{center}
  \subfigure[15 bits \label{fig:combo_f_15}]{
    \includegraphics[width=0.22\textwidth]{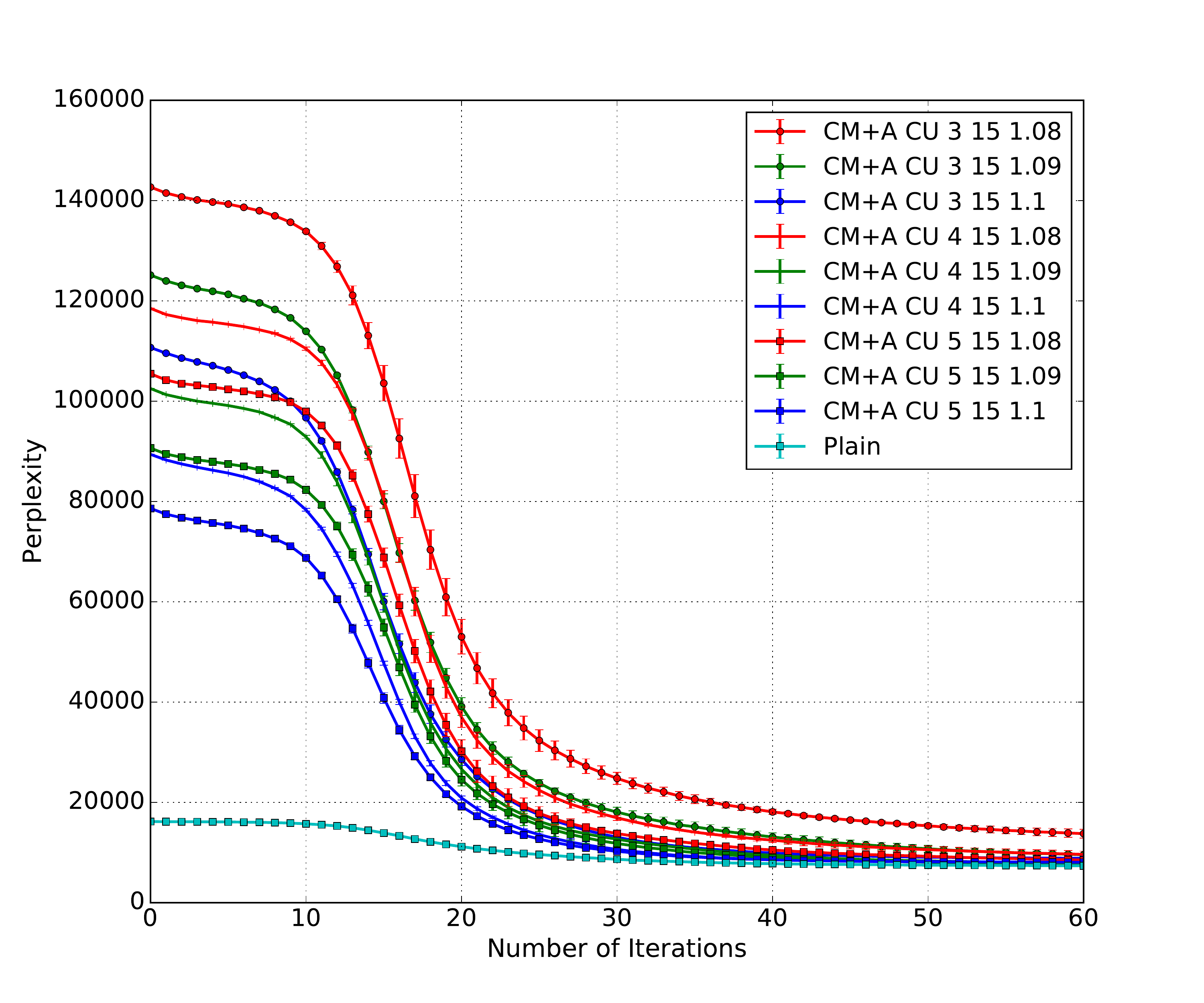}
   }
  \subfigure[16 bits \label{fig:combo_f_16}]{
    \includegraphics[width=0.22\textwidth]{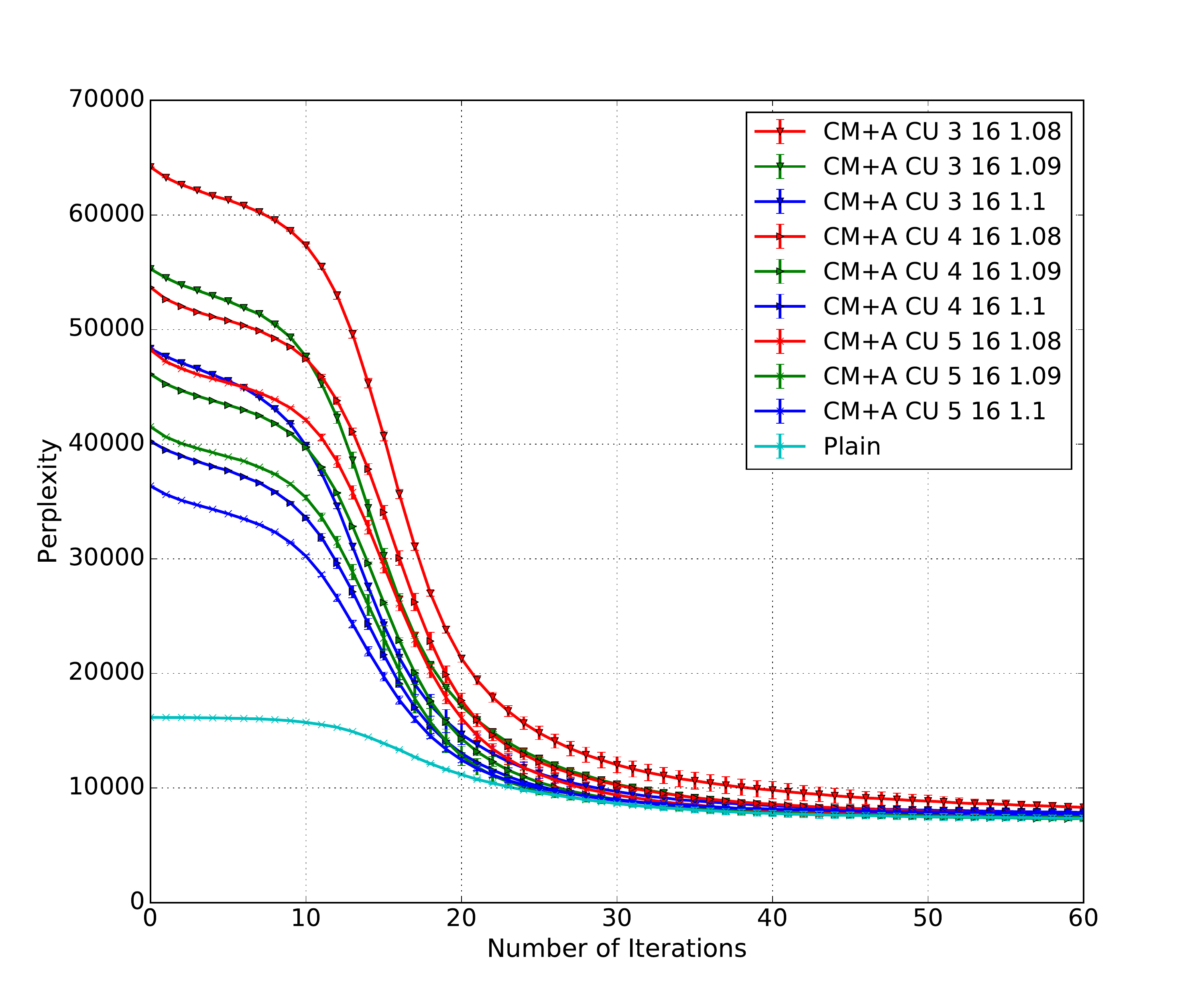}
  }
  \subfigure[17 bits \label{fig:combo_f_17}]{
    \includegraphics[width=0.22\textwidth]{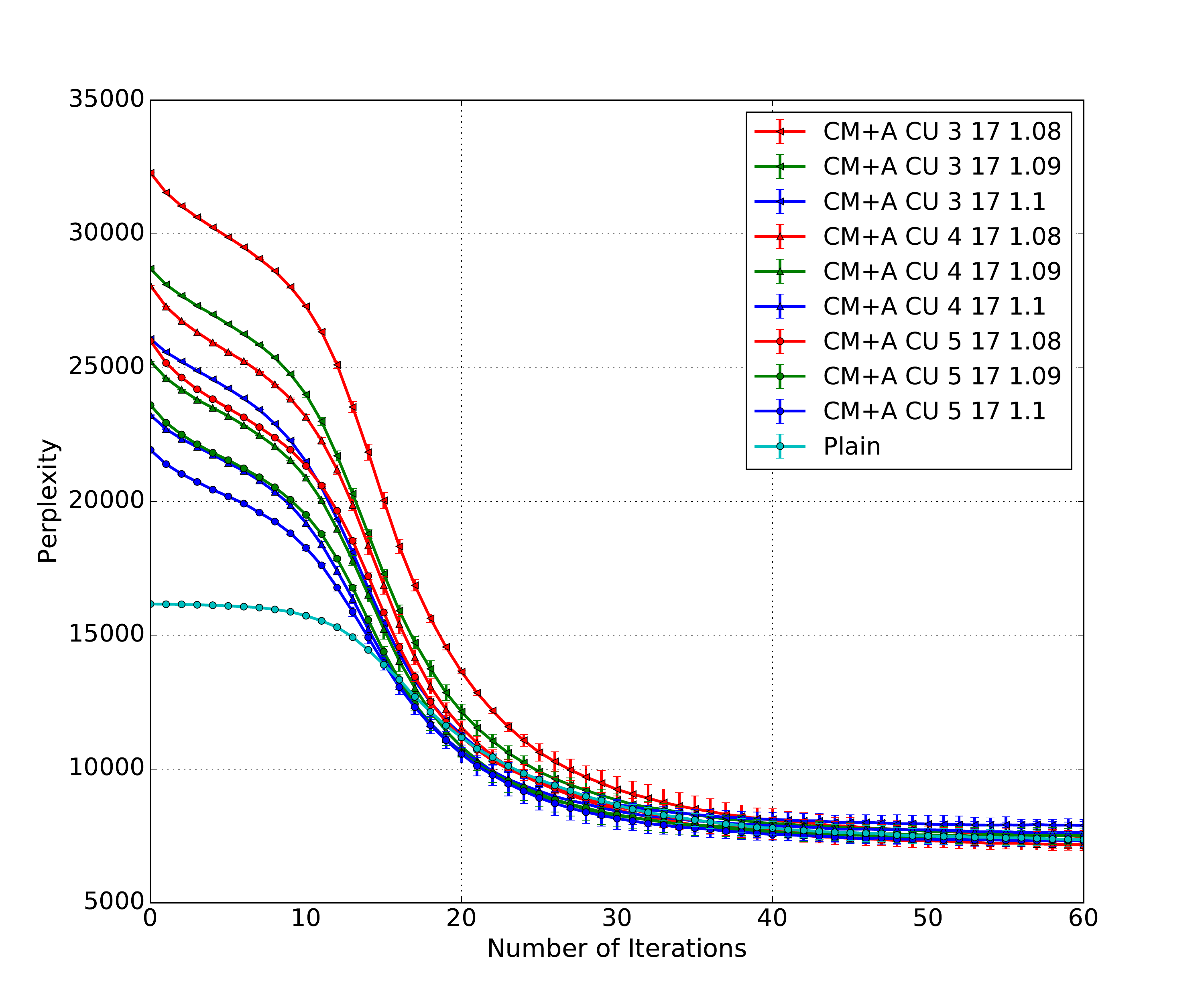}
  }
  \subfigure[18 bits  \label{fig:combo_f_18}]{
    \includegraphics[width=0.22\textwidth]{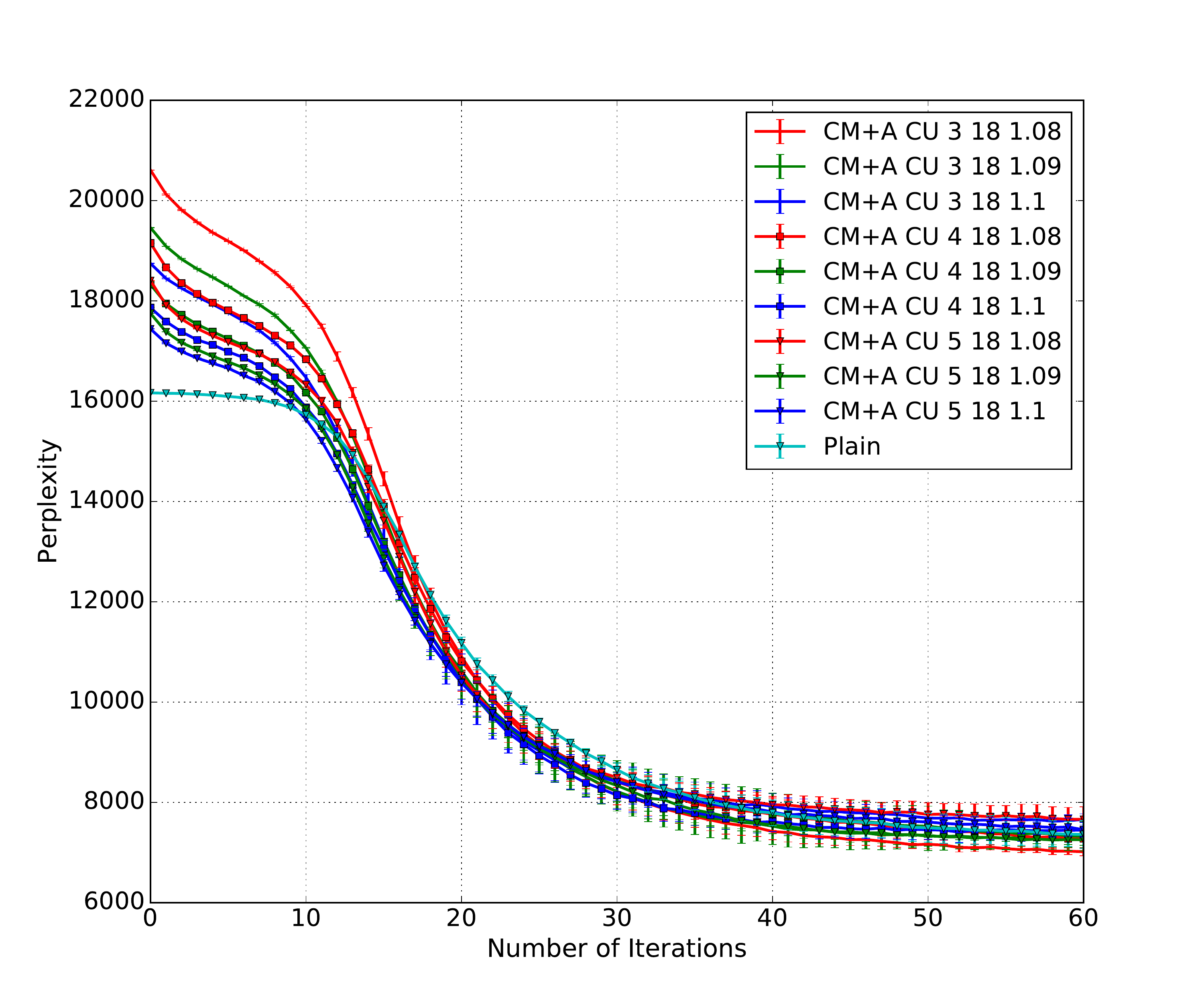}
  }
\caption{{\em Correlated counter + conservative update} variant of the combined CM sketch
  and approximate counter data-structure on LDA  with a hash range of 15,16,17 and 18 as
  indicated in the respective captions.}
\label{fig:combined_f}
\end{center}
\end{figure*}
 
\end{document}